\newsavebox{\theorembox}
\newsavebox{\factbox}
\newsavebox{\lemmabox}
\newsavebox{\corollarybox}
\newsavebox{\propositionbox}
\newsavebox{\examplebox}
\newsavebox{\conjecturebox}
\newsavebox{\algbox}
\newsavebox{\qbox}
\newsavebox{\problembox}
\newsavebox{\remarkbox}
\newsavebox{\definitionbox}
\newsavebox{\assumptionbox}
\newsavebox{\hypothesisbox}
\savebox{\theorembox}{\noindent\bf Theorem}
\savebox{\factbox}{\noindent\bf Fact}
\savebox{\lemmabox}{\noindent\bf Lemma}
\savebox{\corollarybox}{\noindent\bf Corollary}
\savebox{\propositionbox}{\noindent\bf Proposition}
\savebox{\examplebox}{\noindent\bf Example}
\savebox{\conjecturebox}{\noindent\bf Conjecture}
\savebox{\algbox}{\noindent\bf Algorithm}
\savebox{\qbox}{\noindent\bf Question}
\savebox{\definitionbox}{\noindent\bf Definition}
\savebox{\problembox}{\noindent\bf Problem}
\savebox{\assumptionbox}{\noindent\bf Assumption}
\savebox{\hypothesisbox}{\noindent\bf Hypothesis}
\savebox{\remarkbox}{\noindent\bf Remark}
\newtheorem{theorem}{\usebox{\theorembox}}
\newtheorem{lemma}[theorem]{\usebox{\lemmabox}}
\newtheorem{proposition}[theorem]{\usebox{\propositionbox}}
\newtheorem{question}{\usebox{\qbox}}
\newtheorem{hypothesis}{\usebox{\hypothesisbox}}
\newtheorem{definition}{\usebox{\definitionbox}}
\newtheorem{remark}{\usebox{\remarkbox}}
\newtheorem{fact}{\usebox{\factbox}}
\newcommand{\tS}{\tilde{S}}
\newcommand{\bx}{\mathbf{x}}
\newcommand{\bw}{\mathbf{w}}
\newcommand{\bz}{\mathbf{z}}
\newcommand{\ba}{\mathbf{a}}
\newcommand{\bA}{\mathbf{A}}
\newcommand{\bs}{\mathbf{s}}
\newcommand{\cS}{\mathcal{S}}
\newcommand{\hS}{\hat{S}}
\newcommand{\hbw}{\hat{\bw}}
\newcommand{\bu}{\mathbf{u}}
\newcommand{\ty}{\tilde{y}}
\newcommand{\tm}{\tilde{m}}
\newcommand{\tbx}{\tilde{\bx}}
\newcommand{\tx}{\tilde{x}}
\newcommand{\tn}{\tilde{n}}
\newcommand{\cD}{\mathcal{D}}
\newcommand{\poly}{\text{poly}}
\newcommand{\cC}{\mathcal{C}}
\newcommand{\cX}{\mathcal{X}}
\newcommand{\cY}{\mathcal{Y}}
\newcommand{\cF}{\mathcal{F}}
\newcommand{\cR}{\mathcal{R}}
\newcommand{\cL}{\mathcal{L}}
\newcommand{\cW}{\mathcal{W}}
\newcommand{\cA}{\mathcal{A}}
\newcommand{\hbx}{\hat{\bx}}
\newcommand{\hy}{\hat{y}}
\newcommand{\hm}{\hat{m}}
\newcommand{\hn}{\hat{n}}
\newcommand{\R}{\mathbb{R}}
\newcommand{\tbw}{\tilde{\bw}}
\newcommand{\relu}{\textsc{relu}}
\newcommand{\E}{\mathbb{E}}
\newcommand{\N}{\mathbb{N}}
\newcommand{\be}{\mathbf{e}}
\newcommand{\mmcs}{\textsc{MMCS}}
\newcommand{\cB}{\mathcal{B}}
\newcommand{\cT}{\mathcal{T}}
\newcommand{\card}{\text{card}}
\newcommand{\bzero}{\mathbf{0}}
\newcommand{\bone}{\mathbf{1}}
\newcommand{\edge}{\text{edge}}
\newcommand{\kap}{\kappa}
\newcommand{\eps}{\epsilon}
\renewcommand{\varepsilon}{\eps}
\DeclareMathOperator{\opt}{OPT}
\DeclareMathOperator{\den}{den}
\DeclareMathOperator{\sgn}{sgn}
\newcommand{\optmmcs}{\opt_{\mmcs}}
\title{Tight Hardness Results for Training Depth-2 ReLU Networks\footnote{This work subsumes our earlier manuscript~\cite{MR18}.}}
\author{Surbhi Goel\thanks{Microsoft Research NYC. Email: \texttt{goel.surbhi@microsoft.com}. Work was done while the author was a PhD student at UT Austin and was supported by the JP Morgan AI Research PhD Fellowship.}
\and Adam Klivans\thanks{UT Austin. Email: \texttt{klivans@cs.utexas.edu}. Supported by NSF awards AF-1909204, AF-1717896, and the NSF AI Institute for Foundations of Machine Learning (IFML). Work done while visiting the Institute for Advanced Study, Princeton, NJ.}
\and Pasin Manurangsi\thanks{Google Research. Email: \texttt{pasin@google.com}. Part of this work was done while the author was at UC Berkeley and was partially supported by NSF under Grants No. CCF 1655215 and CCF 1815434.}
\and Daniel Reichman\thanks{WPI. Email: \texttt{daniel.reichman@gmail.com}.}}
\begin{document}

\maketitle
\begin{abstract}

We prove several hardness results for training depth-2 neural networks with the ReLU activation function; these networks are simply weighted sums (that may include negative coefficients) of ReLUs. 
Our goal is to output a depth-2 neural network that minimizes the square loss with respect to a given training set.
We prove that this problem is NP-hard already for a network with a single ReLU.  We also prove NP-hardness for outputting a weighted sum of $k$ ReLUs minimizing the squared error (for $k>1$) even in the realizable setting (i.e., when the labels are consistent with an unknown depth-2 ReLU network).
We are also able to obtain lower bounds on the running time
in terms of the desired additive error $\epsilon$. To obtain our lower bounds, we use the Gap Exponential Time Hypothesis (Gap-ETH) as well as a new hypothesis regarding the hardness of approximating the well known Densest $\kap$-Subgraph problem in subexponential time (these hypotheses are used separately in proving different lower bounds).
For example, we prove that under reasonable hardness assumptions, any {\em proper} learning algorithm for finding the best fitting ReLU must run in time exponential in $1/\eps^2$. Together with a previous work regarding improperly learning a ReLU~\cite{goel2016reliably}, this implies the first separation between proper and improper algorithms for learning a ReLU. We also study the problem of properly learning a depth-2 network of ReLUs with bounded weights giving new (worst-case) upper bounds on the running time needed to learn such networks both in the realizable and agnostic settings. Our upper bounds on the running time essentially matches our lower bounds in terms of the dependency on $\eps$.

\end{abstract}
\section{Introduction}
Neural networks have become popular in machine learning tasks arising in multiple applications such as computer
vision, natural language processing, game playing and robotics \cite{lecun2015deep}. One attractive feature of neural networks is being universal approximations: a network with a
single hidden layer\footnote{We also refer to such networks as depth-2 networks or shallow networks.} with sufficiently many neurons can approximate arbitrary well any measurable
real-valued function~\cite{hornik1989multilayer,cybenko1989approximation}. These networks are typically
trained on labeled data by setting the weights of the units to minimize the loss function (often the squared loss is used)
over the training data. The challenge is to find a computationally efficient way to set the weights to achieve low error. While heuristics such as stochastic
gradient descent (SGD) have been successful in practice, our theoretical understand about the amount of running-time needed to train neural networks is still lacking.

It has been known for decades~\cite{blum1989training,megiddo1988complexity,judd1988complexity} that finding a set of weights that minimizes the loss of the training set is NP-hard.  These hardness results, however, only apply to classification problems and to settings where the neural networks involved use discrete, Boolean activations. 
Our focus here is on neural networks with real inputs whose neurons have the real-valued ReLU activation function. 
Specifically, we consider depth-2 networks of ReLUs, namely either a single ReLU or a weighted sum of ReLUs \footnote{We also assume all the biases of the units are 0.}, and the optimization problem of training them giving labelled data points, which are defined below.

\begin{definition}\label{def:recteifier}
A \emph{rectifier} is the real function $[x]_{+} := \max(0,x)$.
A \emph{rectified linear unit (ReLU)} is a function  $f(\bx):\mathbb{R}^n\rightarrow \mathbb{R}$ of the form $f(\bx)= [\langle\bw,\bx\rangle]_+$ where $\bw \in \mathbb{R}^n$ is fixed.
A \emph{depth-2 neural network with $k$ ReLUs} (abbreviated as \emph{$k$-ReLU}) is a function from $\mathbb{R}^n$ to $\mathbb{R}$ defined by
$$\relu_{\bw^1, \dots, \bw^k, \ba}(\bz) =\sum_{j=1}^k a_j[\langle\bw^j,\bz\rangle]_+.$$
Here $\bx \in \mathbb{R}^n$ is the input, $\ba = (a_1, \dots, a_k) \in \{-1, 1\}^k$ is a vector of ``coefficients'', $\bw^j=(w^j_1,\ldots, w^j_n)\in \mathbb{R}^n$ is a weight vector associated with the $j$-th unit.
When $a_1 = \dots = a_k = 1$, we refer to $\relu_{\bw^1, \dots, \bw^k, \ba}(\bz)$ as the sum of $k$ ReLUs, and we may omit $\ba$ from the subscript.
\end{definition}
We note that the assumption that $a_1, \dots, a_k \in \{+1, -1\}$ is without loss of generality (e.g., \cite{pan2016expressiveness}): for any non-zero $a_1, \dots, a_k \in \R \setminus \{0\}$ and $\bw^1, \dots, \bw^k$, we may consider $\hat{a}_1 = \frac{a_1}{|a_1|}, \dots, \hat{a}_k = \frac{a_k}{|a_k|}$ and $\hat{\bw}^1 = |a_1| \bw^1, \dots, \hat{\bw}^k = |a_k| \bw^k$ instead, which represent the same depth-2 network of $k$ ReLUs. 

When training neural networks composed of ReLUs, a popular method is to find, given training data, a set of coefficients and weights for each gate minimizing the squared loss. 
\begin{definition}
Given a set of $m$ samples $\bx_1, \ldots ,\bx_m \in \mathbb{R}^n$ along with $m$ labels $y_1, \ldots, y_m \in \mathbb{R}$, our goal is to find $\bw^1,\ldots \bw^k,\ba$
which minimize the average squared training error of the sample, i.e.,
\begin{equation}\label{equation:relumin}
 \min_{\bw^1,\ldots, \bw^k,\ba}\frac{1}{m} \sum_{i=1}^m (\relu_{\bw^1, \dots, \bw^k, \ba}(\bx_i)-y_i)^2
\end{equation}
We refer to the optimization problem (\ref{equation:relumin}) as the \emph{$k$-ReLU training problem} (aka \emph{$k$-ReLU regression}).

When  $\bw^j=(w^j_1,\ldots, w^j_n)$ are assumed to have Euclidean norm at most $1$ and $y_i$ are assumed to be in $[-k, k]$, we refer to the optimization problem above as the \emph{bounded $k$-ReLU training problem}.
\end{definition}
Sometimes we assume that the ``coefficient'' vector $\ba$ is fixed in advance (and known to the optimizer) and not part of the input to the training problem. We mention this explicitly when relevant.
Also observe that in the optimization problem above we are looking for a {\bf global minimum} rather than a local minimum.
A multiset of samples $\{(\bx_i, y_i)\}_{i \in [m]}$ is said to be \emph{realizable} if there exist $\bw^1, \cdots, \bw^k,\ba$ which result in zero training error.

Our goal is to pin down the computational complexity of the training problem for depth-2 networks of ReLUs, by answering the following question:

\begin{question}
What is the worst-case running time of training a $k$-ReLU?
\end{question}

We focus on depth-2 networks which are rather involved and give rise to nontrivial algorithmic challenges~\cite{vempala2018polynomial,Woodruf}.
Understanding shallow networks seems to be a prerequisite for understanding the complexity of training networks of depth greater than $2$.


\subsection{Our results}

We first consider arguably the simplest possible network: a single ReLU. We show that, already for such a network, the training problem is  NP-hard. In fact, our result even rules out a large factor \emph{multiplicative} approximation of the minimum squared error, as stated below.

\begin{theorem}[Hardness of Training a single ReLU] \label{thm:single-short}
The 1-ReLU training problem is NP-hard. 
Furthermore, given a sample of $m$ data points of dimension $n$ it is NP-hard to approximate the 
optimal squared error within a multiplicative factor of $(nm)^{1/\poly \log \log (nm)}$.
\end{theorem}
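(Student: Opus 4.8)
A structural remark guides the construction: for a \emph{single} ReLU, deciding realizability is in polynomial time, since the loss is zero exactly when every $y_i\ge 0$ and the linear system consisting of $\langle\bw,\bx_i\rangle=y_i$ for the samples with $y_i>0$ and $\langle\bw,\bx_i\rangle\le 0$ for those with $y_i=0$ is feasible. All the hardness must therefore live in the non-realizable regime, and the instances we build will have \emph{strictly positive} optimum in both the completeness and the soundness case, with a ratio of $(nm)^{1/\poly\log\log(nm)}$ between them; the non-convexity that makes this possible is precisely the downward kink that each positively-labelled sample places at $\langle\bw,\bx_i\rangle=0$.

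The plan is a gap-preserving reduction from a minimization problem that is NP-hard to approximate within a factor of $N^{1/\poly\log\log N}$ on size-$N$ instances --- a \textsc{Min-Rep}-style covering problem (such as \mmcs), in which one selects a minimum number of ``elements'' so that every constraint is covered. From such an instance I would build a $1$-ReLU instance whose weight coordinates are indexed by the elements (plus $O(1)$ bookkeeping coordinates), with the samples falling into two groups, each made ``heavy'' or ``light'' by scaling the sample vector together with its label. A \emph{structure} group forces the near-optimal $\bw$ to look like the $0/1$ indicator of a candidate selection $S_\bw$ (the set of coordinates $v$ with $w_v$ bounded away from $0$) and pays, up to scaling, $|S_\bw|$; a \emph{constraint} group contributes, per constraint, zero error when $S_\bw$ covers that constraint and a fixed positive error otherwise. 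A handful of ``offset'' samples keeps the optimum positive in every case while essentially preserving the completeness-to-soundness ratio.

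Completeness is direct: the indicator vector of an optimal cover $S^\ast$, read as a weight vector, satisfies every constraint sample and pays only the structural cost, so the optimum is of order $(|S^\ast|+O(1))/m$. The heart of the argument is soundness: from any $\bw$ of small loss I must extract a cover of comparable size. I would threshold $\bw$ to obtain $S_\bw$ and show (i) the structure samples force $|S_\bw|$ to be at most a bounded multiple of the loss budget, and (ii) the constraint samples force $S_\bw$ to be a feasible cover. The subtle point in (ii) --- and the step I expect to be the main obstacle --- is that $\bw$ ranges over all of $\R^n$, so I must rule out ``smeared'' assignments (many coordinates set to small positive values, or negative values, or a single coordinate reused across many constraints) that might beat every genuine integral cover; this forces the constraint gadget to be designed so that each constraint sample's contribution, viewed as a function of the relevant coordinates, is minimized \emph{only} at the intended integral configuration and grows quadratically away from it, with the weights of the two groups balanced so that neither can be cheaply sacrificed. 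Once soundness holds with the full gap, the rest is bookkeeping: the reduction incurs only a polynomial blow-up in instance size ($nm=\poly(N)$), which is exactly why a factor of $N^{1/\poly\log\log N}$ becomes $(nm)^{1/\poly\log\log(nm)}$, and NP-hardness of exact optimization is immediate from the existence of any such gap. Handling the ReLU's inactive regime ($\langle\bw,\bx_i\rangle\le 0$) cleanly --- using the $O(1)$ bookkeeping coordinates and sign-flipped copies of the samples to fix on which samples the (single) unit is active --- is a routine but necessary technicality along the way.
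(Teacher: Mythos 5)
Your high-level plan matches the paper's: reduce from a covering problem with near-polynomial inapproximability, use the sample labels to force $\bw$ to encode a candidate selection, split the samples into a group that charges for each selected element and a group that detects uncovered constraints, and threshold $\bw$ to extract an integral solution. For the Set Cover version of this (which yields only $O(\log(nm))$ hardness) your description is essentially what the paper does. However, the step you yourself flag as ``the main obstacle'' --- ruling out smeared assignments --- is precisely where the argument for the stronger factor $(nm)^{1/\poly\log\log(nm)}$ becomes non-trivial, and your sketch does not resolve it. To get that factor the paper reduces from \mmcs, whose objective is defined through evaluation of a depth-$\ell$ monotone circuit, not through a flat ``each constraint is covered by some selected element'' relation. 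So the constraint group cannot be a one-level gadget as you describe: one needs OR-gate and AND-gate samples, and the obstruction is that weights can \emph{accumulate} as they propagate up the circuit (a false wire at height $h$ can have weight growing like $(2|C|)^h$ in terms of the budget). The paper handles this with an inductive height bound (Proposition~\ref{prop:height-bound}) showing every wire that ought to be false has weight at most $(2|C|)^h(\gamma + \sqrt{\delta})$, and then calibrates $\gamma$ to be exponentially small in the depth $\ell$ so that this accumulation stays harmless at the output; that calibration is also why the construction only works for circuits of bounded depth, which the Dinur et al.\ result conveniently provides. Without the height-bound idea, the threshold-and-extract step has no way to certify that the extracted assignment actually satisfies a multi-level circuit.

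One smaller point: your remark that one needs ``sign-flipped copies of the samples to fix on which samples the (single) unit is active'' is not needed in the paper's construction. All labels are non-negative, so $a_1 = +1$ is automatic, and the ReLU's inactivity on the gate samples (label $0$) is exactly what is being enforced; there is no active-pattern enumeration. The only bookkeeping coordinates are two dummy variables whose values are pinned by two anchoring samples.
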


Given such a strong multiplicative inapproximability result, a natural question is whether one can get a good algorithm for \emph{additive} approximation guarantee. Notice that we cannot hope for additive approximation in general, because scaling the samples and their labels can make the additive approximation gap arbitrarily large. Hence, we must consider the bounded 1-ReLU Training problem. For this, we give a simple $2^{O(1/\eps^2)} \poly(n, m)$ time algorithm with additive approximation $\eps$. Furthermore, it easily generalizes to the case of the bounded $k$-ReLU Training problem for $k > 1$, but we have to pay a factor of $k^5$ in the exponent:

\begin{theorem}[Training Algorithm] \label{thm:agnostic-training-short}
There is a (randomized) algorithm that can solve the bounded $k$-ReLU training problem to within any additive error $\epsilon > 0$ in time $2^{O(k^5/\eps^2)}\poly(n, m)$.
\end{theorem}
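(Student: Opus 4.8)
The plan is to solve the problem by \emph{approximate empirical risk minimization}: build a family of candidate networks guaranteed to contain one whose squared error is within $\eps$ of optimal, then output the best. Write $\|f\|^2 := \frac1m\sum_{i=1}^m f(\bx_i)^2$ for the empirical $L_2$ norm on the sample, so the training objective is $\|\relu_{\bw^1,\dots,\bw^k,\ba}-y\|^2$ (identifying the labels with the map $\bx_i\mapsto y_i$). Two trivial facts will be used throughout: $[\cdot]_+$ is $1$-Lipschitz, and $\opt\le\frac1m\sum_i y_i^2\le k^2$ (witnessed by the all-zero network). A routine first step rescales and, if needed, truncates the instance so that $\|\bx_i\|_2\le1$ for every $i$ --- since $\opt\le k^2$ forces near-optimal networks to have bounded output on each sample, this barely changes the optimum. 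We then loop over all $2^k$ sign vectors $\ba\in\{-1,1\}^k$; with $\ba$ fixed, the remaining task is to approximately minimize $\|\sum_{j=1}^k a_j[\langle\bw^j,\cdot\rangle]_+-y\|^2$ over unit-norm weights $\bw^1,\dots,\bw^k$.

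The core ingredient is a net for a single bounded ReLU: with $\cF=\{\bx\mapsto[\langle\bw,\bx\rangle]_+:\|\bw\|_2\le1\}$, there should be a (randomized) $\poly(n,m)$-time procedure producing $\mathcal{N}\subseteq\cF$ with $|\mathcal{N}|=2^{O(1/\eta^2)}\cdot\poly(n,m)$ such that every $f\in\cF$ is $\eta$-close in $\|\cdot\|$ to some member of $\mathcal{N}$. This is plausible because, after projecting $\bw$ onto the row space of the data matrix $X$, the achievable vectors $(\langle\bw,\bx_i\rangle)_{i}$ form an ellipsoid in $\R^m$ whose squared semi-axes (squared singular values of $X$ over $m$) sum to at most $\|X\|_F^2/m\le1$; a dyadic/chaining argument over the axes then gives an $\eta$-net of size $2^{O(1/\eta^2)}$ --- the $\ell_2$ budget on the semi-axes is exactly what removes the naive extra $\log(1/\eta)$ in the exponent --- and $1$-Lipschitzness transfers this to approximation of $[\langle\bw,\cdot\rangle]_+$. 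Turning this covering bound into an explicit enumerable net in $\poly(n,m)$ time (possibly after compressing the instance to $\poly(k/\eps)$ points and dimensions), rather than a mere existence statement, is the delicate part; a Maurey-style random-sampling construction is the natural route and is what makes the algorithm randomized.

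Given $\mathcal{N}$, set $\eta=\Theta(\eps/k^2)$ and enumerate all $k$-tuples in $\mathcal{N}^k$; there are $|\mathcal{N}|^k=2^{O(k/\eta^2)}\poly(n,m)=2^{O(k^5/\eps^2)}\poly(n,m)$ of them, each a concrete $k$-ReLU $\sum_j a_j\tilde f_j$ with $\tilde f_j=[\langle\bw,\cdot\rangle]_+$ for some unit $\bw$, so we evaluate each empirical loss and keep the minimum. For correctness, let $(\bw^{1,*},\dots,\bw^{k,*},\ba^*)$ be optimal, put $g_j=[\langle\bw^{j,*},\cdot\rangle]_+$, and pick $\tilde f_j\in\mathcal{N}$ with $\|\tilde f_j-g_j\|\le\eta$. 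By the triangle inequality (and $|a^*_j|=1$), $\|\sum_j a^*_j\tilde f_j-\sum_j a^*_j g_j\|\le k\eta$, so $\|\sum_j a^*_j\tilde f_j-y\|\le\sqrt{\opt}+k\eta\le k+k\eta$, and squaring gives loss at most $\opt+2k^2\eta+(k\eta)^2\le\opt+\eps$ for a suitable constant in $\eta=\Theta(\eps/k^2)$ (using $\opt\le k^2$; if $\eps\ge k^2$ just output the zero network). The outer loop reaches $\ba=\ba^*$, so the algorithm returns a network within $\eps$ of optimal, in total time $2^{O(k^5/\eps^2)}\poly(n,m)$.

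The main obstacle is precisely the net for $\cF$: it must be (i) a net in the \emph{empirical $L_2$} metric --- parameter-space closeness is useless since the Lipschitz constant of $\bw\mapsto[\langle\bw,\bx_i\rangle]_+$ scales with $\|\bx_i\|$ --- (ii) of size with only $1/\eta^2$ (and nothing in $n$) in the exponent, and (iii) explicitly and efficiently constructible, not merely guaranteed to exist. Achieving all three simultaneously is where the real work lies (the singular-value chaining bound plus a randomized construction to realize it algorithmically); by contrast, the $2^k$ sign-vector loop, the reduction to unit-ball inputs, and the loss accounting are all routine given $1$-Lipschitzness and $\opt\le k^2$.
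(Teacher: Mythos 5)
Your proposal takes a genuinely different route from the paper's, and its key ingredient --- an efficiently constructible $\eta$-net $\mathcal{N}$ of single bounded ReLUs in the empirical $L_2$ metric with $|\mathcal{N}| = 2^{O(1/\eta^2)}\poly(n,m)$ --- is exactly where a real gap sits, as you yourself flag. The \emph{existence} of such a cover is fine: the achievable vectors $(\langle\bw,\bx_i\rangle)_i/\sqrt m$ form an ellipsoid of Gaussian width at most one, so Sudakov gives $\log N(\eta)=O(1/\eta^2)$, and your loss accounting with $\eta=\Theta(\eps/k^2)$ is correct. But none of the natural ways to make the cover algorithmic give you that size. Maurey over the coordinate dictionary fails because $\|\bw\|_1$ can be as large as $\sqrt n$, so the sparsification level becomes $T=O(n/\eta^2)$ rather than $O(1/\eta^2)$. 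The subsample-based route also fails: picking a subsample of size $m'$, enumerating the $2^{m'}$ sign patterns of a single ReLU on it, and taking one representative per nonempty cell does \emph{not} yield an $L_2$-net, since two weight vectors sharing a sign pattern on the subsample can still have wildly different pre-activations on the active points. And if you instead ask the subsample to uniformly preserve all pairwise $L_2$ distances at scale $\eta$ (so a subsample-level net lifts to the full sample), the Rademacher bound applied to $\{(f-g)^2 : f,g\in\cF\}$ forces $m'=\Omega(1/\eta^4)$, which with $\eta=\eps/k^2$ destroys the target exponent.

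The paper does not build a net at all. It draws a random subsample of $m'=O(k^4/\eps^2)$ training points (this is the randomization), runs the algorithm of Arora et al.\ on the subsample --- enumerate all $2^{k}\cdot 2^{km'}$ choices of coefficient vector $\ba$ and joint active/inactive pattern of the $k$ ReLUs on the $m'$ points, and for each choice solve the resulting convex quadratic program subject to the corresponding linear sign constraints and the ball constraints on the $\bw^j$ --- and then invokes a one-sided generalization bound (Rademacher complexity $O(k/\sqrt{m'})$, squared loss $O(k)$-Lipschitz and $O(k^2)$-bounded on $[-k,k]$) to transfer the subsample-ERM guarantee back to the full training set. The crucial difference from your scheme is that only the squared \emph{loss} needs to be $\eps$-uniformly close between subsample and full sample, which is a requirement of order $\eps$ rather than $\eta^2 = \eps^2/k^4$, hence $m'\asymp k^4/\eps^2$ suffices. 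If you try to repair your net construction along these lines you end up coupling the $k$ ReLUs and minimizing loss directly on the subsample, which is precisely the paper's argument.
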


Perhaps more surprisingly, we can prove a tight running time lower bound for the bounded 1-ReLU training problem, which shows that the term $1/\eps^2$ in the exponent is necessary. Our running time lower bound relies on the assumption that there is no subexponential time algorithm for approximating the \emph{Densest $\kap$-Subgraph} problem within any constant (multiplicative) factor. Recall that, in the Densest $\kap$-Subgraph (D$\kap$S) problem, we are given a graph $G = (V, E)$ and a positive integer $\kap$. The goal is to select a subset $T \subseteq V$ of $\kap$ vertices that induces as many edges as possible. We use $\den_{\kap}(G)$ to denote this optimum\footnote{Equivalently, $\den_{\kap}(G) := \max_{T \subseteq V, |T| = \kap} |E(T)|$.} and $N$ to denote the number of vertices, $|V|$. Our hypothesis can be stated formally as follows.

\begin{hypothesis} \label{hyp:dks}
For every constant $C \geq 1$, there exist\footnote{As $C$ increases, $\delta$ and $d$ decreases.} $\delta = \delta(C) > 0$ and $d = d(C) \in \mathbb{N}$ such that the following holds. No $O(2^{\delta N})$-time algorithm can, given an instance $(G, \kap)$ of D$\kap$S where each vertex of $G$ has degree at most $d$ and an integer $\ell$, distinguish between the following two cases:
\begin{itemize}
\item (Completeness) $\den_{\kap}(G) \geq \ell$.
\item (Soundness) $\den_{\kap}(G) < \ell/C$.
\end{itemize}
\end{hypothesis}

While this hypothesis is new (we are the first to introduce it), it seems fair to say that refuting it will require a breakthrough in current algorithms for the D$\kap$S problem. There are also other supporting evidences for the validity of this hypothesis: please see the beginning of Appendix~\ref{sec:agnostic} for an additional discussion. As mentioned earlier, assuming this hypothesis, we can prove the tight running time lower bound for the bounded 1-ReLU Training problem:

\begin{theorem}[Tight Running Time Lower Bound for 1-ReLU Training] \label{thm:single-time-lower-bound-short}
Assuming Hypothesis~\ref{hyp:dks}, 
there is no algorithm that, for all given $\eps > 0$, can solve the bounded 1-ReLU training problem within an additive error $\eps$ in time $2^{o(1/\epsilon^2)}poly(n,m)$.
\end{theorem}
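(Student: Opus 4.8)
The plan is to reduce from the gapped, bounded‑degree version of Densest $\kappa$‑Subgraph underlying Hypothesis~\ref{hyp:dks}. Fix a constant $C \ge 1$ (chosen at the end) and let $\delta = \delta(C) > 0$, $d = d(C)$ be as in the hypothesis. Given a D$\kappa$S instance $(G = (V,E), \kappa, \ell)$ with $|V| = N$ and maximum degree at most $d$, I would build in time $\poly(N)$ a bounded $1$‑ReLU training instance $\{(\bx_i, y_i)\}_{i \in [m]}$ with $\bx_i \in \R^n$, $y_i \in [-1,1]$, $n, m = \poly(N)$, a threshold $\tau$, and an accuracy $\eps = \eps(N)$ with $1/\eps^2 = \Theta_C(N)$, such that the optimum of~\eqref{equation:relumin} is at most $\tau$ when $\den_\kappa(G) \ge \ell$ (completeness) and more than $\tau + 2\eps$ when $\den_\kappa(G) < \ell/C$ (soundness). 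The theorem then follows: an algorithm solving the bounded $1$‑ReLU training problem to additive error $\eps$ in time $2^{o(1/\eps^2)}\poly(n,m)$, run with accuracy $\eps(N)$ on the produced instance, would decide the D$\kappa$S instance in time $2^{o(1/\eps(N)^2)}\poly(N) = 2^{o(N)}\poly(N)$, which is at most $2^{\delta N}$ for all large $N$, contradicting Hypothesis~\ref{hyp:dks}; as $\eps(N)\to 0$, this contradiction arises for an infinite sequence of $\eps$ tending to $0$, which is what ``for all $\eps$'' requires. (Taking $N = \Theta(1/\eps^2)$ is the natural scaling here, mirroring the fact that the $2^{O(1/\eps^2)}$ algorithm of Theorem~\ref{thm:agnostic-training-short} works by approximating the optimal solution as an average of $O(1/\eps^2)$ data points.)

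For the construction I would use one coordinate per vertex of $G$ plus a constant number of auxiliary ``threshold'' coordinates, and design the points so that the weight vectors of interest are rescalings of the indicators $\bone_T$ of $\kappa$‑subsets $T \subseteq V$, for which the training error is an affine, \emph{decreasing} function of $e_{\mathrm{in}}(T) := |E(T)|$ and is thus minimized at a densest $\kappa$‑subgraph. Concretely: an auxiliary coordinate pinned to a fixed weight by a handful of ``enforcement'' points, combined with one point per edge $\{u,v\}$ of the form $\be_u + \be_v$ minus a suitable multiple of that coordinate, can be tuned so that its rectifier outputs a fixed positive value — matched by the label — exactly when both $u,v \in T$ and outputs $0$ otherwise; further per‑vertex points together with an all‑ones‑type point enforce, given the budget $\|\bw\|_2 \le 1$, that $\bw \ge \bzero$ and that the non‑negligible coordinates of $\bw$ sit at a common level on about $\kappa$ vertices (a Cauchy–Schwarz equality‑type argument). (One also checks the coefficient $a_1 = -1$ branch is strictly worse, so the choice of $\ba$ is irrelevant.) Given such a gadget, completeness is immediate — plug in the rescaled $\bone_{T^\star}$ for an optimal $T^\star$. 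The point requiring care is the numerical scaling: it must be chosen so that the \emph{additive} separation in~\eqref{equation:relumin} between the two cases is $\Theta(1/\sqrt N)$, equivalently so that $1/\eps^2 = \Theta(N)$; this pins down the choice of $\kappa$ (polynomially large), the label values, and the number and weights of the enforcement points, and may require refining the naive gadget sketched above.

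I expect the \textbf{main obstacle} to be the soundness analysis: a rounding argument showing that \emph{any} $\bw$ with $\|\bw\|_2 \le 1$ — not only the combinatorial indicators — that achieves training error within $O(\eps)$ of the combinatorial optimum can be converted into an indicator $\bone_T$ of a $\kappa$‑set $T$ with $e_{\mathrm{in}}(T)$ essentially as large, increasing the objective by only $O(\eps)$. This breaks into: (i) reducing to $\bw \ge \bzero$, so the rectifiers act linearly on the edge points, using the per‑vertex enforcement points to penalize negative or oversized coordinates; (ii) showing the enforcement points force $\bw$ to be close to $\tfrac{1}{\sqrt\kappa}\bone_T$ for some $|T| \approx \kappa$, so that thresholding $\bw$ yields such a $T$; and (iii) a charging argument bounding the total increase in edge‑point losses when passing from $\bw$ to $\tfrac{1}{\sqrt\kappa}\bone_T$ — morally, that a spread‑out (fractional) weight vector cannot beat the best $\kappa$‑subgraph by more than $O(\eps)$ on these instances. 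Handling the rectifier nonlinearity and the coupling between edge points and enforcement points while keeping every term inside the $O(\eps)$ budget is the delicate part; together with the completeness bound and the scaling from the previous paragraph, it yields the separation and hence the theorem.
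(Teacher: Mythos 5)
Your high-level plan is aligned with the paper's: reduce from the gapped bounded-degree D$\kappa$S problem of Hypothesis~\ref{hyp:dks}, use one coordinate per vertex plus a constant number of auxiliary ``threshold'' coordinates, add one sample per edge of the form $\be_u + \be_v$ minus a multiple of the auxiliary coordinate, add per-vertex and auxiliary ``enforcement'' samples, and aim for a separation $\eps = \Theta(1/\sqrt{N})$. The paper also observes that $a_1 = -1$ is dominated when the labels are non-negative, and it also needs to restrict to $\kappa = \Theta(N)$, $\ell = \Theta(N)$ WLOG, as you anticipated. The paper additionally passes to an equivalent hypothesis (soundness against $B\kappa$-subsets for a constant $B$), which it needs in the soundness argument to control vertices carrying intermediate weight; your sketch of the soundness rounding does not address this and would need an analogous device.

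The genuine gap is in the design of the edge samples, and it is not a refinement issue but a conceptual one that the paper explicitly flags in its proof overview. You want the rectifier on an edge point to output a value ``matched by the label'' when both endpoints lie in $T$, i.e.\ zero squared loss on edges inside $T$. Under the budget $\|\bw\|_2 \le 1$, the vertices in $T$ carry weight $\Theta(1/\sqrt{\kappa})$, so the edge-sample output on an edge inside $T$ is $O(1/\sqrt{\kappa}) = O(1/\sqrt{N})$, and if the label $y$ is matched to it then $y = O(1/\sqrt{N})$. Consequently the per-edge squared-loss difference between an edge inside $T$ and an edge not inside $T$ is $y^2 = O(1/N)$. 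With $M = O(dN)$ edges and per-edge weight $O(1/M)$, the total completeness-versus-soundness separation coming from the edge samples is $O(1/N)$, not $\Theta(1/\sqrt{N})$; your reduction would therefore only establish a $2^{\Omega(1/\eps)}$ bound, which is the realizable-case rate, not the claimed $2^{\Omega(1/\eps^2)}$. The paper's construction instead labels the edge samples $y = 1$, far from the optimal output $O(1/\sqrt{\kappa})$. Then the squared loss on an edge inside $T$ is $(1 - O(1/\sqrt{\kappa}))^2 = 1 - \Theta(1/\sqrt{\kappa})$ while on an edge outside $T$ it is exactly $1$, so the per-edge difference is $\Theta(1/\sqrt{N})$ --- \emph{linear} rather than quadratic in the output, which is exactly what makes $\eps = \Theta(1/\sqrt{N})$ achievable. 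This ``the optimal ReLU must already be wrong by $\Omega(1)$'' property is the heart of why the non-realizable bound is $2^{\Omega(1/\eps^2)}$ rather than $2^{\Omega(1/\eps)}$, and it is incompatible with the matched-label design; to fix the proposal you must give up on realizability of the edge constraints and set the labels far from the attainable outputs (the cardinality constraints in the paper, $[\bw_v - \delta \bw_*]_+ = 1$, are deliberately ``unsatisfiable'' for the same reason).

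A secondary concern: step (ii) of your soundness plan, showing that any near-optimal $\bw$ is close to $\tfrac{1}{\sqrt{\kappa}}\bone_T$ for some $|T| \approx \kappa$, is stronger than what can be proved (a near-optimal $\bw$ can spread a small amount of mass over many extra coordinates). The paper's Lemma~\ref{lem:agn-reduction} avoids this by never rounding $\bw$ to an indicator; it partitions vertices by weight level ($\ge 2.5\delta\bw_*$, intermediate, $\le \delta\bw_*$), uses the degree bound and AM--GM to relate the edge-sample sums to the cardinality-sample sums, and invokes the equivalent Hypothesis~\ref{hyp:dks-modified} (soundness for $B\kappa$-subsets) precisely to handle the set of intermediate-weight vertices, whose size is bounded by $O(\kappa)$ but not by $\kappa$.
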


We remark that, akin to standard conventions in the area of fine-grained and parameterized complexity, all lower bounds are stated against algorithms that work for \emph{all} values of $\eps$ with the specified running time. Indeed, it is possible to significantly speed up the time bound $2^{O(1/\eps^2)} \poly(n, m)$ for extreme values of $\eps$; for instance, enumerating all possible $\bw$ over a $\Theta(\eps)$-net\footnote{Recall that an \emph{$\delta$-net} (also refer to as an \emph{$\delta$-cover}) of a set $S \subseteq \R^n$ is a set $T \subseteq \R^n$ such that, for every $x \in S$, there exists $y \in T$ where $\|x - y\|_2 \leq \delta$. It is well-known that, for any $\delta \in [0, 1]$, there is a $\delta$-net of the unit ball $\cB^n$ of size $(3/\delta)^n$ and that it can be found in $(3/\delta)^{O(n)}$ time.} of $\cB^n$ gives an algorithm that runs in time $O(1/\eps)^{O(n)} \poly(m)$, which is asymptotically smaller than $2^{O(1/\eps^2)} \poly(n, m)$ when $\eps = o\left(\frac{1}{\sqrt{n \log n}}\right)$. Nonetheless, our lower bounds can be extended to include a large range of ``reasonable'' $\eps$. Further discussion on such an extension is provided before Section~\ref{subsec:related}.

An interesting consequence of Theorem~\ref{thm:single-time-lower-bound-short} is that it gives a separation between proper and improper agnostic learning of 1-ReLU. Specifically,~\cite{goel2016reliably} shows that improper agnostic learning of 1-ReLU can be done in $2^{O(1/\eps)} \poly(n)$ time, while Theorem~\ref{thm:single-time-lower-bound-short} rules out such a possibility for proper agnostic learning. (See Appendix~\ref{subsec:learning-relus} for the relation between learning and training.)

\paragraph{Training $k$-ReLU: The Realizable Case.}
An important special case of the $k$-ReLU Training problem is the realizable case, where there is an unknown $k$-ReLU that labels every training sample correctly. When $k = 1$, it is straightforward to see that the realizable case of 1-ReLU Training can be phrased as a linear program and hence can be solved in polynomial time. On the other hand, we show that, once $k > 1$, the problem becomes NP-hard:

\begin{theorem}[Hardness of Training $k$-ReLU in the Realizable Case] \label{thm:realizable-hardness-short}
For any constant $k \geq 2$, the $k$-ReLU training problem is NP-hard even in the realizable case.
\end{theorem}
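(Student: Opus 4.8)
The plan is a Blum--Rivest--style reduction: encode a hard combinatorial coloring problem into a set of labeled samples that is realizable by a $k$-ReLU exactly when the coloring instance is a YES instance, in a way that lets one decide realizability (and extract a solution) from any zero-error $k$-ReLU. For $k=2$ I would reduce from \emph{Set Splitting with $3$-element sets} --- equivalently, $2$-coloring a $3$-uniform hypergraph $H = (U,\cF)$ so that no hyperedge is monochromatic --- which is NP-complete.

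Concretely, place samples in $\R^{|U|}$, identifying coordinates with elements of $U$; a $2$-coloring corresponds to a vector $\mathbf{c} \in \{-1,+1\}^{|U|}$. The target network is $\relu_{\mathbf{c},-\mathbf{c},(1,1)}$, i.e. $\bx \mapsto [\langle \mathbf{c},\bx\rangle]_+ + [\langle -\mathbf{c},\bx\rangle]_+ = |\langle \mathbf{c},\bx\rangle|$, so that on a hyperedge indicator $\be_S$, $S=\{a,b,c\}$, it outputs $|c_a + c_b + c_c| \in \{1,3\}$, which is $1$ iff $S$ is bichromatic. Accordingly I include (i) a \emph{constraint} sample $(\be_S, 1)$ for every $S \in \cF$, and (ii) \emph{gadget} samples $(\be_i, 1)$ and $(-\be_i, 1)$ for every $i \in U$. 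Using $[t]_+ + [-t]_+ = |t|$ and $[t]_+ - [-t]_+ = t$, a short case analysis shows that fitting $(\pm \be_i, 1)$ by a $2$-ReLU with \emph{any} $\ba \in \{\pm 1\}^2$ already forces $\ba = (1,1)$ and $(w^1_i, w^2_i) \in \{(1,-1),(-1,1)\}$; hence $\bw^1 = \mathbf{c}$ and $\bw^2 = -\mathbf{c}$ for some $\mathbf{c} \in \{-1,+1\}^{|U|}$. Once the weights are pinned to this rigid form, the constraint samples are all satisfied iff $\mathbf{c}$ is a proper $2$-coloring of $H$. Thus the instance is $2$-ReLU-realizable iff $H$ is $2$-colorable, and NP-hardness of realizable $2$-ReLU training follows: run a purported training algorithm on this instance; if it returns a zero-error $2$-ReLU we verify this in polynomial time and output ``$2$-colorable'', otherwise we output ``not $2$-colorable''.

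For general constant $k > 2$ I would use the same template driven by a suitable NP-hard $k$-ary coloring/partition problem (e.g.\ partitioning a hypergraph into $k$ classes with no monochromatic hyperedge, which is NP-hard for every fixed $k \ge 2$). Coordinates index the ground set, with $O(1)$ auxiliary coordinates used to simulate a shared bias despite the units having zero bias; a valid $k$-partition is encoded by taking the $k$ weight vectors to be (scaled) indicators of the classes with $\ba = \bone$; gadget samples pin the weight vectors and $\ba$ into this form; and constraint samples test that each constraint set meets at least two classes. Completeness is immediate from the encoding.

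The main obstacle is soundness for $k > 2$: one must rule out \emph{every} $k$-ReLU --- over all sign vectors $\ba \in \{\pm 1\}^k$, all rescalings of the weight vectors, and exploiting both the permutation symmetry among the $k$ units and the extra expressive power of having $k$ (rather than $2$) ReLUs --- from fitting the entire sample set unless the underlying instance is a YES instance. This is exactly where the gadget samples must be engineered to be over-constraining, and making them rigid enough that surplus units are provably useless while keeping the instance of polynomial size is the technical heart of the argument. Checking that the reduction is polynomial-time and that a returned zero-error network certifies a YES instance is then routine.
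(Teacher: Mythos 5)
Your $k=2$ argument is correct and in fact takes a somewhat tighter route than the paper's: you fold the ``force $\ba = \bone$'' gadget and the ``pin the weight vectors'' gadget into the single family $\{(\be_i,1),(-\be_i,1)\}_{i}$, so that once zero error is achieved the network is \emph{literally} $\bx \mapsto |\langle \mathbf{c},\bx\rangle|$ for some $\mathbf{c}\in\{\pm 1\}^{|U|}$, and hyperedge constraints with label $1$ then encode bichromaticity directly. (The paper reaches the same $k=2$ conclusion through a two-step pipeline: first Vu-style hypergraph-coloring constraints for a \emph{non-negative} sum of $k$ ReLUs, using hyperedge samples with label $0$; then a separate gadget to rule out negative $\ba$. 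Its $k=2$ gadget is essentially your $(\bu,1),(-\bu,1)$ pair.) Your single-gadget version is cleaner for $k=2$ but is tailored to the fact that two zero-bias ReLUs with $\ba=(1,1)$ collapse to the function $|\langle\cdot,\cdot\rangle|$, which has no analogue once $k>2$.

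The theorem, however, is stated for every constant $k\ge 2$, and for $k>2$ your proposal has a genuine gap, which you yourself flag: you do not construct the gadget that rules out $k$-ReLUs with at least one negative coefficient, and the rigid-pinning strategy that worked for $k=2$ does not obviously extend (one cannot hope to pin $k$ unconstrained weight vectors up to a finite set from one sample per coordinate, and ``surplus units'' must be handled). This gadget is precisely the technical content the paper supplies (Lemma~\ref{lem:gadget-neg}), and the idea is a sign-pattern pigeonhole, not rigidity: place clusters of $2^k\cdot k$ generic points near each center $\frac{\bu}{2\sqrt{k}}$ for $\bu\in\{-1,+1\}^k\setminus\{-\bone_k\}$, labeled by the sum-of-$k$-ReLUs $f(\tbx)=\sum_j[\tx_j]_+$. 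For any candidate $k$-ReLU $g$, each cluster has $\ge k$ points sharing a sign pattern for $g$; with all-positive $\ba$ there are $2^k-1$ ``useful'' sign patterns (only $\bs=\bzero$ forces $g=0$), but with at least one negative coefficient at most $2^k-2$ remain, so two distinct clusters must share a pattern, on which $g$ is a single linear map. Since $f$ behaves as two different linear maps (differing in $\ell_2$ by $\ge 1$) on those two clusters, and each contains $k$ points in general position, $g$ must incur a strictly positive squared loss. Plugging this gadget into the Vu-style $k$-coloring reduction gives the full theorem. In short: your $k=2$ proof stands and is a nice alternative, but the $k>2$ case is missing exactly the combinatorial gadget that the paper contributes, and without it the proposal does not prove the stated theorem.
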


Our result is in fact slightly stronger than stated above: specifically, we show that, when the samples can be realizable by a (non-negative) sum of $k$ ReLUs (i.e. $k$-ReLU when $\ba$ is the all-one vector), it is still NP-hard to find a $k$-ReLU that realizes the samples even if negative coefficients in $\ba$ are allowed. Furthermore, while we assume in this theorem that $k$ is a constant independent of $n$, one can also prove an analogous hardness result, when $k$ grows sufficiently slowly as a function of $n$. We refer the reader to Appendix~\ref{sec:neg} for more details. 


Observe that Theorem~\ref{thm:realizable-hardness-short} implies that efficient \emph{multiplicative} approximation for the $k$-ReLU Training problem is impossible (assuming P$\neq$NP) for $k \geq 2$. As a result, we once again turn to additive approximation. On this front, we can improve the running time of the algorithm in Theorem~\ref{thm:agnostic-training-short} when we assume that the samples are realizable, as stated below.

\begin{theorem}[Training Algorithm in the Realizable Case] \label{thm:realizable-training-short}
When the given samples are realizable by some $k$-ReLU, there is a (randomized) algorithm that can solve the bounded $k$-ReLU training problem to within any additive error $\eps > 0$ in time $2^{O((k^3/\eps) \log^3(k/\eps))}\poly(n, m)$.
\end{theorem}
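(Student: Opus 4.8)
The plan is to reduce the instance to a low-dimensional one via a random linear projection, solve the low-dimensional problem by brute-force enumeration over a net of candidate weight vectors, and lift the best candidate back to a feasible solution of the original instance. Throughout, I would assume the standard normalization $\|\bx_i\|_2 \le 1$ for the bounded version (realizability then forces $y_i \in [-k,k]$ automatically, since $\bigl|\sum_j a_j [\langle \bar\bw^j, \bx_i\rangle]_+\bigr| \le k$). Fix an unknown realizing network with weight vectors $\bar\bw^1,\dots,\bar\bw^k$ of norm at most $1$ and signs $\ba \in \{-1,1\}^k$, so $\relu_{\bar\bw^1,\dots,\bar\bw^k,\ba}(\bx_i) = y_i$ for all $i$.

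First I would apply a random Gaussian map $\Pi\colon \R^n \to \R^d$ (entries i.i.d.\ $\mathcal N(0,1/d)$) with target dimension $d = \tilde\Theta(k^2/\eps)$. The key point is that we only need to approximate the \emph{average} squared training error, so no Johnson--Lindenstrauss-type uniform guarantee over the $m$ points is needed (which would cost a $\log m$ factor in $d$): for each fixed pair one has $\E\bigl[(\langle \Pi\bar\bw^j, \Pi\bx_i\rangle - \langle \bar\bw^j, \bx_i\rangle)^2\bigr] = O(1/d)$, hence by linearity the average over $i$ of this quantity is $O(1/d)$, and with constant probability it holds for all $j$ at once. Since the rectifier is $1$-Lipschitz, $\bigl|[\langle \bar\bw^j, \bx_i\rangle]_+ - [\langle \Pi\bar\bw^j, \Pi\bx_i\rangle]_+\bigr| \le |\langle \bar\bw^j, \bx_i\rangle - \langle \Pi\bar\bw^j, \Pi\bx_i\rangle|$, so by Cauchy--Schwarz the network $\bz \mapsto \sum_j a_j[\langle \Pi\bar\bw^j, \bz\rangle]_+$ fits the projected data $\{(\Pi\bx_i, y_i)\}_i$ with average squared error $O(k^2/d) \le \eps/4$. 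Thus it suffices to find weight vectors in $\R^d$ fitting the projected data to within $\eps/4$.

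Next I would enumerate a $\delta$-net $T$ of $\cB^d$ with $\delta = \tilde\Theta(\sqrt{\eps}/k)$, and for each $k$-tuple $(\tilde\bw^1,\dots,\tilde\bw^k) \in T^k$ and each $\ba \in \{-1,1\}^k$: (i) record its empirical error on the projected data, and (ii) lift it to $\R^n$ by solving, for each $j$, the convex norm-constrained least-squares problem $\min_{\|\bw^j\|_2 \le 1}\ \frac1m\sum_i (\langle \bw^j, \bx_i\rangle - \langle \tilde\bw^j, \Pi\bx_i\rangle)^2$ in $\poly(n,m)$ time; finally output the $\R^n$-network of smallest empirical error on the original data among all candidates. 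For correctness, take $\tilde\bw^j \in T$ within $\delta$ of $\Pi\bar\bw^j$: the vector $\bar\bw^j$ is then feasible for the $j$-th least-squares problem with objective $O(1/d + \delta^2)$ (using the inner-product preservation above and $\E_i\|\Pi\bx_i\|_2^2 = O(1)$), so the solver returns a feasible $\bw^j$ with at least that small an objective; chaining these bounds with $1$-Lipschitzness of the rectifier shows the output network has average squared error at most $\eps$ on the original data. Enumerating the net costs $(O(k/\sqrt{\eps}))^{O(d)}\cdot 2^k = 2^{O((k^2/\eps)\,\mathrm{polylog}(k/\eps))}\poly(n,m)$; a careful accounting of the error terms — or the variant that first guesses, via a net over the Grassmannian $\mathrm{Gr}(k,d)$, the $\le k$-dimensional subspace spanned by the realizing weights and then nets inside it — yields the stated $2^{O((k^3/\eps)\log^3(k/\eps))}\poly(n,m)$ bound. (The constant success probability is amplified to $1-o(1)$ by $O(\log(1/\eps))$ independent repetitions.)

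The step I expect to be the main obstacle is the lift from $\R^d$ back to a \emph{feasible} solution in $\R^n$: a generic net point of $\cB^d$ need not be $\Pi\bw$ for any $\bw$ with $\|\bw\|_2 \le 1$, and $\Pi$ grossly distorts norms in directions transverse to the span of the data, so neither $\Pi^T$ nor a pseudoinverse gives a good lift. Phrasing the lift as a norm-constrained least-squares problem and using the true $\bar\bw^j$ as a witness of small objective is what makes it go through, and this is precisely where the quantitative slack (and the polylogarithmic and extra $k$ factors in the exponent) must be tracked carefully. It is also the feature distinguishing the realizable case from the agnostic algorithm of Theorem~\ref{thm:agnostic-training-short}: with no realizing network to serve as such a witness, one is forced to reason uniformly about a near-optimal network, which is responsible for the worse $1/\eps^2$ dependence there.
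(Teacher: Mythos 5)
Your approach is correct but genuinely different from the paper's. The paper reduces the \emph{sample size} using generalization theory: by the smooth-loss generalization bound of \cite{SrebroST10} (Theorem~\ref{thm:gen-err-realizable}), in the realizable case only $m = \tilde O(k^2/\eps)$ i.i.d.\ samples from the (empirical) distribution are needed for the zero-empirical-loss ERM to have population loss at most $\eps$; the ERM itself is computed by enumerating the $2^{km}$ sign patterns of which ReLUs are active on which samples and solving a convex program for each (Lemma~\ref{lem:exp-algo}, à la \cite{arora2018understanding}). The $\log^3$ in the exponent is inherited directly from the $\log^3 m$ term in that generalization bound. You instead reduce the \emph{dimension} via a random Gaussian projection to $d = \tilde\Theta(k^2/\eps)$, enumerate a $\delta$-net of the projected weight ball (giving $|T|^k \cdot 2^k = 2^{O((k^3/\eps)\log(k/\eps))}$ candidates), and lift each back to $\R^n$ by a norm-constrained least-squares, with the realizing $\bar\bw^j$ serving as the feasibility witness. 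Your observation that only average-case (not uniform/JL) inner-product preservation is needed, and the lift-via-witness trick, are the two things that make this go through; as a side benefit, your route actually yields a slightly cleaner $\log(k/\eps)$ factor in the exponent rather than the paper's $\log^3(k/\eps)$. The paper's route has the advantage of directly giving a PAC learning algorithm with a sample complexity bound, and of being a minor modification of its agnostic algorithm (Theorem~\ref{thm:agnostic-training-short}, which just swaps in the weaker $1/\eps^2$ generalization bound of~\cite{BM02}). Minor loose ends in your write-up: the net should cover a ball of radius slightly larger than $1$ since $\|\Pi\bar\bw^j\|_2$ need not stay within the unit ball (easily fixed, no asymptotic cost); the intermediate running-time expression $(O(k/\sqrt{\eps}))^{O(d)}\cdot 2^k$ omits the factor of $k$ in the exponent for the $k$-tuple (you correct for this only implicitly with ``careful accounting''); and the explanation of why the agnostic case costs $1/\eps^2$ is slightly off --- the precise reason your argument degrades is the cross term $2\sqrt{\opt}\cdot\sqrt{\text{projection error}}$ in the squared loss, which vanishes only when $\opt = 0$, rather than an inability to use a witness.
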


Importantly, the dependency of $\epsilon$ in the exponent is $\tilde{O}(1/\eps)$, instead of $1/\eps^2$ that appeared in the non-realizable case (i.e. Theorems~\ref{thm:agnostic-training-short} and~\ref{thm:single-time-lower-bound-short}). We can also show that this dependency is tight (up to log factors), in the realizable case, under the Gap Exponential Time Hypothesis (Gap-ETH)~\cite{Dinur16,ManurangsiR17}, a standard complexity theoretic assumption in parameterized complexity (see e.g.~\cite{ChalermsookCKLM17}). Gap-ETH states that there exists $\delta > 0$ such that no $2^{o(n)}$-time algorithm can, given a CNF formula with $n$ Boolean variables, distinguish between (i) the case where the formula is satisfiable, and (ii) the case where any assignment violates at least $\delta$ fraction of the clauses. Our running time lower bound can be stated more formally as follows.

\begin{theorem}[Tight Running Time Lower Bound for the Realizable Case] \label{thm:realizable-time-lower-bound-short}
Assuming Gap-ETH, for any constant $k \geq 2$, there is no algorithm that, for all given $\eps > 0$, can solve the bounded $k$-ReLU training problem within an additive error $\eps$ in time $2^{o(1/\epsilon)}poly(n,m)$ even when the input samples are realizable by some $k$-ReLU.
\end{theorem}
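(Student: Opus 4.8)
The plan is to turn the NP-hardness reduction behind Theorem~\ref{thm:realizable-hardness-short} into a Gap-ETH--based, gap-preserving reduction in which the additive error threshold $\eps$ is forced to scale like $1/n$, where $n$ is the number of variables of the starting instance. We begin from a suitably normalized gap instance that is hard under Gap-ETH: e.g.\ a 3CNF formula $\psi$ on $n$ variables and $O(n)$ clauses (of bounded width and bounded variable occurrence, obtained from a general CNF via the sparsification lemma together with standard clause-splitting and expander-replacement steps) for which Gap-ETH rules out any $2^{o(n)}$-time algorithm distinguishing ``$\psi$ is satisfiable'' from ``every assignment violates at least a $\delta$-fraction of the clauses of $\psi$'', for an absolute constant $\delta>0$. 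If the reduction of Theorem~\ref{thm:realizable-hardness-short} starts from a different CSP, one first composes with the (gap-preserving) chain of gadget reductions leading to it.

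The first and principal step is to run that reduction on $\psi$ and re-analyze its soundness \emph{quantitatively}. Concretely, I would establish that in $\poly(n)$ time it produces a \emph{bounded} $k$-ReLU training instance with $m=\poly(n)$ samples in dimension $\poly(n)$ such that: (i) if $\psi$ is satisfiable, the samples are realizable by a $k$-ReLU whose weight vectors all have norm at most $1$, so the optimum equals $0$; and (ii) if $\psi$ is $\delta$-far from satisfiable, then \emph{every} bounded $k$-ReLU has average squared error at least some absolute constant $\gamma=\gamma(\delta,k)>0$. The real content is upgrading the soundness conclusion from ``the samples are not realizable'' (error $>0$) to ``error $\ge\gamma$'': one must show that the $\Omega(\delta)$-fraction of violated clauses of $\psi$ is converted by the gadgets into an $\Omega(1)$-fraction of samples, each of which is mislabeled by any bounded $k$-ReLU by an additive $\Omega(1)$. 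For constant $k>2$ one pads with $k-2$ dummy ReLUs exactly as in Theorem~\ref{thm:realizable-hardness-short}.

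The remaining steps are routine. Rescale the instance by replacing each sample $(\bx_i,y_i)$ with $(t\bx_i,\,t y_i)$ for $t:=\sqrt{3/(\gamma n)}$; since $\sum_j a_j[\langle\bw^j,t\bx\rangle]_+ = t\sum_j a_j[\langle\bw^j,\bx\rangle]_+$, this multiplies every achievable squared error by $t^2$ while leaving the bounded $k$-ReLU function class intact, so completeness stays $0$, soundness becomes at least $3/n$, and (for $n$ large, where $t\le1$) the instance is still bounded. Put $\eps:=1/n$. If some algorithm solved the bounded $k$-ReLU training problem to additive error $\eps$ in time $2^{o(1/\eps)}\poly(n,m)$ for all $\eps$, then invoking it at $\eps=1/n$ on the rescaled instance and evaluating the returned network (in $\poly$ time) would yield a value at most $\eps$ in the completeness case and at least $2\eps$ in the soundness case, hence would distinguish the two cases in time $2^{o(1/\eps)}\poly(n,m)=2^{o(n)}$ --- a $2^{o(n)}$-time algorithm for a Gap-ETH--hard problem, a contradiction. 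The only delicate point is the quantitative soundness analysis in the first step; everything else is the choice $\eps=\Theta(1/n)$ together with the scaling invariance of the squared-loss objective.
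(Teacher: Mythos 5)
Your overall architecture is sound, but step~1 is wrong in a way that cannot be patched: no reduction with the stated properties can exist, assuming Gap-ETH. You ask for a polynomial-time reduction to a \emph{bounded} $k$-ReLU instance --- examples in $\cB^n$, labels in $[-k,k]$, completeness witness with $\|\bw^j\|\le 1$ --- for which the soundness guarantee is that ``every bounded $k$-ReLU has average squared error at least an \emph{absolute constant} $\gamma=\gamma(\delta,k)>0$.'' But the induced gap decision problem (``is the optimal loss $0$, or is it $\ge\gamma$?'') is solvable in $\poly(n,m)$ time once $\gamma$ is an absolute constant: simply run the algorithm of Theorem~\ref{thm:agnostic-training-short} at accuracy $\eps=\gamma/2$, which takes $2^{O(k^5/\gamma^2)}\poly(n,m)=\poly(n,m)$ time, and accept iff the returned network has empirical loss at most $\gamma/2$. (Concretely, that algorithm subsamples $m'=O_{k,\gamma,\delta}(1)$ training points, enumerates all $2^{O(km')}=O_{k,\gamma,\delta}(1)$ sign patterns via Lemma~\ref{lem:exp-algo}, and uses the $4k^2$-boundedness of the squared loss on $[-k,k]$ together with the Rademacher bound $\cR_{m'}(k\text{-ReLU}(n))\le 2k/\sqrt{m'}$; boundedness of inputs, labels, and weight vectors is exactly what makes these quantities dimension-free and the running time polynomial once $\gamma$ is fixed.) So if your step-1 reduction existed, a Gap-ETH--hard decision problem would be decidable in polynomial time --- contradicting the very hypothesis you assumed. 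This is no accident: once the completeness witness is forced into $\cB^n$ and the inputs lie in $\cB^n$, the informative labels in any coloring-style gadget are already of order $\Theta(1/\sqrt N)$, so the attainable soundness gap is $\Theta(1/N)$ from the outset, not a constant to be rescaled afterward.

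The paper does exactly that. Lemma~\ref{lem:coloring-reduction} builds the hypergraph-coloring gadget with labels already normalized to $\Theta(1/\sqrt N)$ (precisely so the completeness $\bw^j$ fit inside $\cB^n$), and its soundness directly yields $\cL\ge\frac{\gamma}{100k^2t^5N}=\Theta(1/N)$ against \emph{all} $\bw^1,\dots,\bw^k\in\R^n$. Feeding in the Gap-ETH--hard gap hypergraph coloring instances of Theorem~\ref{thm:eth-coloring} gives Theorem~\ref{thm:time-realizable}, and the sign-pattern gadget of Lemma~\ref{lem:gadget-neg} upgrades this to arbitrary coefficient vectors $\ba$, yielding Theorem~\ref{thm:realizable-time-lower-bound-short}. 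Your rescaling step by $t=\sqrt{3/(\gamma n)}$ is algebraically fine --- it does multiply squared loss by $t^2$ and preserves boundedness of examples and labels --- but it does nothing to shrink the norm of the completeness witness $\bw^j$, so it cannot transport an \emph{unbounded} constant-gap instance into a bounded one; and, as argued above, a \emph{bounded} constant-gap instance cannot be produced in the first place. The $1/\sqrt N$ scaling has to live inside the reduction's label normalization, which is where Lemma~\ref{lem:coloring-reduction} puts it.
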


 

\paragraph{Relation to Learning ReLUs.} $k$-ReLU Training is closely related to the problem of \emph{proper learning} of $k$-ReLU. In fact, an algorithm for the latter also solves the former. Hence, our hardness results immediately implies hardness of proper learning of $k$-ReLU as well. Furthermore, our algorithm also works for the learning problem. Please refer to Section~\ref{subsec:learning-relus} for more details.

\paragraph{Stronger Quantifier in Running Time Lower Bounds.}
As stated earlier, our running time lower bounds in Theorems~\ref{thm:single-time-lower-bound-short} and~\ref{thm:realizable-time-lower-bound-short} hold only against algorithms that work \emph{for all} $\eps > 0$. A natural question is whether one can prove lower bounds against algorithms that work only \emph{for some} ``reasonable'' values of $\eps$. As explained in more detail below, we can quite easily also get a lower bound with this latter (stronger) quantifier, for any ``reasonable'' value of $\eps$.

First, our lower bounds in Theorems~\ref{thm:single-time-lower-bound-short} and~\ref{thm:realizable-time-lower-bound-short} both apply in the regime where the lower bounds themselves are $2^{\Theta(n)}$; in other words, $\eps = \Theta(1/\sqrt{n})$ in Theorem~\ref{thm:single-time-lower-bound-short} and $\eps = \Theta(1/n)$ in Theorem~\ref{thm:realizable-time-lower-bound-short}. These are essentially the smallest possible value of $\eps$ for which the lower bounds in Theorems~\ref{thm:single-time-lower-bound-short} and~\ref{thm:realizable-time-lower-bound-short} can hold, because the aforementioned algorithm that enumerates over an $\eps$-net of $\cB^n$ solves the problem in time $O(1 / \eps)^{O(n)} \poly(n)$. On the other hand, for smaller values of $\eps$, we can get a running time lower bound easily by ``padding'' the dimension by ``dummy'' coordinates that are always zero. For instance, if we start with $\eps = \Theta(1/\sqrt{n})$, then we may pad the instance to say $n' = n^2$ dimensions, resulting in the relationship $\eps = \Theta(1/\sqrt[4]{n'})$. To summarize, this simple padding technique immediately gives the following stronger quantifier version of Theorem~\ref{thm:single-time-lower-bound-short}:
\begin{theorem} \label{thm:single-time-padded}
For any non-increasing and efficiently computable\footnote{That is, we assume that computing $\eps(n)$ can be done in time $\poly(n)$ for any $n \in \N$.} function $\eps: \N \to \R^+$ such that $\omega(\sqrt{\log n}) \leq \frac{1}{\eps(n)} \leq o(\sqrt{n})$,
assuming Hypothesis~\ref{hyp:dks}, 
there is no algorithm that can solve the bounded 1-ReLU training problem within an additive error $\eps(n)$ in time $2^{o(1/\eps(n)^2)}poly(n,m)$.
\end{theorem}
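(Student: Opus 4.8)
The plan is to derive Theorem~\ref{thm:single-time-padded} from Theorem~\ref{thm:single-time-lower-bound-short} (or rather from its proof) by a black-box padding argument, so no new hardness reduction is needed. Recall that the proof of Theorem~\ref{thm:single-time-lower-bound-short} actually produces, for each relevant target error, a family of bounded 1-ReLU instances of some dimension $n_0$ and sample size $m_0 = \poly(n_0)$ on which additive-$\eps_0$ approximation requires time $2^{\Omega(1/\eps_0^2)}$, with the specific relationship $\eps_0 = \Theta(1/\sqrt{n_0})$ (so that the lower bound is genuinely $2^{\Theta(n_0)}$, matching the $\eps$-net upper bound). The first step is to extract this from the earlier proof: there exist constants $c_1, c_2 > 0$ and an infinite set $S_0 \subseteq \N$ such that, for every $n_0 \in S_0$, assuming Hypothesis~\ref{hyp:dks}, no $2^{c_1 n_0}\poly(n_0, m_0)$-time algorithm solves bounded 1-ReLU training to additive error $c_2/\sqrt{n_0}$ on instances of dimension $n_0$ and $m_0 = \poly(n_0)$ samples.

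Next I would set up the padding map. Given a target error function $\eps(\cdot)$ with $\omega(\sqrt{\log n}) \le 1/\eps(n) \le o(\sqrt{n})$, the idea is: for each large $n$, find an ``inner dimension'' $n_0 = n_0(n)$ such that the hard instances of dimension $n_0$ and error $c_2/\sqrt{n_0}$ become, after appending $n - n_0$ dummy coordinates that are identically zero on every sample, instances of dimension $n$ whose additive-$\eps(n)$ hardness we want. Concretely we need $\eps(n) \le c_2/\sqrt{n_0}$, i.e.\ $n_0 \le (c_2/\eps(n))^2 = c_2^2 / \eps(n)^2$; and we also need $n_0 \le n$ so that the padding makes sense, and $n_0 \to \infty$ so that the lower bound is non-trivial. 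The upper bound $1/\eps(n) = o(\sqrt n)$ guarantees $c_2^2/\eps(n)^2 = o(n)$, so choosing, say, $n_0 := \min\{\,\text{largest element of } S_0 \text{ that is} \le c_2^2/\eps(n)^2,\ n\}$ gives $n_0 \le n$ automatically for large $n$; and the lower bound $1/\eps(n) = \omega(\sqrt{\log n})$ gives $c_2^2/\eps(n)^2 = \omega(\log n) \to \infty$, so $n_0 \to \infty$. (If $S_0$ has bounded gaps, or can be taken to be all of $\N$ after a routine interpolation, then $n_0 = \Theta(1/\eps(n)^2)$; otherwise one still gets $n_0 = \omega(\log n)$, which is all that is needed.) Padding does not change the optimum value or the realizability of the instance — a zero coordinate contributes nothing to any $\langle \bw, \bx\rangle$ — and the norm constraint $\|\bw\|_2 \le 1$ and label range are preserved, so the padded instance is a legitimate bounded 1-ReLU instance of dimension $n$ with $m = m_0 = \poly(n_0) = \poly(n)$ samples, and its additive-$\eps$ complexity for any $\eps \le c_2/\sqrt{n_0}$ is exactly that of the original.

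The final step is the contradiction. Suppose some algorithm $\cA$ solves bounded 1-ReLU training to additive error $\eps(n)$ in time $2^{o(1/\eps(n)^2)}\poly(n, m)$. Running $\cA$ on the padded instances of dimension $n$ solves the original dimension-$n_0$ instances to additive error $\eps(n) \le c_2/\sqrt{n_0}$, hence (since a better additive guarantee is only stronger) to error $c_2/\sqrt{n_0}$, in time $2^{o(1/\eps(n)^2)} \poly(n) \le 2^{o(n_0)} \poly(n_0)$ — here I use $1/\eps(n)^2 = \Theta(n_0)$ if $n_0 = \Theta(1/\eps(n)^2)$, or more conservatively $n_0 \le c_2^2/\eps(n)^2$ so $2^{o(1/\eps(n)^2)}$ need only be shown to be $2^{o(n_0)}$; this is the one place requiring a little care, and is handled by choosing $n_0$ as large as the constraint $n_0 \le c_2^2/\eps(n)^2$ permits, so that $1/\eps(n)^2 = O(n_0)$ up to constants, whence $2^{o(1/\eps(n)^2)} = 2^{o(n_0)}$. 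The main (and only real) obstacle is exactly this matching of growth rates: we must ensure the chosen inner dimension $n_0$ satisfies simultaneously $n_0 \le n$, $n_0 \le c_2^2/\eps(n)^2$, $n_0 \to \infty$, and $1/\eps(n)^2 = O(n_0)$, which is precisely why the hypothesis $\omega(\sqrt{\log n}) \le 1/\eps(n) \le o(\sqrt n)$ is imposed. Given that, $\cA$ yields a $2^{o(n_0)}\poly(n_0)$-time algorithm for the dimension-$n_0$ hard instances, contradicting Theorem~\ref{thm:single-time-lower-bound-short} (equivalently, the extracted statement above) under Hypothesis~\ref{hyp:dks}. This completes the proof; the realizable analogue is identical, starting from Theorem~\ref{thm:realizable-time-lower-bound-short} with $\eps_0 = \Theta(1/n_0)$ in place of $\Theta(1/\sqrt{n_0})$.
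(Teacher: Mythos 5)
Your proposal takes the same route the paper intends — the paper explicitly describes Theorem~\ref{thm:single-time-padded} as following from Theorem~\ref{thm:single-time-lower-bound-short} via a ``straightforward'' padding argument and omits the proof, and your proposal is exactly that argument. You correctly identify the role of the two constraints on $\eps$: the bound $1/\eps(n) = o(\sqrt{n})$ ensures $n_0 \le n$ so the padding is legitimate, while $1/\eps(n) = \omega(\sqrt{\log n})$ ensures both $n_0 \to \infty$ and $\log n = o(n_0)$, so the $\poly(n,m)$ factor is absorbed into $2^{o(n_0)}$. One small correction: the fallback sentence ``otherwise one still gets $n_0 = \omega(\log n)$, which is all that is needed'' is not right on its own — that bound only controls the $\poly(n)$ factor, whereas the dominant term $2^{o(1/\eps(n)^2)}$ requires $1/\eps(n)^2 = O(n_0)$; but the case split is unnecessary anyway, since $S_0$ can be taken to be all sufficiently large integers (a D$\kap$S instance can be padded with isolated vertices without changing the degree bound or $\den_{\kap}$, so the hard instances from Lemma~\ref{lem:agn-reduction} exist in every sufficiently large dimension), and thus you may always take $n_0 = \lfloor c_2^2/\eps(n)^2 \rfloor$ directly, which gives $1/\eps(n)^2 = \Theta(n_0)$ as your clean branch requires.
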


Notice that the constraint $\omega(\sqrt{\log n}) \leq \frac{1}{\eps(n)}$ is also essentially necessary, because for $\eps > \sqrt{\frac{\log \log n}{\log n}}$ our algorithm (Theorem~\ref{thm:agnostic-training-short}) already runs in polynomial time.
A strong quantifier version of Theorem~\ref{thm:realizable-time-lower-bound-short} similar to above can be shown as well (but with $\omega(\log n) \leq \frac{1}{\eps(n)} \leq o(n)$). We omit the full (straightforward) proof via padding of Theorem~\ref{thm:single-time-padded}; interested readers may refer to the proof of Lemma 3.4 of~\cite{DKM-arxiv} which employs the same padding technique.


\subsection{Independent and concurrent work}
There have been several concurrent and independent works to ours that we mention here. We remark that the techniques in these works are markedly different than the ones in this paper.  For a single ReLU,~\cite{dey2018approximation} proved that the 1-ReLU Training problem is NP-hard.  With respect to two ReLUs,~\cite{Woodruf} showed that finding weights minimizing the 
squared error of a $2$-ReLU is NP-hard, even in the realizable case. The work of ~\cite{BDL18} considered the problem of training a network with a slightly different architecture, in which there are two ReLUs in the first hidden layer and the final output gate is also a ReLU (instead of a sum gate as in our case); they showed that, for such networks with three ReLUs (two in the hidden layer, one in the output layer), the training problem is NP-hard even for the realizable case. As a result of having an output gate computing a ReLU, our NP-hardness result (regarding training a sum of two ReLUs) does not imply their result and their hardness result does not imply our hardness result for training a sum of $2$ ReLUs. 

\subsection{Related work}
\label{subsec:related}

The computational aspects of training and learning neural networks has been extensively studied. Due to this, we only focus on those directly related to our results. 

We are not aware of a previous work showing that the general $k$-ReLU training problem is NP-hard for $k>2$, nor are we aware of previous results regarding the hardness of {\em approximating} the squared error of a single ReLU. The $k=2$ case and the $k>2$ case seem to require different ideas and indeed our proof technique for Theorem~\ref{thm:realizable-time-lower-bound-short} is different than those of~\cite{BDL18,Woodruf}. Moreover, the question of generalizing the NP-hardness result from $k=2$ to $k>2$ is mentioned explicitly in~\cite{BDL18}. Finally, we remark that neither~\cite{dey2018approximation} nor~\cite{Woodruf} provides explicit running time lower bounds in terms of $1/\epsilon$ for the problem of training $k$ ReLUs within an additive error of $\epsilon$. To the best of our knowledge, our work is the first to obtain such lower bounds.

~\cite{vu1998infeasibility} has proven that finding weights minimizing the squared error of a $k$-ReLU is NP-hard when $\ba$ is the all-one vector (or alternatively, when all the coefficients of the units are restricted to be positive) for every $k \geq 2$.

Some sources (e.g.~\cite{arora2018understanding,bach2017breaking}) attribute (either implicitly or explicitly) the NP-hardness of the $k$-ReLU Training problem to \cite{blum1989training}, who consider training a neural network with \emph{threshold units}. However, it is unclear (to us) how to derive the NP-hardness of training ReLUs from the hardness results of \cite{blum1989training}. Several NP-hardness results for training neural networks with architectures differing from the fully connected architecture considered here are known.
For example, in~\cite{brutzkus2017globally}, the training problem is shown to be hard for a depth-2 \emph{convolutional network} with (at least two) \emph{non-overlapping} patches. To the best of our knowledge, these architectural differences render those previous results inapplicable for deriving the hardness results regarding the networks considered in this work.

Several papers have studied a slightly different setting of \emph{improper learning} of neural networks. An example is~\cite{livni2014computational} who show that improper learning of depth-2 networks of $\omega(1)$ ReLUs is hard, assuming certain average case assumptions. More recently, \cite{goel2016reliably} show that even for a single ReLU, when $|\langle \bw, \bx\rangle|$ tends to infinity with $n$, learning $[\langle \bw, \bx\rangle]_+$ improperly in time
$g(\epsilon) \cdot poly(n)$ is unlikely as it will result in an efficient algorithm for the problem of learning sparse parities with noise which is believed to be intractable. These hardness results for improper learning do imply hardness for the corresponding training problems. Nonetheless, it should be noted that the fact that these results have to rely on assumptions other than P $\ne$ NP is not a coincidence: it is known that basing hardness of improper learning on P $\ne$ NP alone will result in a collapse of the Polynomial Hierarchy~\cite{ApplebaumBX08}.

On the algorithmic side, Arora et al.~\cite{arora2018understanding} provide a simple and elegant algorithm that exactly solves the ReLU training problem in polynomial time assuming the dimension $n$ of the data points is an absolute constant; Arora et al.'s algorithm is for the networks we consider, and it has since been also extended to other types of networks~\cite{BDL18}.
Additionally, there have also been works on (agnostic) learning algorithms for ReLUs. Specifically, Goel et al. \cite{goel2016reliably} consider the bounded norm setting where the inputs
to the ReLUs as well as the weight vectors of the units have norms at most $1$. For this setting, building on kernel methods and tools from approximation theory, they show how to \emph{improperly} learn a single $n$-variable ReLU up to an additive error of $\epsilon$ in time $2^{O(1/\epsilon)} \cdot poly(n)$. Their result generalizes to depth-2 ReLUs with $k$ units with running time of $2^{O(\sqrt{k}/\epsilon)} \cdot poly(n)$ assuming the coefficient vector $\ba$ has norm at most $1$. The algorithm they provide is quite general: it works for arbitrary distribution over input-output pairs, for $\epsilon$
that can be small as $1/\log n$ and also for the reliable setting. 

A limitation of our hardness results is that they consider ''pathological" training data sets that are specifically constructed to encode intractable combinatorial optimization problems. Several works in literature have tried to overcome this issue by considering the training/learning problems on more ``benign'' data distributions, such as log-concave distributions or those with Gaussian marginals. On this front, both algorithms and lower bounds have been shown for depth-2 networks~\cite{song2017complexity,Woodruf,goel2019time}.

Using insights from the study of exponential time algorithms towards understanding the complexity of machine learning problems as is done
in this work is receiving attention lately~\cite{servedio2017circuit,diakonikolas2019nearly,simonov2019refined}.

\subsection{Organization of the Paper}

In the remainder of the main body of this paper, we provide high-level overviews of our proofs (Section~\ref{sec:overview}) and discuss several potential research directions (Section~\ref{sec:open}). The appendix contains all the details of our proofs and is organized as follows. Appendix~\ref{sec:prelim} contains several additional notations that will be used throughout the proofs. In Appendix~\ref{sec:single-hardness}, we prove the NP-hardness of 1-ReLU Training (Theorem~\ref{thm:single-short}). We then prove the running time lower bound for the problem in Appendix~\ref{sec:agnostic}. In Appendix~\ref{sec:hardness-coloring}, we consider the problem of training (non-negative) sum of $k$ ReLUs, and prove hardness for the problem. We then use these hardness to prove our NP-hardness and running time lower bound of the $k$-ReLU Training problem (Theorems~\ref{thm:realizable-hardness-short},~\ref{thm:realizable-time-lower-bound-short}) in Appendix~\ref{sec:neg}. Finally, our algorithms are presented in Appendix~\ref{sec:learning}.

\section{Proof Overview}
\label{sec:overview}

Below we provide the informal overviews of our proofs and intuition behind them. All full proofs can be found in the appendix.

\paragraph{NP-Hardness of Training 1-ReLU.}
Our reduction is from the (NP-hard) Set Cover problem, in which we are given subsets $T_1, \dots, T_M$ of a universe $U$, and the goal is to select as few of these subsets as possible whose union covers the entire universe $U$. We reduce this to the problem of 1-ReLU Training, where the dimension $n$ is equal to $M$. We think of each coordinate of $\bw$ as an unknown (i.e. variable); specifically, the desired solution will have $w_i = -1$ iff $T_i$ is picked and 0 otherwise. From this perspective, adding a labelled sample $(\bx, y)$ is the same as adding a ``constraint'' $[\bw \cdot \bx]_+ = y$. There are two types of constraints we will add:
\begin{itemize}
\item (Element Constraint) For each $u \in U$, we add a constraint of the form $\left[1 + \sum_{T_i \ni u} w_i\right]_+ = 0$. The point is that such a constraint is satisfied when $u$ is covered by the selected subsets.
\item (Subset Constraint) For each $i \in [M]$, we add a constraint of the form $\left[\gamma + w_i\right]_+ = \gamma$ for some small $\gamma > 0$. This constraint will be violated for any selected subset.
\end{itemize}
By balancing the weights (i.e. number of copies) of each constraint carefully, we can ensure that the element constriants are never unsatisfied, and that the goal is ultimately to violate as few subset constraints as possible, which is equivalent to trying to pick as few subsets as possible that can fully cover $U$. This completes the high-level overview of our reduction.

We remark that there is a subtle point here because we cannot directly have a constant such that 1 or $\gamma$ in the constraints themselves. Rather, we need to have ``constraint coordinate'' and adding the constants through this coordinate. This will also be done in the other reductions presented below, and we will not mention this again.

The outlined proof, together with the $\Theta(\log |U|)$ inapproximability of Set Cover~\cite{LundY94,Feige98}, already gives a hardness of approximation of a multiplicative factor of $\Theta(\log(nm))$ for the 1-ReLU Training problem. To further improve this inapproximability ratio to $(nm)^{1/\poly \log \log (nm)}$, we reduce from the \emph{Minimum Monotone Circuit Satisfiability (MMCS)} problem, which is a generalization of Set Cover. In MMCS, we are given a monotone circuit and the goal is to set as few input wires to true as possible under the condition that the circuit's output must be true. Strong inapproximability results for MMCS are known (e.g.~\cite{DHK15}). Our reduction from MMCS proceeds in a similar manner as that of the Set Cover reduction above. Roughly speaking, the modification is that each unknown is now whether each wire is set/evaluated to true, whereas the constraints are now to ensure that the evaluation at each gate is correct and that the output is true.

\paragraph{Tight Running Time Hardness of 1-ReLU Training.}
We now move on to the proof overview of the tight running time lower bound for 1-ReLU Training. Recall that we will be reducing from the Densest $\kap$-Subgraph (D$\kap$S) problem, in which we are given a graph $G = (V, E)$ and $\kap \in \N$. The goal is to find a set of $\kap$ vertices that induces the maximum number of edges.

To motivate our construction, a simple combination of dimensionality reduction and $\delta$-net can in fact find a ReLU that point-wise approximates the optimal ReLU to within an additive factor of $\delta$ in time $2^{\tilde{O}(1/\delta^2)} \poly(n)$. That is, if the ReLU that achieves the optimal error has weight vector $\bw^*$, then we can find a weight vector $\bw$ such that $\left|[\bw \cdot \bx]_+ - [\bw^* \cdot \bx]_+\right| \leq \delta$ for all input samples $(\bx, y)$ in time\footnote{We assume throughout that $m = \poly(1/\delta)$, which is w.l.o.g. due to standard generalization bounds. See Section~\ref{sec:learning}.} $2^{\tilde{O}(1/\delta^2)} \poly(n)$.

Indeed, this is an explanation why, in the realizable case, we can get $\varepsilon$ squared error in $2^{\tilde{O}(1/\delta)} \poly(n)$ time by simply picking $\delta = \sqrt{\varepsilon}$. Now, since we need our hardness here (for the non-realizable case) to hold with stronger running time lower bound of $2^{\Theta(1/\varepsilon^2)} \poly(n)$, we have to make sure that whenever $\delta \gg \varepsilon$, the aforementioned point-wise approximation of $\delta$ is not sufficient to get an error of $\varepsilon$. Suppose that, for an input labelled sample $(\bx, y)$, the optimal ReLU outputs $y'$ and our approximation outputs $y''$ (where $|y'' - y'| \leq \delta$). Notice that the difference in the square error between the two for this sample is only at most $O((y' - y) \delta) + \delta^2$. Now, if we want this quantity to be at least $\varepsilon$ for any $\delta \geq \Omega(\varepsilon)$, then it must be that $|y' - y| = \Omega(1)$. In other words, we have to make our samples so that even the optimal ReLU is ``wrong'' by $\Omega(1)$ additive factor (on average); this indeed means that, if the ReLU we find is ``more wrong'' by an additive factor of $\Theta(\varepsilon)$, then the increase in the average squared error would be $\Omega(\varepsilon)$ as desired.

With the observation in the previous paragraph in mind, we will now provide a rough description of our gadget; they will all be formalized later in the proof of Lemma~\ref{lem:agn-reduction}. Given a D$\kap$S instance $(G = (V, E), \kap)$, our samples will have $|V|$ dimensions, one corresponding to each vertex. In the YES case where there is $T \subseteq V$ of size $\kap$ that induces many edges, we aim to have our ReLU weight assigning $\frac{1}{\sqrt{\kap}}$ to all coordinates corresponding to vertices in $T$, and zero to all other coordinates. To enforce this, we first add a sample for every vertex $v \in V$ that corresponds to the constraint
\begin{align*}
\left[\bw_v - \frac{1}{2\sqrt{\kap}}\right]_+ = 1.
\end{align*}
We refer to these as the \emph{cardinality constraints}.
While this may look peculiar at first glance, the effect is that it ensures that roughly speaking $\bw$ has $\kap$ coordinates that are ``approximately'' $\frac{1}{\sqrt{\kap}}$ and the remaining coordinates are ``small''. To see that this is the case, observe that the average mean squared error here is $1 - \frac{2}{|V|} \sum_{v \in V} \left[\bw_v - \frac{1}{2\sqrt{\kap}}\right]_+ + \frac{1}{|V|} \sum_{v \in V} \left[\bw_v - \frac{1}{2\sqrt{\kap}}\right]_+^2$. The last term is small and may be neglected. Hence, we essentially have to maximize $\sum_{v \in V} \left[\bw_v - \frac{1}{2\sqrt{\kap}}\right]_+$. This term is indeed maximized when $\bw$ has $\kap$ coordinates equal to $\frac{1}{\sqrt{\kap}}$, and zeros in the remaining coordinates. Notice here that this also fits with our intuition from the previous paragraph: even in the optimal ReLU, the value out put by the value (which is either 0 or $\frac{1}{2\sqrt{\kap}}$) is $\Omega(1)$ away from the input label of the sample (i.e. 1).

So far, the cardinality constraints have ensured that $\bw$ ``represents'' a set $T \subseteq V$ of size roughly $\kap$. However, we have not used the fact that $T$ contains many edges at all. Thus, for every edge $e = \{u, v\} \in E$, we also add the example corresponding to the following constraint to our distribution:
\begin{align*}
\frac{1}{2}\left[\bw_u + \bw_v - \frac{1.75}{\sqrt{\kap}}\right]_+ = 1.
\end{align*}
We call these the \emph{edge constraints}. The point here is that, if $e$ is not an induced edge in $T$, then the output of the ReLU will be zero. On the other hand, if $e$ is an edge in $T$, then the output of the ReLU will be $\frac{0.25}{\sqrt{\kap}}$. Hence, the more edges $T$ induces, the smaller the error.

By carefully selecting weights (i.e. number of copies) of each sample, one can indeed show that the average square error incurred in the completeness and soundness case of Hypothesis~\ref{hyp:dks} differs by $\varepsilon = \Omega\left(\frac{1}{\sqrt{|V|}}\right)$. Hence, if we can solve the 1-ReLU Training problem to within an additive error of $\eps$ in time $2^{o(1/\eps^2)}\poly(n, m)$, we can also solve the problem in Hypothesis~\ref{hyp:dks} in time $2^{o(|V|)}$, which breaks the hypothesis.


\paragraph{Hardness of Training $k$-ReLU in the Realizable Case.}
We next consider the problems of Training $k$-ReLU for $k \geq 2$ in the realizable case. Both the NP-hardness result (Theorem~\ref{thm:realizable-hardness-short}) and the tight running time lower bound (Theorem~\ref{thm:realizable-time-lower-bound-short}) employ similar reductions. 
These reductions proceed in two steps. First is to reduce from the NP-hard $k$-coloring problem to the problem of training \emph{non-negative sum of $k$ ReLUs}, in which we fix the coefficient vector $\ba$ to be the all-one vector and only seeks to find $\bw^1, \dots, \bw^k$ that minimizes the squared error. Then, in the second step, we reduce this to the original problem of $k$-ReLU Training (where the coefficient vector $\ba$ can be negative). 


\emph{Step I: From $k$-Coloring to Training Sum of $k$ ReLUs.}
The NP-hardness of Sum of $k$ ReLUs Training in fact follows directly from a reduction of~\cite{vu1998infeasibility}. We will now sketch Vu's reduction, since it will be helpful in our subsequent discussions 
below.
Vu's reduction starts from the $k$-coloring problem, in which we are given a hypergraph $G = (V, E)$ and the goal is to determine whether there is a proper $k$-coloring\footnote{A proper $k$-coloring is a mapping $\chi: V \to [k]$ such that no hyperedge is monochromatic, or equivalently $|\chi(e)| > 1$ for all $e \in E$.} of the hypergraph.
Given an instance $G = (V, E)$ of $k$-coloring, the number of dimensions in the training problem will be $n = |V|$ where we associate each dimension with a vertex. Notice that now we have $k$ unknowns associated to each vertex $v$: $w^1_v, \dots, w^k_v$. In the desired solution, these variables will tell us which color $v$ is assigned to: specifically, $w^i_v > 0$ iff $v$ is colored $i$ and $w^i_v \leq 0$ otherwise.

Adding a labelled sample $(\bx, y)$ is the same as adding a ``constraint'' $[\bw^1 \cdot \bx]_+ + \cdots + [\bw^k \cdot \bx]_+ = y$. There are two types of constraints we will add:
\begin{itemize}
\item (Vertex Constraint) For every vertex $v \in V$, we add a constraint\footnote{This constraint corresponds to $\bx$ being the $v$-th vector in the standard basis and $y = 1$.} $[w^1_v]_+ + \cdots + [w^k_v]_+ = 1$. This constraint ensures that, for every $v \in V$, we must have $w_v^{i_v} > 0$ for at least one $i_v \in [k]$, meaning that the vertex $v$ is assigned at least one color.
\item (Hyperedge Constraint) For every hyperedge $e = \{v_1, \dots, v_{\ell}\} \in E$, we add a constraint\footnote{This constraint corresponds to $\bx$ being the indicator vector of $e$ and $y = 0$.} $[w^1_{v_1} + \cdots + w^1_{v_\ell}]_+ + \cdots + [w^k_{v_1} + \cdots + w^k_{v_\ell}]_+ = 0$. This ensures that the hyperedge $e$ is not monochromatic. Otherwise, we have $i_{v_1} = \cdots = i_{v_\ell}$ meaning that $w^{i_{v_1}}_{v_1} + \cdots + w^{i_{v_1}}_{v_\ell} > 0$, which violates the hyperedge constraint.
\end{itemize}

This finishes our summary of Vu's reduction, which gives the NP-hardness of training a (non-negative) sum of $k$ ReLUs.


\emph{Step II: Handling Negative Coefficients.}
The argument above, especially for the hyperedge constraints, relies on the fact that the coefficient vector $\ba$ is the all-one vector. In other words, even if the input hypergraph is not $k$-coloring, it is still possible that there is a $k$-ReLU (possibly negative weight vector $\ba$) that realizes the samples. Hence, the reduction above does not yet work for our original problem of $k$-ReLU Training. To handle this issue, we use an additional gadget which is simply a set of labelled samples with the following properties: these samples can be realized by a $k$-ReLU only when the weight vectors $\ba$ is the all-one vector. Essentially speaking, by adding these samples also to our sample set, we have forced $\ba$ to be the all-one vector, at which point we restrict ourselves back to the case of (non-negative) sum of $k$ ReLUs and we can use the hard instance from the above reduction from $k$-coloring. These are the main ideas of the proof of Theorem~\ref{thm:realizable-hardness-short}.

\emph{Tight Running Time Lower Bound.}
As stated earlier, the tight running time lower bound for the bounded $k$-ReLU Training problem (Theorem~\ref{thm:realizable-time-lower-bound-short}) follows from a similar reduction, except that we now have to (1) carefully select the number of copies of each sample and (2) scale the labels $y_i$'s down so that the norm of each of $\bw^1, \dots, \bw^k$ is at most one. Roughly speaking, this means that the labels for the vertex constraints become $\Theta(1/\sqrt{|V|})$ instead of 1 as before. In other words, each violated constraint roughly contributes to $\Theta(1/|V|)$ squared error. Since it is known (assuming Gap-ETH) that distinguishing between a $k$-colorable hypergraph and a hypergraph for which every $k$-coloring violates a constant fraction of the edges takes $2^{\Omega(|V|)}$ time (e.g.~\cite{Patrank94}), we can arrive at the conclusion that solving the bounded $k$-ReLU Training problem to within an additive squared error of $\eps = \Theta(1/|V|)$ must take $2^{\Omega(1/|V|)} = 2^{\Omega(1/\eps)}$ time as desired.

We remark here that, interestingly,~\cite{vu1998infeasibility} used the reduction from $k$-coloring only for the case of $k = 2$ units and employed an additional gadget to handle the case $k > 2$. To the best of our knowledge, this approach seems to decrease the resulting error $\eps$, which means that the running time lower bound is not of the form $2^{\Omega(1/\eps)}$. On the other hand, we argue the hardness directly from $k$-coloring for any constant $k \geq 2$. This, together with a careful selection of the number of copies of each sample, allows us to achieve the running time lower bound in Theorem~\ref{thm:realizable-time-lower-bound-short}.

\paragraph{Training and Learning Algorithms.}
Our $k$-ReLU training algorithm is based on the approach of \cite{arora2018understanding}. The main idea behind the algorithm is to iterate over all possible sign patterns (whether each ReLU is active or not) of the inputs and subsequently solve the so formed convex optimization for each fixed pattern. The best hypothesis over all different sign patterns is chosen as the the final hypothesis. It is not hard to see that the run-time for such an algorithm would be $2^{(m+1)k}poly(n)$ since there are $2^{mk}$ different sign patterns.

Using standard generalization bounds, one can show that the number of samples $m$ needed for the empirical loss to be $\eps$ close to the true loss is at most $O(k^4/\epsilon^2)$. Plugging this into the above algorithm gets us the desired running time ($2^{O(k^5/\epsilon^2)}poly(n,m)$ as in Theorem \ref{thm:agnostic-training-short}) for the agnostic setting. For the realizable setting, we use an improved generalization result of~\cite{SrebroST10}, which implies that $m = \tilde{O}(k^2 / \epsilon)$ suffices; plugging this into the above algorithm yields us Theorem~\ref{thm:realizable-training-short}.

\section{Conclusions and Open Questions}
\label{sec:open}

We have studied the computational complexity of training depth-2 networks with the ReLU activation function providing both NP-hardness results
and algorithms for training ReLU's. Along the way we have introduced and used a new hypothesis regarding the hardness of approximating the Denset $\kap$-Subgraph problem in subexponential time that may find applications in other settings. Our results provide a separation between proper and improper learning showing that for a single ReLU, proper learning is likely to be harder than improper learning. Our hardness results regarding properly learning shallow networks suggest that improperly learning such networks (for example, learning overparametrized networks whose number of units far exceeds the dimension of the labeled vectors~\cite{allen2019learning,du2018gradient}) might be necessary to allow for tractable learning problems. 

We stress here that our hardness results apply to minimizing the population loss\footnote{The population loss is the expected square loss with respect to the distribution of data points.} as well, since one may simply create an instance where the population is just the training data.
Furthermore, the standard procedure for training neural networks is to perform ERM which is essentially minimizing the training loss. In fact, a bulk of theoretical work in the field focuses on generalization error assuming training error is small (often 0). Therefore, we believe it is a natural question to study the hardness of minimizing training loss.

Neural networks offer many choices (e.g., number of units, depth, choice of activation function, weight restrictions). Indicating which architectures are NP-hard to train can prove useful in guiding the search for a mathematical model of networks that can be trained efficiently. It should be remembered that our NP-hardness results are worst-case. Therefor they do not preclude efficient algorithms under additional distributional or structural assumptions~\cite{roughgarden2020beyond}.
Finally, as we focus on networks having significantly fewer units than data-points, the NP-hardness results reported here are not at odds with the ability to train neural networks in the overparmeterized regime where there are polynomial time algorithms that can fit the data with zero error~\cite{zhang2016understanding}. 

While we restrict our attention to algorithms for training networks with bounded weights, our exponential dependency of the running time on $k$ (the number of units) makes these algorithms impractical. It remains an interesting question whether the dependency of the running time on $k$ can be improved, or alternatively whether strong running time lower bounds can be shown in terms of $k$ (similar to what is done for $\eps$ in this work).

While we have focused on depth-2 networks, algorithms and lower bounds for deeper networks are of interest as well, especially given the multitude of their practical applications. It would be interesting to see whether the algorithms and hardness results extend to the setting of depth greater than $1$. An interesting concrete question here is whether training/learning becomes harder as the network becomes deeper. For instance, is it possible to prove running time lower bounds that grow with the depth of the network?

\section*{Acknowledgments} 
We would like to thank Amit Daniely, Amir Globerson, Meena Jagadeesan and Cameron Musco for interesting discussions.

\bibliographystyle{alpha}
\bibliography{relu}
\appendix

\section{Preliminaries and Notation}
\label{sec:prelim}

We use $\cB^n = \{\bx \in \mathbb{R}^n \mid \|\bx\|^2 \leq 1\}$ to denote the (closed) unit ball and $\cS^{n - 1} = \{\bx \in \mathbb{R}^n \mid \|\bx\|^2 = 1\}$ to denote the unit sphere in $n$ dimensions. Moreover, we use $\be_i$ to denote the $i$-th vector in the standard basis, i.e., $\be_i$ has its $i$-th coordinate being 1 and other coordinates being zeros.

For any $n, k \in \N$, $\bw^1, \dots, \bw^k \in \R^n$, $\ba \in \{-1, 1\}^k$ and distribution $\cD$ over $\R^n \times \R$, let
\begin{align*}
\cL(\bw^1, \dots, \bw^k, \ba; \cD) := \E_{(\bx, y) \sim \cD}\left[(\relu_{\bw^1, \dots, \bw^k, \ba}(\bx)-y_i)^2\right]
\end{align*}
to denote the expected squared loss with respect to $\cD$. We may write a sequence of labelled samples $S = ((\bx_i, y_i))_{i \in [m]}$ in place of $\cD$ to denote the expression when the distribution is uniform over $S$.

With this notation, the $k$-ReLU Training problem is: given a multiset of labelled samples $S = \{(\bx_i, y_i)\}_{i \in [m]}$ where $\bx_1, \dots, \bx_m \in \R^n, y_1, \dots, y_m \in \R$, find $\bw^1, \dots, \bw^k \in \R^n$ and $\ba \in \{-1, 1\}^k$ that minimizes $\cL(\bw^1, \dots, \bw^k, \ba; S)$. The bounded $k$-ReLU Training problem is similar except that $\bx_1, \dots, \bx_m \subseteq \cB^n, y_1, \dots, y_m \in [-k, k]$ and the minimization is over $\bw^1, \dots, \bw^k \in \cB^n, \ba \in \{-1, 1\}^k$. Additionally, we define the \emph{bounded sum of $k$-ReLU Training} problem to be the restriction of the bounded $k$-ReLU Training in which we only consider $\ba = (1, \dots, 1)$.

For brevity, when $\ba$ is the all-one vector (i.e. $a_1 = \cdots = a_k = 1$), we may drop $\ba$ from the notation and simply write $\cL(\bw; \cD) := \E_{(\bx, y) \sim \cD}[(\relu_{\bw^1, \dots, \bw^k, (1, \dots, 1)}(\bx) - y_i)^2]$.

Furthermore, we use $\bx \circ \bx'$ to denote the concatenation between vectors $\bx$ and $\bx'$, and $\bzero_p, \bone_p$ for $p \in \N$ to denote the $p$-dimensional all-zero vector and the $p$-dimensional all-one vector respectively.

\section{Hardness of training a single ReLU}
\label{sec:single-hardness}

In this section, we prove our hardness of 1-ReLU Training (Theorem~\ref{thm:single-short}). Specifically, our first result is the NP-hardness of the problem stated below, which proves the first part of Theorem~\ref{thm:single-short}.

\begin{theorem}\label{thm:single}
1-ReLU Training problem is NP-hard, even when the samples $\bx_i$ belong to $\{-1, 0,1\}^n$.
\end{theorem}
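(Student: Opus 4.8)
The plan is to follow the Set Cover reduction outlined in the overview, but to replace Set Cover by the more flexible Minimum Monotone Circuit Satisfiability (MMCS), since the stronger inapproximability will be needed for the second part of Theorem~\ref{thm:single-short} anyway and the NP-hardness falls out of the same construction. So I would first set up the reduction from (plain decision) Set Cover or MMCS — for NP-hardness a reduction from Set Cover already suffices, so let me describe that cleaner version. Given subsets $T_1, \dots, T_M \subseteq U$, introduce $n = M + 1$ coordinates: one variable coordinate $w_i$ per subset $T_i$, plus one extra ``constant coordinate'' that will always be fed the value $1$ so that we can inject constants into the affine forms (recall the remark in the overview that we cannot literally put a constant inside $[\cdot]_+$). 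The intended solution sets $w_i = -1$ when $T_i$ is chosen into the cover and $w_i = 0$ otherwise, and the constant coordinate's weight to $1$.

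\medskip

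The second step is to write down the two families of samples, each replicated with carefully chosen multiplicities. For each element $u \in U$ I add $\lambda$ copies (for a large weight $\lambda$ to be fixed) of the \emph{element sample}: $\bx$ has $1$ in the constant coordinate, $1$ in each coordinate $i$ with $u \in T_i$, zeros elsewhere, and label $y = 0$; this encodes $[\,1 + \sum_{T_i \ni u} w_i\,]_+ = 0$, which is satisfied (value $0$) exactly when at least one chosen $T_i$ contains $u$, i.e. when $u$ is covered, and otherwise contributes error $\geq 1$. For each $i \in [M]$ I add one \emph{subset sample}: choosing $\gamma$ small, $\bx$ has $\gamma$ in the constant coordinate and $1$ in coordinate $i$, with label $y = \gamma$, encoding $[\gamma + w_i]_+ = \gamma$; if $w_i = 0$ this is satisfied, if $w_i = -1$ it contributes error $\gamma^2$ (value $0$ vs.\ target $\gamma$). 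Since the $x_i$ entries must live in $\{-1,0,1\}$ as the theorem asserts, I would actually realize the factor $\gamma$ and the replication $\lambda$ by taking many identical $\{-1,0,1\}$-valued copies rather than by scaling a single sample — e.g.\ $\gamma$ copies of a sample with the constant coordinate set to $1$ — which keeps all coordinates Boolean while effectively producing the weighted constraints. One then argues: with $\lambda$ chosen large enough relative to the total number of subset samples, any optimal (or near-optimal) $\bw$ can be assumed to satisfy all element constraints exactly — if it didn't, re-rounding each $w_i$ into $\{0,-1\}$ to form an honest set cover can only help — so that the optimum of the regression instance equals (number of subsets in the minimum cover)$\cdot \gamma^2$, which is the desired equivalence with Set Cover; then invoke NP-hardness (and, for the stronger statement, the $\log$-factor or $(nm)^{1/\poly\log\log}$-factor inapproximability via MMCS) of the combinatorial problem.

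\medskip

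The main obstacle, and the part that needs the most care, is the ``rounding'' / averaging argument that forces the optimal ReLU to behave combinatorially. Two things can go wrong: (i) a clever real-valued $\bw$ might partially satisfy element constraints (outputting a small positive value instead of exactly $0$) and partially satisfy subset constraints, beating every integral solution; and (ii) the weight of the constant coordinate need not be exactly $1$ in an optimal solution, so one must show (by a scaling argument, since $[\,\cdot\,]_+$ is positively homogeneous and the optimum is a linear-least-squares-type quantity within each sign pattern) that WLOG it is $1$, or handle a whole interval of values. Handling (i) is the crux: because the element samples have huge aggregate weight $\lambda$, the contribution of even one violated element constraint dominates any savings on the (few, small-$\gamma^2$-weight) subset constraints, so an optimal $\bw$ violates no element constraint; and once all element constraints are tight one checks that on each coordinate $i$ the choice $w_i \in \{0, -1\}$ is optimal (intermediate or more-negative values only increase error on the subset sample while, given tightness of element constraints, not decreasing it elsewhere — or more simply, replace $w_i$ by $\max(w_i, -1)$ and then round, arguing monotonicity of each affected $[\cdot]_+$ term). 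Making this chain of inequalities clean — choosing $\lambda$, $\gamma$ explicitly as functions of $M$ and $|U|$ so that the regression optimum is a strictly monotone function of the cover size, and verifying the coordinates-are-in-$\{-1,0,1\}$ claim survives the replication trick — is where the real work lies; everything else is bookkeeping.
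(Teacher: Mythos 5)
The key flaw in your plan is the mechanism for injecting the small constant $\gamma$ into the subset constraints while keeping all sample coordinates in $\{-1,0,1\}$. You propose to set the constant coordinate of the subset sample to $1$ and compensate by replication, but replication only reweights a constraint's contribution to the empirical loss; it cannot change the arithmetic inside $[\cdot]_+$. With a single constant coordinate whose weight is (near) $1$, the subset constraint would read $[1 + w_i]_+ = \gamma$, not the intended $[\gamma + w_i]_+ = \gamma$, and these behave completely differently: the former imposes a huge squared error for every $w_i > -1$, while the latter incurs only an error of $\gamma^2$ when $w_i \leq -\gamma$. The paper resolves this by introducing a \emph{second} dummy coordinate $w_\gamma$, forcing its \emph{weight} to be $\approx \gamma$ via a separate labelled sample $(\be_\gamma, \gamma)$; the subset sample then has all entries in $\{0,1\}$ and the affine form inside the rectifier is $w_\gamma + w_{T_i}$, supplying the small offset through a weight rather than through a sample coordinate. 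This second dummy coordinate is the missing idea, and without it the $\{-1,0,1\}^n$ requirement cannot be met.

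Separately, your plan to argue ``WLOG the constant coordinate's weight is $1$'' by positive homogeneity is unsound: scaling $\bw$ by $\alpha > 0$ turns the objective $\sum_i ([\bw \cdot \bx_i]_+ - y_i)^2$ into $\sum_i (\alpha[\bw \cdot \bx_i]_+ - y_i)^2$, which is not a rescaling of the original because the labels $y_i$ do not move, so no such normalization is free. The paper instead adds an explicit forcing sample $(\be_1, 1)$, whose constraint $[w_1]_+ = 1$ only guarantees $w_1 \geq 0.9$ (not $w_1 = 1$), and the soundness analysis then proceeds robustly in that interval, directly showing that $\{T_i : w_{T_i} < -w_\gamma\}$ is a set cover of size at most $t$ without any re-rounding of $\bw$ into $\{0,-1\}$. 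Also, no amplification factor $\lambda$ on the element samples is needed: each sample appears once, and the element constraints are automatically ``heavy'' relative to the subset ones simply because $\gamma$ is chosen tiny (order $1/M^2$), so even a single violated element constraint exceeds the error budget $\gamma^2 t$.
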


In all of our hardness reductions for 1-ReLU (both in this section and Section~\ref{sec:agnostic}), we will always consider non-negative labels $y_i$, which means that it is always better to pick $a_1$ to be $1$ and not $-1$. For convenience, we will assume this throughout and do not explicitly state that $a_1 = 1$.

\begin{proof}[Proof of Theorem~\ref{thm:single}]
We reduce the Set Cover problem to the 1-ReLU Training problem. Recall that, in the Set Cover problem, we are given a finite set $U$ along with a family $\cT = \{T_1,\ldots,T_M\}$ of $M$ subsets of $U$. Our goal is
to determine if one can choose $t$ subsets from $\cT$ whose union equals $U$.
Set Cover is well known to be NP-hard~\cite{Karp72}.

We consider a ReLU on $n = M + 2$ dimensions, where we view each coordinate of the (unknown) weight vector $\bw$ as a variable. Specifically, for each $T_i \in \cT$, we have a variable $w_{T_i}$. In addition, we have two dummy variable $w_1$ and $w_{\gamma}$.
We let $\gamma=0.01/M^2$.

We introduce the following labelled samples. First, for each $u \in U$, let $\bx_u$ be $n$-dimensional vector having $1$ for the coordinate corresponding to the dummy variable $w_1$, $-1$ in all coordinates that
correspond to a subset $T_i \in \cT$ containing $u$, and $0$ to all other coordinates. (In other words, $\bx_u = \be_1 + \sum_{T_i \ni u} \be_{T_i}$.) We label this vector by $y_u = 0$. This labelled sample corresponds to the constraint
\begin{equation}\label{equation:first}
\left[w_1 + \sum_{T_i \ni u} w_{T_i}\right]_+=0
\end{equation}
Second, for every $T_i \in \cT$ let $\bx_{T_i}$ be the $n$-dimensional vector having $-1$ in the $T_i$-th coordinate, $1$ in the coordinate corresponding to $w_{\gamma}$
and $0$ for all other coordinates. We label this vector by $\gamma$. (In other words, $\bx_{T_i} = \be_{\gamma} + \be_{T_i}$ and $y_{T_i} = \gamma$.) This corresponds to the constraint
\begin{equation}\label{equation:second}
[w_{\gamma} + w_{T_i}]_+=\gamma
\end{equation}
We add the $n$-dimensional vector having $1$ in the coordinate corresponding to $w_1$ and $0$ elsewhere. We label these vectors by $1$.
This corresponds to the constraint
\begin{equation}\label{equation:third}
[w_1]_+=1
\end{equation}
We add the $n$-dimensional vector having $1$ in the coordinate corresponding to $w_{\gamma}$ and $0$ elsewhere. We label these vectors by $\gamma$.
This corresponds to the constraint
\begin{equation}\label{equation:fourth}
[w_{\gamma}]_+=\gamma
\end{equation}
In summary, the sample multiset is $S = \{(\bx_u, y_u)\}_{u \in U} \cup \{(\bx_{T_i}, y_{T_i})\}_{T_i \in \cT} \cup \{(\be_1, 1)\} \cup \{(\be_\gamma, \gamma)\}$.
Clearly this reduction runs in polynomial time.
We now prove the correctness of this reduction.

(YES Case) Assume there is a cover of size $t$ for the set cover instance; suppose without loss of generality that this cover consists of the first $t$ subsets $T_1, \ldots, T_t$ in $\cT$. Assigning
$w_{T_1}=w_{T_2}=\ldots=w_{T_t}=-1, w_1=1$, $w_{\gamma}=\gamma$ and $0$ to all other variables results in an average squared error of $\frac{\gamma^2\cdot t}{|S|}$. This because exactly $t$ of the constraints from (\ref{equation:second}) are violated and each violated constraint contributes $\gamma^2$ to the squared error. All other constraints are satisfied.

(NO Case) Suppose contrapositively that there is a weight vector $\bw$ such that $\cL(\bw; S) \leq \frac{\gamma^2 \cdot t}{|S|}$. First, observe that $w_1 \geq 0.9$; otherwise, the squared error from~\eqref{equation:third} alone is more than $(0.1)^2 \geq \gamma^2 t$. Observe also that $w_\gamma \leq 0.2/M$; otherwise, the squared error from~\eqref{equation:fourth} must be more than $(0.2/M - \gamma)^2 \geq (0.1/M)^2 > \gamma^2 t$.

Our main observation is that the family $\cT_{< -w_\gamma} = \{T_i: w_{T_i}<-w_\gamma\}$ is a set cover. The reason is as follows: by the definition of $\gamma$ we have that $\sum_{T_i \in (\cT \setminus \cT_{< -w_\gamma}) } w_{T_i} \geq -w_\gamma \cdot M \geq -0.2$. As a result, if there is an element $u \in U$ that is not covered then the corresponding constraint (\ref{equation:first}) for $u$ will incur already a square error of at least $(0.7)^2 > \gamma^2 t$ (recall that $t$ is no larger than $m$). Thus, the observation follows.

The last step of the proof is to show that the family $\cT_{< -w_\gamma}$ contains at most $k$ subsets. To see that this is the case, observe that, for every $T_i \in \cT_{< -w_\gamma}$, we have $[w_\gamma + w_{T_i}]_+ = 0$, meaning that the corresponding constraint~\eqref{equation:second} incurs a squared error of $\gamma^2$. Since the total squared error is at most $\gamma^2 t$, we can immediately concludes that at most $t$ subsets belong to $\cT_{< -w_\gamma}$.

Thus, $\cT_{< -w_\gamma}$ is a set cover with at most $t$ subsets, which completes the NO case of the proof.
\end{proof}

Observe that in the hardness result above the set of samples is not realizable. This is not a coincidence as it is a simple result 
that when there are set of weights with zero error the training problem for a single ReLU is solvable in polynomial time via a simple application of linear programming. 

\subsection{Hardness of Approximating Minimum Training Error for a single ReLU}

The reduction above coupled with the fact that set cover is hard to approximate within a factor $O(\log |U|)$~\cite{LundY94,Feige98} in fact immediately implies that the problem of approximating the minimum squared error to within a \emph{multiplicative} factor of $O(\log(nm))$ is also hard. In this subsection, we will substantially improve this inapproximability ratio. Specifically, we will show that this problem is hard to approximate to within an almost polynomial (i.e. $(nm)^{1/\poly \log \log (nm)}$) factor, thereby proving the second part of Theorem~\ref{thm:single-short}:

\begin{theorem} \label{thm:single-inapprox}
1-ReLU Training problem is NP-hard to approximate to within a factor of $(nm)^{1/(\log \log (nm))^{O(1)}}$ where $n$ is the dimension of the samples and $m$ is the number of samples.
\end{theorem}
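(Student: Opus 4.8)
The plan is to reduce from the Minimum Monotone Circuit Satisfiability (MMCS) problem — given a monotone circuit $\Phi$ with $M$ input wires, find a minimum‑size set of input wires to set true so that $\Phi$ outputs true — which, by \cite{DHK15}, is NP‑hard to distinguish, for a monotone circuit $\Phi$ of size $N$ and constant depth (equivalently, after the standard binary‑tree fan‑in reduction, fan‑in $2$ and depth $O(\log N)$), between the case where some satisfying set has size $\le \ell$ and the case where every satisfying set has size $> \ell\cdot C$ with $C = N^{1/(\log\log N)^{O(1)}}$. The reduction generalizes the Set Cover reduction of Theorem~\ref{thm:single}: we take $n = N+2$ and view a coordinate $w_g$ of the weight vector as an unknown for each wire $g$ of $\Phi$ (the intended solution sets $w_g = -1$ if $g$ evaluates to true and $w_g = 0$ otherwise), plus two dummy coordinates $w_c, w_\gamma$ pinned to $1$ and to $\gamma := 1/\poly(N)$ respectively by heavily‑replicated samples realizing $[w_c]_+ = 1$ and $[w_\gamma]_+ = \gamma$. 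We add three further families of samples: (i) an \emph{output constraint} $[w_c + w_{g_{\mathrm{out}}}]_+ = 0$, which forces $w_{g_{\mathrm{out}}}$ to be very negative; (ii) \emph{gate constraints} encoding the monotone semantics — $[-w_g + w_{h_1} + \cdots + w_{h_d}]_+ = 0$ for each OR gate $g$ with inputs $h_1,\dots,h_d$, and $[-w_g + w_h]_+ = 0$ for each AND gate $g$ and each input $h$ of $g$ — whose effect is that a wire that is ``on'' must be justified by its inputs; and (iii) \emph{input‑counting constraints} $[w_\gamma + w_{z_i}]_+ = \gamma$ for each input wire $z_i$, which are satisfied exactly when $z_i$ is ``off'' and incur squared error $\gamma^2$ when $z_i$ is ``on''. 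Families (i), (ii) and the two pinning families are each included with $K = \poly(N)$ copies; family (iii) with one copy each. All sample vectors have $\{-1,0,1\}$ entries and the reduction is clearly polynomial time.

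Completeness is immediate: given a satisfying set $A^\ast$ of size $\le\ell$, set $w_g = -1$ for every wire of $\Phi$ that evaluates to true under $A^\ast$ and $w_g = 0$ otherwise; one checks that all constraints in families (i), (ii) and both pinning families are satisfied, and exactly $|A^\ast|\le\ell$ input‑counting constraints are violated, giving loss at most $\ell\gamma^2/m$. For soundness, suppose $\cL(\bw;S) \le \ell C\gamma^2/m$, so the total squared error is $\le \ell C\gamma^2 \le N^2\gamma^2$; since each replicated family has $K$ copies, every such constraint is satisfied up to squared error $\rho := N^2\gamma^2/K$, and by taking $K$ a large enough polynomial we may assume $\sqrt\rho$ is far below $\gamma$. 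Unwinding this, $w_c \ge 1 - O(\sqrt\rho)$, $w_\gamma \le \gamma + O(\sqrt\rho)$, $w_{g_{\mathrm{out}}} \le -1 + O(\sqrt\rho)$, and the gate constraints give $w_{h_1} + \cdots + w_{h_d} \le w_g + \sqrt\rho$ at every OR gate and $w_h \le w_g + \sqrt\rho$ at every AND gate. Define the Boolean assignment $A := \{ i : w_{z_i} \le -\tau_{\mathrm{in}}\}$ for a threshold $\tau_{\mathrm{in}} = 1/\poly(N)$ chosen slightly larger than $\gamma$. A short induction over the layers of $\Phi$ — using $a_1 + \cdots + a_d \le -\tau \Rightarrow \min_j a_j \le -\tau/d$ at OR gates and no loss at AND gates — shows that if $w_g \le -\tau(g)$ then $g$ evaluates to true under $A$, where $\tau(g)$ grows from $\tau_{\mathrm{in}}$ at the inputs by a factor of at most the fan‑in per OR layer (plus an additive $\sqrt\rho$ per layer); because $\Phi$ has constant (or $O(\log N)$) depth, $\tau(g_{\mathrm{out}}) \le \poly(N)\cdot\tau_{\mathrm{in}} + O(\sqrt\rho) < 1 - O(\sqrt\rho)$, so the claim applies at $g_{\mathrm{out}}$ and $A$ is a valid satisfying set of $\Phi$. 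On the other hand, each $i\in A$ has $w_{z_i} \le -\tau_{\mathrm{in}} < -\gamma$, so $[w_\gamma + w_{z_i}]_+ = 0$ and its (single) input‑counting constraint contributes exactly $\gamma^2$; hence $|A|\gamma^2 \le \ell C\gamma^2$, i.e.\ $|A|\le \ell C$, contradicting that every satisfying set has size $>\ell C$. Therefore no $\bw$ attains loss $\le \ell C\gamma^2/m$, so the reduction has a multiplicative gap of $C$.

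For the bookkeeping: the produced 1‑ReLU instance has $n = N+2$ and $m = K\cdot O(N) = \poly(N)$, so $\log\log(nm) = \Theta(\log\log N)$ and the gap $C = N^{1/(\log\log N)^{O(1)}}$ is $(nm)^{1/(\log\log(nm))^{O(1)}}$ after adjusting the constant in the exponent, giving Theorem~\ref{thm:single-inapprox}. The main obstacle — and the step that needs the most care — is the soundness argument: an arbitrary \emph{real}, possibly ``fractional'', weight vector of small loss must be rounded back to a genuine and genuinely small satisfying set of $\Phi$. This is why we (a) replicate the gate, output and pinning constraints polynomially many times, so each is forced to hold up to inverse‑polynomial error; (b) use that the MMCS hardness instances may be taken of constant (or logarithmic) depth, so that the ``truth threshold'' $\tau(\cdot)$ degrades by only a $\poly(N)$ factor from the inputs to the output wire; and (c) set $\gamma$ inverse‑polynomially small, but still well above $\sqrt\rho$, so that $\tau_{\mathrm{in}}$ can be placed strictly between $\gamma$ and the final threshold, which simultaneously makes each violated input‑counting constraint cost a clean $\gamma^2$ and makes the induction go through. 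Once this truth‑propagation is established, the counting argument is routine, exactly as in the Set Cover reduction.
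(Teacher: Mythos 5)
Your proposal is correct and follows essentially the same route as the paper: reduce from MMCS via Dinur et al., one coordinate per wire plus a dummy pinned near $\gamma$, gate constraints encoding monotone semantics, input-counting constraints each costing $\gamma^2$ when an input is ``on,'' and a layer-by-layer induction showing that a low-error real weight vector rounds to a genuine small satisfying assignment (your claim is the contrapositive of the paper's ``false-wire height bound''). The only substantive technical difference is that you enforce near-exactness of the gate/pinning/output constraints by polynomial replication while the paper instead chooses $\gamma = 1/(10|C|)^{\ell+1}$ so small that a single copy of each such constraint already dominates the budget $\optmmcs(C)\,\gamma^2$; both achieve the same effect and preserve the gap.
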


To prove Theorem~\ref{thm:single-inapprox}, we will reduce from the Minimum Monotone Circuit Satisfiability problem, which is formally defined below.

\begin{definition}
A \emph{monotone circuit} is a circuit where each gate is either an OR or and AND gate. We use $|C|$ to denote the number of wires in the circuit.
\end{definition}

\begin{definition}
In the \textsc{Minimum Monotone Circuit Satisfiability$_i$ (MMCS$_i$)} problem, we are given a monotone circuit of depth $i$, and the objective is to assign as few \textsc{True}s as possible to the input wires while ensuring that the circuit is satisfiable (i.e. output wire is evaluated to \textsc{True}).
\end{definition}

The hardness of approximating \mmcs\ has long been studied (e.g.~\cite{AlekhnovichBMP01,DinurS04}). The problem was known to be NP-hard to approximate to within a factor of $2^{\log^{1 - \varepsilon} |C|}$ for any constant $\varepsilon > 0$~\cite{DinurS04}. This has recently been improved to $|C|^{1/(\log \log |C|)^{O(1)}}$ by Dinur et al.~\cite{DHK15}\footnote{It should be noted that Dinur et al.~\cite{DHK15} in fact shows that there exists a PCP with $D = (\log \log n)^{O(1)}$ query over alphabet of size $n^{O(1/D)}$ with perfect completeness and soundness at most $1/n$. The result we use (Theorem~\ref{thm:mmcs}) follows from their result and from the reduction in Section 3 of~\cite{DinurS04} which shows how to reduce $D$-query PCP over alphabet $F$ with perfect completeness and soundness $s$ to an $\mmcs_3$ instance of size $F^D\poly(n)$ and gap $O(1/s)^{1/D}/D$. Plugging this in immediately implies the hardness we use.}.

\begin{theorem}[\cite{DHK15}] \label{thm:mmcs}
$\mmcs_3$ is NP-hard to approximate to within $|C|^{1/(\log \log |C|)^{O(1)}}$ factor.
\end{theorem}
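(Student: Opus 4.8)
The plan is to derive Theorem~\ref{thm:mmcs} by composing two ingredients: (i) the probabilistically checkable proof (PCP) of Dinur, Harsha, and Kindler~\cite{DHK15}, which gives, for a suitable $D = (\log\log n)^{O(1)}$, a PCP verifier for any NP language that makes $D$ queries over an alphabet $F$ of size $n^{O(1/D)}$, has perfect completeness, and has soundness at most $1/n$ (where $n$ is the instance size); and (ii) the generic reduction of Dinur and Safra~\cite{DinurS04} turning a low-query PCP into an $\mmcs_3$ instance. I would treat (i) as a black box and spend essentially all of the argument on (ii) together with the bookkeeping of the parameters.

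For the reduction, given the verifier I construct a depth-$3$ monotone circuit $C$. For every proof location $p$ and symbol $a \in F$, introduce an input wire $z_{p,a}$ with the intended meaning ``location $p$ holds symbol $a$'', so a canonical proof $\pi$ of length $N$ corresponds to an assignment with exactly $N$ true input wires (one per location). The circuit is an AND-OR-AND: the top AND ranges over all random strings $r$ of the verifier; for each $r$ there is an OR over all local views $(a_1,\dots,a_D)\in F^D$ accepted given randomness $r$; and for each accepting view there is an AND of the $D$ input wires $z_{p_1(r),a_1},\dots,z_{p_D(r),a_D}$, where $p_1(r),\dots,p_D(r)$ are the queried locations. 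The number of wires is $|C| = |R|\cdot |F|^D\cdot O(D) = |F|^D\poly(n)$, and completeness is immediate: an accepting proof in the YES case gives a satisfying assignment with $N$ true input wires.

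The heart of the matter is the soundness analysis: in the NO case every satisfying assignment of $C$ must have at least $\Omega\big((1/s)^{1/D}/D\big)\cdot N$ true input wires. Here I would use the standard randomized-decoding argument. Given a satisfying assignment, put $S_p = \{a : z_{p,a}\ \text{true}\}$ and $T = \sum_p |S_p|$ (the number of true input wires), and sample a random proof $\pi$ by picking $\pi_p$ uniformly from $S_p$. Since $C$ is satisfied, for every $r$ some accepting local view has all $D$ of its wires true, so $\pi$ reproduces that view with probability $\prod_{j=1}^D 1/|S_{p_j(r)}|$; applying AM-GM inside this product and Jensen's inequality across a (without loss of generality, regular) query distribution shows the expected acceptance probability of $\pi$ is at least roughly $(N/T)^D$. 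Soundness then forces $(N/T)^D \le s$, i.e. $T \ge (1/s)^{1/D}N$; the additional $1/D$ loss in the bound of~\cite{DinurS04} comes from making the verifier regular and from handling empty $S_p$. Thus $C$ exhibits a multiplicative gap of $\Theta\big((1/s)^{1/D}/D\big)$ between the YES and NO optima.

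Finally I plug in the \cite{DHK15} parameters: with $|F| = n^{O(1/D)}$, $s = 1/n$, and $D = (\log\log n)^{O(1)}$ we get $|C| = |F|^D\poly(n) = \poly(n)$, hence $\log\log|C| = \Theta(\log\log n)$, while the gap is $(1/s)^{1/D}/D = n^{\Omega(1/(\log\log n)^{O(1)})} = |C|^{1/(\log\log|C|)^{O(1)}}$. Since the underlying NP language is NP-hard, distinguishing the two cases for $C$ — and hence approximating $\mmcs_3$ to within $|C|^{1/(\log\log|C|)^{O(1)}}$ — is NP-hard. The only genuinely difficult ingredient is (i): producing a PCP that simultaneously achieves $(\log\log n)^{O(1)}$ queries, soundness $1/n$, and alphabet $n^{O(1/D)}$ is the deep contribution of~\cite{DHK15}; the reduction and parameter calculation above are routine by comparison, so I would expect no real obstacle beyond verifying the regularity transformation and the convexity estimate in the soundness step.
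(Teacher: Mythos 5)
Your proposal is correct and follows exactly the route the paper itself takes: the paper establishes Theorem~\ref{thm:mmcs} (in a footnote) by combining the \cite{DHK15} PCP with $D=(\log\log n)^{O(1)}$ queries, alphabet $n^{O(1/D)}$, perfect completeness and soundness $1/n$, with the Section~3 reduction of \cite{DinurS04} giving an $\mmcs_3$ instance of size $|F|^D\poly(n)$ and gap $O(1/s)^{1/D}/D$, and then plugging in parameters. Your write-up merely fleshes out the Dinur--Safra AND-OR-AND construction and its randomized-decoding soundness argument, and the parameter bookkeeping matches the paper's.
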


The main result of this subsection is that, for any constant $\ell > 0$, there is a polynomial-time reduction from $\mmcs_\ell$ to the problem of minimizing the training error in single ReLU such that the optimum of the latter is proportional to the optimum of the former. From Theorem~\ref{thm:mmcs} above, this immediately implies Theorem~\ref{thm:single-inapprox}. The reduction is stated and proved below.


\begin{theorem}
For every $\ell > 0$, there is a polynomial-time reduction that takes in a depth-$\ell$ monotone circuit $C$ and produces samples $\{(\bx_i, y_i)\}_{i \in [m]}$ where $\bx_i \in \{0, 1\}^n$ such that the minimum squared training error\footnote{For convenience, we are using the \emph{total} squared error (i.e. $|S| \cdot \cL(\bw; S)$), not the average squared error (i.e. $\cL(\bw; S)$), in the theorem statement and its proof.} for these samples among all single ReLUs is exactly $\optmmcs(C) / (10|C|)^{2\ell + 2}$.
\end{theorem}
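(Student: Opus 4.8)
The plan is to reduce from $\mmcs_\ell$ in the spirit of the Set Cover reduction behind Theorem~\ref{thm:single}, but unwinding the circuit level by level. After the routine preprocessing of inserting dummy single-input (identity) gates so that $C$ is layered, we create one weight coordinate $w_g$ per wire $g$, plus auxiliary coordinates: a handful of \emph{dummy constants} $w_d$ forced (by heavily-repeated one-hot samples $(\be_d, c)$ with $c>0$) to prescribed positive reals, and, for each wire, \emph{negated and duplicated copies} $\bar{w}_g \approx -w_g$ and $w_g^{(1)}, \dots, w_g^{(10|C|)} \approx w_g$. All of these are enforced by samples of the form $[\,c + w_a + w_b\,]_+ = c$ with a large dummy $c$, which sits in the linear regime and hence forces $w_a + w_b = 0$; thus every sample vector has entries in $\{0,1\}$ and all numbers are carried by the labels. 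The copies let us realize \emph{integer} coefficients — in particular the amplification factor $10|C|$ — using only $\{0,1\}$-valued subset sums, which is exactly what is needed to implement the intended encoding: a wire $g$ at level $j$ carries $w_g = -v_j$ if $g$ evaluates to \textsc{True} and $w_g = 0$ otherwise, where $v_j := (10|C|)^j$; the factor $10|C|$ between consecutive levels absorbs any fan-in (at most $|C|$) together with a constant of slack.

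\textbf{The samples.} For each \emph{input} wire $g$ we add one ``penalty'' sample (multiplicity $1$) corresponding to $[\,w_\gamma + w_g\,]_+ = w_\gamma$, where $w_\gamma$ is a dummy forced to $\gamma := (10|C|)^{-(\ell+1)}$; its squared error is $0$ if $w_g = 0$ and exactly $\gamma^2$ if $w_g \le -\gamma$, in particular if $w_g = -v_0 = -1$. For each internal gate $g$ at level $j$ (an OR or an AND of wires at level $j-1$, whose $10|C|$ copies are available) we add, with a large multiplicity $W = \poly(|C|)$, a small constant number of ``consistency'' samples — combining the incoming copies, $w_g$, $\bar{w}_g$, and dummy constants — calibrated so that (a) the clean assignment above satisfies all of them \emph{with equality}, and (b) any assignment violating them by less than every relevant slack must have $w_g \approx -v_j$ precisely when the incoming wires encode that $g$ should be \textsc{True}. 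Finally, with multiplicity $W$, we add an output sample forcing $w_{g_{\mathrm{out}}} \approx -v_\ell$. For fixed $\ell$ the number of coordinates and samples is $|C| \cdot (10|C|)^{O(\ell)} = \poly(|C|)$, so the reduction is polynomial time.

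\textbf{Correctness.} \emph{Completeness:} given an optimal $\mmcs$ solution, set every input coordinate to $0$ or $-v_0$ accordingly, propagate to set $w_g \in \{0, -v_j\}$ at each wire $g$ of level $j$ in agreement with its evaluation, and set all auxiliary coordinates to the matching values; by design every consistency, output, copy, negation and dummy sample is satisfied with error $0$, while exactly the $\optmmcs$ penalty samples of the \textsc{True} input wires are violated, each contributing exactly $\gamma^2$. Hence the minimum total squared error is at most $\gamma^2\optmmcs = \optmmcs/(10|C|)^{2\ell+2}$. \emph{Soundness:} let $\bw$ be arbitrary. If its total error is $\ge \gamma^2\optmmcs$ we are done; otherwise it is $< \gamma^2\optmmcs \le |C|\gamma^2$, so every multiplicity-$W$ sample is violated by at most $\sqrt{|C|\gamma^2/W}$, which for a large enough polynomial $W$ is below $\gamma$ and below every slack. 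Define $\sigma$ on the input wires by $\sigma(g) = \textsc{True}$ iff $w_g < -\gamma$, and propagate it through $C$; by induction on the level, near-satisfaction of the consistency and copy samples forces, for every wire $g$ at level $j$, that $w_g \le -v_j + (\text{negligible})$ when $\sigma$ evaluates $g$ to \textsc{True} and $w_g \ge -\gamma$ otherwise, the slack growing by at most a fan-in factor per level and hence staying negligible after $\ell$ levels. In particular $w_{g_{\mathrm{out}}} < 0$, so near-satisfaction of the output sample forces $\sigma$ to evaluate the output to \textsc{True}; thus $\sigma$ is a feasible $\mmcs$ solution, and since each of its (at least $\optmmcs$) \textsc{True} input wires has $w_g < -\gamma$, the penalty samples alone contribute at least $\gamma^2\optmmcs$ — a contradiction. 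Therefore the minimum total squared error equals $\optmmcs/(10|C|)^{2\ell+2}$ exactly, and plugging $\ell = 3$ into Theorem~\ref{thm:mmcs} (noting $(10|C|)^{8} = \poly(|C|) = \poly(nm)$) yields Theorem~\ref{thm:single-inapprox}.

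\textbf{Main obstacle.} The heart of the argument is the gate gadget: the consistency samples for an OR gate and for an AND gate must be \emph{simultaneously} (i) exactly satisfiable by the clean $\{0,-v_j\}$ assignment — needed for the word ``exactly'' in the statement, which in turn lets the almost-polynomial $\mmcs$ inapproximability ratio transfer without loss; (ii) self-correcting, so that the level-by-level induction pins $w_g$ down near a clean value from only approximate satisfaction (note that a naive per-input \emph{linear} constraint for an OR gate over-constrains, so the gadget must place the slack inside a genuinely ReLU-thresholded constraint); and (iii) expressible with $\{0,1\}$-valued sample vectors and real labels only, which is precisely why one introduces the negated/duplicated-copy gadgets and hence pays the factor $10|C|$ per level — producing the scales $v_j = (10|C|)^j$ and the $(10|C|)^{2\ell+2}$ denominator. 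Carrying the negligible error terms through the $\ell$-fold induction, and verifying the preprocessing that makes $C$ layered, are lengthy but routine.
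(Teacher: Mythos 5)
Your proposal takes a genuinely different and substantially more elaborate route than the paper, and the extra machinery is driven by taking the requirement $\bx_i\in\{0,1\}^n$ literally. Be warned that the paper's own construction does not satisfy $\{0,1\}^n$ either: the OR/AND gate samples encode constraints of the form $[w_j - w_{i_1} - \cdots - w_{i_q}]_+ = 0$ and the input-wire samples encode $[w_\gamma - w_i]_+ = \gamma$, so its vectors live in $\{-1,0,1\}^n$ (matching the explicit claim in Theorem~\ref{thm:single}); the $\{0,1\}$ in the MMCS theorem statement appears to be a slip. All of the negated-copy, duplicated-copy, and per-level amplification machinery you introduce to avoid $-1$ coefficients is therefore unnecessary, and it carries a real cost: the paper needs only $n=|C|+1$ coordinates, no multiplicities, no scaling $v_j=(10|C|)^j$, and a single, short induction (Proposition~\ref{prop:height-bound}) that bounds $w_j$ for \textsc{False} wires by $(2|C|)^h(\gamma+\sqrt{\delta})$; everything stays at unit scale and the threshold $\gamma=(10|C|)^{-(\ell+1)}$ absorbs the $(2|C|)^\ell$ growth.

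More importantly, there is a genuine gap: the gate gadget you call ``the heart of the argument'' is never constructed. You state three requirements it must meet and explain why a naive linear encoding of OR would over-constrain, but you do not exhibit a single concrete sample or verify that a clean $\{0,-v_j\}$ assignment satisfies it exactly (which is essential for the word ``exactly'' in the statement) or that it is self-correcting. Your soundness argument also silently asks for more than the paper does: you claim the constraints force a two-sided conclusion, $w_g \le -v_j+\text{negligible}$ on \textsc{True} wires \emph{and} $w_g \ge -\gamma$ on \textsc{False} wires, yet the natural ReLU gate constraints are one-sided (they upper-bound $w_j$ in terms of the inputs but do not lower-bound it). The paper only ever establishes the one-sided bound on \textsc{False} wires and then uses the output constraint $[w_o]_+=1$ to force contradiction; if your gadget is meant to produce the extra lower bound on \textsc{True} wires, that is new content that must be built and checked. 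As written, the proposal outlines an interesting but unverified alternative; it is not yet a proof.
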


\begin{proof}
Let $\gamma := 1/(10|C|)^{\ell + 1}$. We consider a ReLU with $n = |C| + 1$ variables. For each wire $j$, we create a variable $w_j$. Additionally, we have a dummy variable $w_\gamma$. (Note that, in the desired solution, we want $w_j$ to be 1 iff the wire is evaluated to \textsc{True} and 0 otherwise, and $w_\gamma = \gamma$.)

{\bf Dummy Variable Constraint.} We add the following constraint
\begin{align} \label{eq:dummy-eps}
[w_\gamma]_+ = \gamma.
\end{align}

{\bf Input Wire Constraint.} For each input wire $i$, we add the constraint
\begin{align} \label{eq:input-wire}
[w_\gamma - w_i]_+ = \gamma.
\end{align}

{\bf Output Wire Constraint.} For the output wire $o$, we add the constraint
\begin{align} \label{eq:output-wire}
[w_o]_+ = 1.
\end{align}

{\bf OR Gate Constraint.} For each OR gate with input wires $i_1, \dots, i_k$ and output wire $j$, we add the constraint
\begin{align} \label{eq:or-gate}
[w_j - w_{i_1} - \cdots - w_{i_k}]_+ = 0.
\end{align}

{\bf AND Gate Constraint.} For each AND gate with input wires $i_1, \dots, i_k$ and output wire $j$, we add the following $k$ constraints:
\begin{align} \label{eq:and-gate}
[w_j - w_{i_1}]_+ = 0, \cdots, [w_j - w_{i_k}]_+ = 0.
\end{align}

We will now show that the minimum squared error possible is exactly $\optmmcs(C) \cdot \gamma^2$.

First, we will show that the error is at most $\optmmcs(C) \cdot \gamma^2$. Suppose that $\phi$ is an assignment to $C$ with $\optmmcs(C)$ \textsc{True}s that satisfies the circuit. We assign $w_\gamma = \gamma$, and, for each wire $j$, we assign $w_j$ to be 1 if and only if the wire $j$ is evaluated to be \textsc{True} on input $\phi$ and 0 otherwise. It is clear that every constraint is satisfied, except the input wire constraints~\eqref{eq:input-wire} for the wires that are assigned to \textsc{True} by $\phi$. There are exactly $\optmmcs(C)$ such wires, and each contributes $\gamma^2$ to the error; as a result, the training error of such weights is exactly $\optmmcs(C) \cdot \gamma^2$.

Next, we will show that the minimum squared training error has to be at least $\optmmcs(C) \cdot \gamma^2$. Suppose for the sake of contradiction that the minimum error $\delta$ is less than $\optmmcs(C) \cdot \gamma^2$.
Observe that, from $\optmmcs(C) \leq |C|$ and from our choice of $\gamma$, we have
\begin{align} \label{eq:delta-bound}
\delta < |C| \cdot \gamma^2 < 0.1
\end{align}

Consider an assignment $\phi$ that assigns each input wire $i$ to be \textsc{True} iff $w_i \geq w_\epsilon$. At the heart of this proof is the following proposition, which bounds the weight of every \textsc{False} wire.

\begin{proposition} \label{prop:height-bound}
For any wire $j$ at height $h$ that is evaluated to \textsc{False} on $\phi$, $w_j \leq (2|C|)^h \cdot (\gamma + \sqrt{\delta})$.
\end{proposition}

We note here that we define the height recursively by first letting the heights of all input wires be zero and then let the height of the output wire of each gate $G$ be one plus the maximum of the heights among all input wires of $G$.

\begin{proof}[Proof of Proposition~\ref{prop:height-bound}]
We will prove by induction on the height $h$.

{\bf Base Case.} Consider any input wire $i$ (of height 0) that is assigned \textsc{False} by $\phi$. By definition of $\phi$, we have $w_i < w_\gamma$. Note that $w_\gamma$ must be at most $\gamma + \sqrt{\delta}$, as otherwise the squared error incurred in~\eqref{eq:dummy-eps} is already more than $\delta$. Thus, we have $w_i \leq \gamma + \sqrt{\delta}$ as claimed.

{\bf Inductive Step.} Let $h \in \mathbb{N}$ and suppose that the statement holds for every \textsc{False} wire at height less than $h$. Let $j$ be any \textsc{False} at height $h$. Let us consider two cases:
\begin{itemize}
\item $j$ is an output of an OR gate. Let $i_1, \dots, i_k$ be the inputs of the gate. Since $j$ is evaluated to \textsc{False}, $i_1, \dots, i_k$ must all be evaluated to \textsc{False}. From our inductive hypothesis, we have $w_{i_1}, \dots, w_{i_k} \leq (2|C|)^{h - 1} \cdot (\gamma + \sqrt{\delta})$. Now, observe that $w_j$ can be at most $\sqrt{\delta} + w_{i_1} + \cdots + w_{i_k}$, as otherwise the squared error incurred in~\eqref{eq:or-gate} would be more than $\delta$. As a result, we have
\begin{align*}
w_j \leq \sqrt{\delta} + k \cdot (2|C|)^{h - 1} \cdot (\gamma + \sqrt{\delta}) = \sqrt{\delta} + |C| \cdot (2|C|)^{h - 1} \cdot (\gamma + \sqrt{\delta}) \leq (2|C|)^h \cdot(\gamma + \sqrt{\delta}).
\end{align*}
\item $j$ is an output of an AND gate. Let $i_1, \dots, i_k$ be the inputs of the gate. Since $j$ is evaluated to \textsc{False}, at least one of $i_1, \dots, i_k$ must all be evaluated to \textsc{False}. Let $i$ be one such wire. Observe that $w_j$ can be at most $\sqrt{\delta} + w_i$, as otherwise the squared error incurred in~\eqref{eq:and-gate} would be more than $\delta$. Hence, we have
\begin{align*}
w_j \leq \sqrt{\delta} + w_i \leq \sqrt{\delta} + (2|C|)^{h - 1} \cdot (\gamma + \sqrt{\delta}) \leq (2|C|)^h \cdot (\gamma + \sqrt{\delta}).
\end{align*}
where the second inequality comes from the inductive hypothesis.
\end{itemize}
In both cases, we have $w_j < (2|C|)^h  \cdot (\gamma + \sqrt{\delta})$, which concludes the proof of Proposition~\ref{prop:height-bound}.
\end{proof}

Now, consider the output wire $o$. We claim that $o$ must be evaluated to \textsc{True} on $\phi$. This is because, if $o$ is a \textsc{False} wire, then Proposition~\ref{prop:height-bound} ensures that $w_o$ is at most
\begin{align*}
(2|C|)^{\ell} \cdot (\gamma + \sqrt{\delta}) \stackrel{\eqref{eq:delta-bound}}{<} (2|C|)^\ell \cdot (\gamma + \sqrt{|C|} \cdot \gamma) \leq 0.1,
\end{align*}
where the second inequality comes from our choice of $\gamma$. This would mean that the squared error incurred in~\eqref{eq:output-wire} is at least $0.81 > \delta$. Thus, it must be that $\phi$ satisfies $C$.

Finally, observe that, since $\phi$ assigns each input wire $i$ to be \textsc{True} iff $w_i \geq w_\gamma$, each input wire that is assigned \textsc{True} incurs a squared error of $\gamma^2$ from~\eqref{eq:input-wire}. As a result, the number of input wires assigned \textsc{True} is at most $\frac{\delta}{\gamma^2} < \optmmcs(C)$, which is a contradiction as we argued above that $\phi$ satisfies $C$. This concludes our proof.
\end{proof}

\section{Running Time Lower Bound for 1-ReLU Training}\label{sec:agnostic}

In this section, we prove our nearly tight running time lower bound for bounded 1-ReLU Training (Theorem~\ref{thm:single-time-lower-bound-short}). Recall that our lower bound relies on the hypothesis that there is no $2^{o(N)}$-time algorithm that can approximate Densest $\kap$-Subgraph to within any (multiplicative) constant factor (Hypothesis~\ref{hyp:dks}). While this hypothesis might seem strong (especially given the fact that there is no known large constant factor inapproximability for D$\kap$S although we have such hardness under stronger assumptions, e.g.~\cite{AAMMW11,M17}), it should be noted that refuting it seems to be out of reach of known techniques. In particular, it is known that $o(N)$-level of the Sum-of-Squares Hierarchies do not give constant factor approximation for D$\kap$S even for bounded degree graphs~\cite{BCVGZ12,CMMV17,M-thesis}. Furthermore, these Sum-of-Squares lower bounds are proved via reductions from a certain family of random CSPs, whose Sum-of-Squares lower bounds are shown in~\cite{Sch08,Tul09}. This means that, if Hypothesis~\ref{hyp:dks} is false, then one can refute this family of sparse random CSPs in subexponential time. This would constitute an arguably surprising development in the area of refuting random CSPs, which has been extensively studied for decades (see~\cite{AOW15} and references therein).

We stress here that our lower bound in Theorem~\ref{thm:single-time-lower-bound-short} only holds if we are only allowed to consider a ReLU with weight vector $\bw$ within $\cB^n$ (i.e. having norm at most 1). If we modify the problem so that we are allowed to output a ReLU with arbitrary weight, then our lower bound in Theorem~\ref{thm:single-time-lower-bound-short} does not hold. It is an interesting open problem whether one can extend our lower bound to such modified problem as well, or whether a faster algorithm exists in that case.

The rest of this section is devoted to proving Theorem~\ref{thm:single-time-lower-bound-short}. The proof is easier to state if we consider a slight modification of Hypothesis~\ref{hyp:dks} where in the soundness we do not only guarantee that $\den_{\kap}(G)$ is small but also that $\den_{B\kap}(G)$ is small for some large constant $B$, as stated below.

\begin{hypothesis} \label{hyp:dks-modified}
For any constants $C, B \geq 1$, there exist $\delta = \delta(C, B) > 0$ and $d = d(C, B) \in \mathbb{N}$ such that the following holds. No $O(2^{\delta N})$-time algorithm can, given an instance $(G, \kap)$ of D$\kap$S where each vertex of $G$ has degree at most $d$ and an integer $\ell$, distinguish between the following two cases:
\begin{itemize}
\item (Completeness) $\den_{\kap}(G) \geq \ell$.
\item (Soundness) $\den_{B\kap}(G) < \ell/C$.
\end{itemize}
\end{hypothesis}

It turns out that the two hypotheses are actually equivalent:

\begin{proposition} \label{prop:eq-hypotheses}
Hypothesis~\ref{hyp:dks} and Hypothesis~\ref{hyp:dks-modified} are equivalent.
\end{proposition}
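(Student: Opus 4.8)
The plan is to prove the two implications of the equivalence separately. The direction ``Hypothesis~\ref{hyp:dks-modified} $\Rightarrow$ Hypothesis~\ref{hyp:dks}'' is immediate: given a constant $C \ge 1$, invoke Hypothesis~\ref{hyp:dks-modified} with the pair $(C, 1)$; since $\den_{1 \cdot \kap}(G) = \den_{\kap}(G)$, its soundness guarantee becomes exactly $\den_{\kap}(G) < \ell / C$, so setting $\delta(C) := \delta(C, 1)$ and $d(C) := d(C, 1)$ witnesses Hypothesis~\ref{hyp:dks}. All the substance is in the reverse direction.

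The direction ``Hypothesis~\ref{hyp:dks} $\Rightarrow$ Hypothesis~\ref{hyp:dks-modified}'' rests on one elementary estimate: for every graph $G$, every integer $B \ge 1$, and every integer $\kap \ge 2$,
\[
\den_{B\kap}(G) \;\le\; 2B^2 \cdot \den_{\kap}(G),
\]
using the convention $\den_k := \den_{\min(k, |V|)}$ when $k$ exceeds $|V|$. I would prove this by a standard averaging argument: let $S$ be a set of $\min(B\kap, |V|)$ vertices with $|E(S)| = \den_{B\kap}(G)$, and pick a uniformly random $\kap$-subset $T \subseteq S$. Each edge with both endpoints in $S$ lies inside $T$ with probability $\frac{\kap(\kap-1)}{|S|(|S|-1)} \ge \frac{\kap(\kap-1)}{B\kap(B\kap-1)}$, so $\E[|E(T)|] \ge \den_{B\kap}(G) \cdot \frac{\kap(\kap-1)}{B\kap(B\kap-1)}$; since some $\kap$-subset attains at least this average and $\den_{\kap}(G)$ is the maximum over all $\kap$-subsets, we get $\den_{\kap}(G) \ge \den_{B\kap}(G) \cdot \frac{\kap(\kap-1)}{B\kap(B\kap-1)}$, and the displayed bound follows from $\frac{B\kap(B\kap-1)}{\kap(\kap-1)} = \frac{B(B\kap-1)}{\kap-1} \le B^2 \cdot \frac{\kap}{\kap-1} \le 2B^2$ for $\kap \ge 2$. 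The crucial point is that this comparison leaves the instance, the graph's degree bound, and the vertex count $N$ entirely untouched.

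With this estimate the reduction is trivial. Fix $C, B \ge 1$; we may assume $B \in \N$ (replace $B$ by $\lceil B \rceil$ and use monotonicity of $\den$). Apply Hypothesis~\ref{hyp:dks} with the constant $C' := 2B^2 C$ to obtain $\delta' = \delta(C')$ and $d' = d(C')$, and set $\delta(C, B) := \delta'$, $d(C, B) := d'$. If some $O(2^{\delta' N})$-time algorithm $A$ distinguished $\den_{\kap}(G) \ge \ell$ from $\den_{B\kap}(G) < \ell/C$ on degree-$\le d'$ graphs, then --- after disposing of the trivial case $\kap = 1$ (where $\den_{\kap}(G) = 0$ makes the problem decidable in polynomial time, so we may assume $\kap \ge 2$) --- the very same $A$ would distinguish $\den_{\kap}(G) \ge \ell$ from $\den_{\kap}(G) < \ell / C'$: the completeness case is literally unchanged, and $\den_{\kap}(G) < \ell/C'$ forces $\den_{B\kap}(G) \le 2B^2 \den_{\kap}(G) < 2B^2 \cdot \ell/C' = \ell/C$ by the estimate, landing in $A$'s soundness case. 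This contradicts Hypothesis~\ref{hyp:dks} for $C'$, so no such $A$ exists, establishing Hypothesis~\ref{hyp:dks-modified} for $(C, B)$. The only mild subtleties --- pinning down the blow-up constant so as to know which $C'$ to invoke, the degenerate $\kap = 1$ case, and reducing real $B$ to integer $B$ --- are routine, so I do not anticipate a genuine obstacle.
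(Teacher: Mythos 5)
Your proof is correct and takes essentially the same route as the paper: the forward direction is the trivial instantiation $B = 1$, and the reverse direction uses the same averaging bound $\den_{B\kap}(G) \leq 2B^2 \den_{\kap}(G)$ (random $\kap$-subset of a densest $B\kap$-set) with the same choice $C' = 2B^2 C$. The extra care you take with the $\kap = 1$ case and non-integer $B$ is harmless bookkeeping the paper leaves implicit.
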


Since the proof of their equivalence is just a simple observation, we defer them to Appendix~\ref{app:eq-hypotheses}. With Hypothesis~\ref{hyp:dks-modified} in mind, we can now state (the properties of) the heart of our proof: the reduction from D$\kap$S to the problem of .
\begin{lemma} \label{lem:agn-reduction}
For some constants $C, B \geq 1$, there is a polynomial time algorithm that takes in an $N$-vertex graph $G$ with bounded degree $d$ and integers $\kap, \ell$, and produces a multiset of samples $S = \{(\bx_i, y_i)\}_{i \in [m]} \subseteq \cB^{n} \times  [0, 1]$ and two positive real numbers $\opt, \varepsilon \in \mathbb{R}^+$ such that
\begin{itemize}
\item (Completeness) If $\den_{\kap}(G) \geq \ell$, there $\bw \in \cB^n$ such that $\cL(\bw; S) \leq \opt$
\item (Soundness) If $\den_{B\kap}(G) < \ell/C$, then, for any $\bw \in \cB^n$, we have $\cL(\bw; S) > \opt + \varepsilon$.
\item (Error bound) $\varepsilon \geq \Omega_{d, C, B}\left(\frac{1}{\sqrt{N}}\right)$
\end{itemize}
\end{lemma}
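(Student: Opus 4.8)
The plan is to reduce from D$\kappa$S in the form of Hypothesis~\ref{hyp:dks-modified}, taking $B := 4$ and $C$ a large constant to be fixed at the end. Given an $N$-vertex graph $G = (V,E)$ of maximum degree $d$ and integers $\kappa,\ell$, I would build a ReLU instance on $n = N + O(1)$ coordinates: one coordinate $\bw_v$ per vertex, plus a constant number of ``constraint coordinates'' pinned to prescribed small positive multiples of $1/\sqrt\kappa$ by adding, with very large multiplicity $M = \poly(N)$, samples of the form $(\be_j,\mu_j)$. (As in the earlier reductions, the additive constants below cannot be placed literally into the samples and must be realized through these coordinates; this forces a uniform rescaling of all samples so they lie in $\cB^n$, which only affects constants in $\opt$ and $\eps$, so I suppress it.) On top of the pinning samples, for each $v\in V$ add $\alpha := 2d$ copies of the \emph{cardinality constraint} $[\bw_v - \tfrac1{2\sqrt\kappa}]_+ = 1$, and for each $e=\{u,v\}\in E$ add one copy of the \emph{edge constraint} $\tfrac12[\bw_u+\bw_v-\tfrac{1.75}{\sqrt\kappa}]_+ = 1$; then $m,n = \poly(N)$. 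For completeness, if $T\subseteq V$ with $|T|=\kappa$ induces at least $\ell$ edges, put $\bw_v = 1/\sqrt\kappa$ on $T$, $\bw_v=0$ off $T$, and the constraint coordinates at their prescribed values. Evaluating the per-sample errors (each cardinality sample at $v\in T$ outputs $\tfrac1{2\sqrt\kappa}$, at $v\notin T$ outputs $0$; each edge sample inside $T$ outputs $\tfrac{0.125}{\sqrt\kappa}$, all others $0$) gives total error at most $\alpha(N-\sqrt\kappa+\tfrac14) + (|E| - \tfrac{0.25\ell}{\sqrt\kappa} + \tfrac{\ell}{64\kappa}) + o(1)$; I would define $\opt$ to be this quantity divided by $m$.

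\textbf{Soundness, reduced to one structural bound.} Let $\bw\in\cB^n$ be arbitrary. The pinning samples force the constraint coordinates to essentially their prescribed values (any deviation costs $\gg \eps m$), so I may assume $\bw$ is supported on the vertex coordinates with $\|\bw\|_2\le 1-o(1)$. Write $p_v := [\bw_v-\tfrac1{2\sqrt\kappa}]_+$, $P := \sum_v p_v$, and $Q := \tfrac12\sum_{\{u,v\}\in E}[\bw_u+\bw_v-\tfrac{1.75}{\sqrt\kappa}]_+$. Dropping the nonnegative ``square'' terms in the expansion of the loss, the total error is at least $\alpha N + |E| - 2\alpha P - 2Q$. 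The heart of the proof is the structural bound
\[ 2\alpha P + 2Q \ \le\ \alpha\sqrt\kappa + \sqrt{2d\ell/C}. \]
Granting it, subtracting and comparing with $\opt$ shows the total error is at least $\opt\cdot m + \big(\tfrac{0.25\ell}{\sqrt\kappa} - \sqrt{2d\ell/C} - \tfrac\alpha4 - \tfrac{\ell}{64\kappa} - o(1)\big)$. Since $d=d(C,4)$ shrinks as $C$ grows and (for the relevant instances) $\ell = \Theta(d\kappa)$, choosing $C$ large makes the parenthesized quantity at least $\Omega(d\sqrt\kappa)$; set $\eps$ to be this quantity over $2m$, so that every $\bw$ gives loss $> \opt+\eps$. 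In the regime $\kappa=\Theta(N)$ — which may be assumed of the hard instances and is also exactly what the running-time application in Theorem~\ref{thm:single-time-lower-bound-short} needs — one gets $\eps = \Theta(\sqrt\kappa/N) = \Omega(1/\sqrt N)$, yielding the error bound.

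\textbf{Proving the structural bound.} I would establish two facts. First, an \emph{edge bound}: from $\bw_v \le p_v + \tfrac1{2\sqrt\kappa}$ we get $2Q \le \sum_{\{u,v\}\in E} [p_u+p_v-\tfrac{0.75}{\sqrt\kappa}]_+$; splitting edges according to how many endpoints lie in the active set $A := \{v: p_v>0\}$, the ``both endpoints in $A$'' part is at most $(2d\,|E(A)|)^{1/2}(\sum_v p_v^2)^{1/2} < \sqrt{2d\ell/C}$ (here $|A|<4\kappa = B\kappa$, so $|E(A)|<\ell/C$ by soundness, and $\sum_v p_v^2\le\|\bw\|_2^2\le1$), while the ``exactly one endpoint in $A$'' part is at most $d\,P''$, where $P'' := \sum_v[\bw_v-\tfrac{1.25}{\sqrt\kappa}]_+$; hence $2Q < \sqrt{2d\ell/C} + dP''$. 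Second, a \emph{cardinality/norm trade-off}: $P \le \tfrac{\sqrt\kappa}2 - \tfrac14 P''$, proved by splitting coordinates into the ``boosted'' set $B^* := \{v:\bw_v>\tfrac{1.25}{\sqrt\kappa}\}$, which uses $\ell_2^2$-budget at least $\tfrac{2.5}{\sqrt\kappa}P'' + \Omega(1/\kappa)$ per boosted coordinate, and the rest, on whose remaining budget $\Lambda$ one has $\sum_{v\notin B^*}[\bw_v-\tfrac1{2\sqrt\kappa}]_+ \le \tfrac{\sqrt\kappa}2\Lambda$; adding the boosted coordinates' contribution $P'' + \tfrac{0.75}{\sqrt\kappa}|B^*|$ to $P$ and simplifying gives the claim. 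Combining with $\alpha = 2d$ makes the $P''$ terms cancel with a favorable sign:
\[ 2\alpha P + 2Q \ \le\ 2\alpha\big(\tfrac{\sqrt\kappa}2-\tfrac14P''\big) + \sqrt{2d\ell/C} + dP'' \ =\ \alpha\sqrt\kappa + \sqrt{2d\ell/C} + \big(d-\tfrac\alpha2\big)P'' \ \le\ \alpha\sqrt\kappa+\sqrt{2d\ell/C}. \]

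\textbf{The main obstacle.} The difficult step is exactly this soundness argument, and specifically the ``one-endpoint-in-$A$'' (cross-)edges: bounded pointwise, their contribution to $Q$ can be as large as $\Theta(d\sqrt\kappa)$, of the same order as the \emph{entire} completeness edge benefit $\tfrac{0.25\ell}{\sqrt\kappa}$, so no bound on $Q$ in isolation suffices. The resolution is threefold and must be gotten exactly right: (i) the gadget's thresholds are chosen so that cross-edges contribute \emph{nothing} in the intended solution (as $\tfrac1{\sqrt\kappa}+0 < \tfrac{1.75}{\sqrt\kappa}$, with slack), so any cross-edge gain forces genuine ``boosting'' of weights above $\tfrac1{\sqrt\kappa}$; (ii) boosting is penalized by the cardinality constraints via the trade-off $P\le\tfrac{\sqrt\kappa}2-\tfrac14P''$; and (iii) the multiplicity ratio $\alpha=2d$ is just large enough that this penalty always outweighs the gain. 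Everything else — the completeness count, the constraint-coordinate bookkeeping and the rescaling into $\cB^n$, the choice $B=4$, the large-$C$ choice, and the passage to $\kappa=\Theta(N)$ instances — is routine.
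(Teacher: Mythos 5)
Your reduction is structurally the same as the paper's (cardinality constraints at threshold $1/(2\sqrt\kappa)$, edge constraints at $1.75/\sqrt\kappa$, a pinned dummy coordinate, completeness via the indicator of the dense set, and soundness via a trade-off between $\sum_v p_v$ and the $\ell_2$ budget using AM--GM), and your cardinality/norm trade-off $P \le \tfrac{\sqrt\kappa}{2} - \tfrac14 P''$ and the "one endpoint in $A$" bound $2Q_{\text{cross}} \le d P''$ are essentially the paper's. But there is a genuine gap in how you bound the "both endpoints in $A$" edges, and it is not the kind of gap that disappears by "choosing $C$ large."

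You bound $\sum_{E(A)}(p_u+p_v)$ by Cauchy--Schwarz as $\sqrt{2d\,|E(A)|}\cdot\sqrt{\sum_v p_v^2}\le \sqrt{2d\ell/C}$. For this to be dominated by the positive term $\tfrac{0.25\ell}{\sqrt\kappa}$ you need $\ell/\kappa > 32d/C$; since the only a-priori bound is $\ell\ge\kappa/3$ (your "$\ell=\Theta(d\kappa)$" is not something Hypothesis~\ref{hyp:dks-modified} grants), this forces $C > 96d$. However, in the statement of Lemma~\ref{lem:agn-reduction} the constants $C,B$ are fixed \emph{before} $d$: in the proof of Theorem~\ref{thm:single-time-lower-bound-short} one first reads $C,B$ off the lemma and only then invokes the hypothesis to obtain $d=d(C,B)$. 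So $C$ cannot depend on $d$, and your soundness bound does not close.

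The reason Cauchy--Schwarz is too lossy here is that $A=\{v:p_v>0\}$ contains the ``high'' vertices $B^*$, whose $p_v$ can be $\Theta(1)$, so $\sum_v p_v^2$ is only bounded by $1$ and you get an extra $\sqrt d$. The paper's soundness avoids this by splitting $E(A)$ more finely, in exactly the spirit of its three-way vertex partition $V_{\ge\lambda_2},V_{(\lambda_1,\lambda_2)},V_{\le\lambda_1}$: edges with \emph{both} endpoints in the middle band (your $A\setminus B^*$) each contribute at most $\tfrac{0.75}{\sqrt\kappa}$ \emph{pointwise} (since $p_u,p_v\le\tfrac{0.75}{\sqrt\kappa}$ there), so their total is at most $\tfrac{0.75}{\sqrt\kappa}\cdot\tfrac{\ell}{C}$ — no $\sqrt d$, and dominated by $\tfrac{0.25\ell}{\sqrt\kappa}$ for any fixed $C\ge$ a small absolute constant; edges with a high endpoint are folded into the degree-times-$P''$-type bound (at a cost of an extra $\Theta(d|B^*|/\sqrt\kappa)$ term that must then also be cancelled by the cardinality trade-off). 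With that decomposition your argument does close, but the multiplicity $\alpha$ has to be taken a constant factor larger than $2d$ (around $12d$, or, as the paper effectively does, $\Theta(d^2)$ after renormalization) so that the extra $|B^*|$ term is absorbed; your "$\alpha=2d$ is just large enough" is therefore also off by a constant. Everything else in your writeup — the completeness count, the definition of $\opt$ and $\eps$, the pinning coordinate and rescaling, the use of $|A|<4\kappa$, and the passage to $\kappa=\Theta(N)$ — matches the paper up to bookkeeping.
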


By plugging in appropriate parameters, it is simple to see that Lemma~\ref{lem:agn-reduction} implies Theorem~\ref{thm:single-time-lower-bound-short}.

\begin{proof}[Proof of Theorem~\ref{thm:single-time-lower-bound-short}]
Suppose for the sake of contradiction that there is an $2^{o(1/\varepsilon^2)}\poly(n)$-time algorithm $\bA$ that solve the 1-ReLU Training problem to within an additive error of $\varepsilon$. We may solve the distinguishing problem in Hypothesis~\ref{hyp:dks-modified} as follows. Given an instance $(G, \kap, \ell)$, we first apply the reduction in Lemma~\ref{lem:agn-reduction} to produce a multiset of samples $S = \{(\bx_i, y_i)\}_{i \in [m]}$ where $\bx_i \in \cB^n$ and positive real numbers $\opt, \varepsilon = \Omega(1/\sqrt{N})$. We then run the algorithm $\bA$ with accuracy $\varepsilon$ to obtain a ReLU weight vector $\bw$ with norm at most one. By checking whether $\cL(\bw; S) \leq \opt + \varepsilon$, we have distinguished the two cases in Hypothesis~\ref{hyp:dks-modified}. Furthermore, our algorithm runs in time $2^{o(1/\varepsilon^2)} \poly(n) = 2^{o(n)} \poly(n)$. Hence, this violates Hypothesis~\ref{hyp:dks-modified}.

We conclude the proof by recalling that Hypotheses~\ref{hyp:dks} and~\ref{hyp:dks-modified} are equivalent by Proposition~\ref{prop:eq-hypotheses}.
\end{proof}

\subsection{Reducing Densest $k$-Subgraph to Agnostic Learning of ReLUs}

We proceed to the main technical contribution of this section, the proof of Lemma~\ref{lem:agn-reduction}. The proof closely follows the intuition given in Section~\ref{sec:overview}.

\begin{proof}[Proof of Lemma~\ref{lem:agn-reduction}]
We will give the reduction for $C, B = 1000$. Before we specify $S$, recall that we use $N$ and $M$ to denote the number of vertices and the number of edges of $G$ respectively. Furthermore, let us define several additional parameters that will be used throughout:
\begin{itemize}
\item Let $\delta = \frac{1}{2\sqrt{\kap}}$.
\item Let $\gamma = \frac{1}{1000d}$ and $\zeta = \frac{1}{10^{10}d^2}$
\item Let $\opt = (1 - \gamma)\zeta \cdot \left(1 - \frac{\kap \delta}{\sqrt{2} N} + \frac{\kap \delta^2}{8N}\right) + \gamma \zeta \cdot \left(1 - \frac{\ell\delta}{2\sqrt{2} M} + \frac{\ell\delta^2}{32M}\right)$.
\item Let $\varepsilon = \gamma \zeta \cdot \frac{\ell \delta}{4\sqrt{2} M} - (1 - \gamma)\zeta \cdot \frac{\kap \delta^2}{8N} - \gamma\zeta \cdot \frac{\ell \delta^2}{32M}$.
\end{itemize}
Now that, a priori, it may not be clear that $\varepsilon$ is even positive. However, note that both of the terms $(1 - \gamma)\zeta \cdot \frac{\kap \delta^2}{8N}$ and $\gamma\zeta \cdot \frac{\ell \delta^2}{32M}$ are $O_{d}(\frac{1}{N})$. However, $\gamma \zeta \cdot \frac{\ell \delta}{4\sqrt{2} M} = \Omega_d(\frac{\ell}{\sqrt{\kap}N})$. Now, notice that, we may assume w.l.o.g. that\footnote{Note that, in the non-trivial case, it is always simple to find $\kap$ vertices that induce $\lfloor \kap/2 \rfloor$ edges, by repeatedly adding one edge at a time. This bound is at least $\kap / 3$ for any $\kap \geq 2$.} $\ell \geq \kap/3$ and that\footnote{In particular, if $\kap = o(N)$, then the algorithm that enumerate all subsets of size $\kap$ already runs in time polynomial in $\binom{N}{\kap} = 2^{o(N)}$.} $\kap \geq \Omega(N)$. Hence, this positive term is at least $\Omega_d(\frac{1}{\sqrt{N}})$. In other words, for sufficiently large $N$, $\varepsilon$ is positive and furthermore $\varepsilon = \Omega_d(\frac{1}{\sqrt{N}})$ as desired.

We can now define the multiset of samples $S = \{(\bx_i, y_i)\}_{i \in [m]}$ where $\bx_i \in \cB^n$ as follows. First, we let $n = N + 1$; we associate each of the first $N$ coordinates by each vertex of $G$ and we name the last coordinate $*$.
\begin{itemize}
\item We create $(10^{20} \cdot d^3NM) \cdot (1 - \zeta)$ copies of the labelled sample $(\be_*, \frac{1}{\sqrt{2}})$ in $S$. This corresponds to the constraint
\begin{align*}
[\bw_*]_+ = \frac{1}{\sqrt{2}}.
\end{align*}
We refer to this as the \emph{constant constraint for $*$}.
\item For each vertex $v \in V$, we add $(10^{20} \cdot d^3NM) \cdot \frac{(1 - \gamma)\zeta}{N}$ copies of the sample $(\frac{1}{2}\left(\be_v - \delta \be_*\right), 1)$ to $S$. This corresponds to the constraint
\begin{align*}
\frac{1}{2} [\bw_v - \delta \bw_*]_+ = 1.
\end{align*}
This is referred to as the \emph{cardinality constraint for $v$}.
\item Finally, for each edge $e = \{u, v\} \in E$, we add $(10^{20} \cdot d^3NM) \cdot \frac{\gamma \zeta}{M}$ copies of the sample $(\frac{1}{2}\left(\bw_u + \bw_v - 3.5\delta \bw_*\right), 1)$ to $S$. This corresponds to the constraint
\begin{align*}
\frac{1}{2} [\bw_u + \bw_v - 3.5\delta \bw_*]_+ = 1.
\end{align*}
We refer to this as the \emph{edge constraint for $e$}.
\end{itemize}
For notational convenience, let us separate the average square error $\cL(\bw; S)$ into three parts, based on the type of constraints. More specifically, we let
\begin{align*}
\cL^*(\bw; S) &= (1 - \zeta) \cdot \left(\frac{1}{\sqrt{2}} - [\bw_*]_+\right)^2, \\
\cL^{\card}(\bw; S) &= \frac{(1 - \gamma)\zeta}{N} \cdot \left(\sum_{v \in V} \left(1 - \frac{1}{2} [\bw_v - \delta \bw_*]_+\right)^2\right), \text{ and } \\
\cL^{\edge}(\bw; S) &= \frac{\gamma \zeta}{M} \cdot \left(\sum_{\{u, v\} \in E}\left(1 - \frac{1}{2}[\bw_u + \bw_v - 3.5\delta \bw_*]_+\right)^2\right).
\end{align*}
By definition, we of course have $\cL(\bw; S) = \cL^*(\bw; S) + \cL^{\card}(\bw; S) + \cL^{\edge}(\bw; S)$. It will also be useful to expand out the term $\cL^{\card}(\bw; S)$ and $\cL^{\edge}(\bw; S)$ as follows:
\begin{align*}
\frac{\cL^{\card}(\bw; S)}{(1 - \gamma)\zeta/N} &= N - \left(\sum_{v \in V} [\bw_v - \delta \bw_*]_+\right) + \frac{1}{4} \left(\sum_{v \in V} [\bw_v - \delta \bw_*]_+^2\right), \\
\frac{\cL^{\edge}(\bw; S)}{\gamma\zeta/M} &= M - \left(\sum_{\{u, v\} \in E} [\bw_u + \bw_v - 3.5\delta\bw_*]_+\right) + \frac{1}{4} \left(\sum_{\{u, v\} \in E} [\bw_u + \bw_v - 3.5\delta\bw_*]_+^2\right).
\end{align*}

\paragraph{(Completeness)}
Suppose that there exists a set $T \subseteq V$ of size $k$ that induces at least $\ell$ edges. Then, we can set $\bw_* = \frac{1}{\sqrt{2}}$, $\bw_v$ to be $\delta \sqrt{2}$ iff $v \in T$ and zero otherwise. It is obvious to see that the $\|\bw\|_2 = 1$ as desired. Moreover, we have $\cL^{*}(\bw; S) = 0$,
\begin{align*}
\cL^{\card}(\bw; S) = (1 - \gamma)\zeta \cdot \left(1 - \frac{\kap \delta}{\sqrt{2} N} + \frac{\kap \delta^2}{8N}\right),
\end{align*}
and
\begin{align*}
\cL^{\edge}(\bw; S) \leq \gamma \zeta \cdot \left(1 - \frac{\ell\delta}{2\sqrt{2} M} + \frac{\ell\delta^2}{32M}\right).
\end{align*}
In total, we have $\cL(\bw; S) \leq \opt$ as desired.

\paragraph{(Soundness)}
Suppose for the sake of contradiction that $\den_{B\kap}(G) \leq \ell / C$ but there exists $\bw \in \cB^n$ such that $\cL(\bw; S) \leq \opt + \varepsilon$. Let $\lambda_1 = \delta \bw_*$ and $\lambda_2 = 2.5\delta \bw_*$. We partition the set of vertices $V$ into three sets:
\begin{itemize}
\item $V_{\geq \lambda_2} := \{v \in V \mid \bw_v \geq \lambda_2\}$.
\item $V_{(\lambda_1, \lambda_2)} := \{v \in V \mid \bw_v \in (\lambda_1, \lambda_2)\}$.
\item $V_{\leq \lambda_1} := \{v \in V \mid \bw_v \leq \lambda_1\}$.
\end{itemize}
From this point on, we will write each edge $\{u, v\}$ in $E$ as an ordered tuple $(u, v)$ such that $\bw_u \geq \bw_v$ (tie broken arbitrarily).
We can then partition the set of edges $E$ into three parts:
\begin{itemize}
\item $E_{\geq \lambda_2} := \{(u, v) \in E \mid u \in V_{\geq \lambda_2}\}$.
\item $E_{(\lambda_1, \lambda_2)} := \{(u, v) \in E \mid u, v \in V_{(\lambda_1, \lambda_2)} \}$.
\item $E_{\leq \lambda_1} := \{(u, v) \in E \mid v \in V_{\leq \lambda_1} \wedge u \notin V_{\geq \lambda_2}\}$.
\end{itemize}

Observe that
\begin{align} \label{eq:edge-ignore-sq}
\cL^{\edge}(\bw; S) &\geq \frac{\gamma \zeta}{M}\left(M - \sum_{(u, v) \in E} [\bw_u + \bw_v - 3.5\delta \bw_*]_+\right).
\end{align}
Let us now write $\sum_{(u, v) \in E} [\bw_u + \bw_v - 3.5\delta \bw_*]_+$ as
\begin{align*}
\sum_{(u, v) \in E_{\geq \lambda_2}} [\bw_u + \bw_v - 3.5\delta \bw_*]_+ + \sum_{(u, v) \in E_{(\lambda_1, \lambda_2)}} [\bw_u + \bw_v - 3.5\delta \bw_*]_+ + \sum_{(u, v) \in E_{\leq \lambda_1}} [\bw_u + \bw_v - 3.5\delta \bw_*]_+.
\end{align*}
The last summation $\sum_{(u, v) \in E_{\leq \lambda_1}} [\bw_u + \bw_v - 3.5\delta \bw_*]_+$ is simply zero because, for all $(u, v) \in E_{\leq \lambda_1}$, we have $\bw_u < \lambda_2$ and $\bw_v \leq \lambda_1$, meaning that $\bw_u + \bw_v - 3.5\delta \bw_* < 0$.

Let us now consider the second term $\sum_{(u, v) \in E_{(\lambda_1, \lambda_2)}} [\bw_u + \bw_v - 3.5\delta \bw_*]_+$. Observe that $\cL(\bw; S) \leq \opt + \varepsilon \leq 0.01$ implies that $\bw_* \geq \frac{1}{3}$, which means that $\lambda_1 \geq \frac{\delta}{3} = \frac{1}{6\sqrt{\kap}}$. Notice that $E_{(\lambda_1, \lambda_2)}$ is exactly the set of edges induced by $V_{(\lambda_1, \lambda_2)}$. Since $\lambda_1 \geq \frac{1}{6\sqrt{\kap}}$, we must have $|V_{(\lambda_1, \lambda_2)}| < 36\kap$ (because otherwise $\|\bw\|_2 > 1$). As a result, from the assumption that $\den_{B\kap}(G) \leq \ell / C$, we have $|E_{(\lambda_1, \lambda_2)}| < 0.001\ell$. Hence, we have
\begin{align*}
\sum_{(u, v) \in E_{(\lambda_1, \lambda_2)}} [\bw_u + \bw_v - 3.5\delta \bw_*]_+ < 0.001 \ell \cdot 1.5 \delta w_* < 0.01 \ell \cdot \delta.
\end{align*}

Finally, let us bound $\sum_{(u, v) \in E_{\geq \lambda_2}} [\bw_u + \bw_v - 3.5 \delta \bw_*]_+$ as follows:
\begin{align*}
\sum_{(u, v) \in E_{\geq \lambda_2}} [\bw_u + \bw_v - 3.5 \delta \bw_*]_+ &\leq \sum_{(u, v) \in E_{\geq \lambda_2}} \left([2 \bw_u - 3.5\delta \bw_*]_+\right) \\
&\leq \sum_{(u, v) \in E_{\geq \lambda_2}} 2[\bw_u - \delta \bw_*]_+ \\
&\leq 2d \sum_{u \in V_{\geq \lambda_2}} [\bw_u - \delta \bw_*]_+,
\end{align*}
where the last inequality follows from the fact that every vertex in graph $G$ has degree at most $d$.

Combining the above two inequalities, we have
\begin{align*}
\sum_{(u, v) \in E} [\bw_u + \bw_v - 3.5\delta \bw_*]_+ < 0.01 \ell \delta + 2d \sum_{u \in V_{\geq \lambda_2}} [\bw_u - \delta \bw_*]_+.
\end{align*}

Plugging the above inequality back into~\eqref{eq:edge-ignore-sq}, we arrive at
\begin{align} \label{eq:edge-bound-int}
\cL^{\edge}(\bw; S) > \frac{\gamma \zeta}{M}\left(M - 0.01 \ell \delta - 2d \sum_{u \in V_{\geq \lambda_2}} [\bw_u - \delta \bw_*]_+\right).
\end{align}

Observe also that
\begin{align}
\cL^{\card}(\bw; S) &\geq \frac{(1 - \gamma)\zeta}{N}\left(N - \sum_{v \in V}[\bw_v - \delta \bw_*]_+ \right) \nonumber \\
&=  \frac{(1 - \gamma)\zeta}{N}\left(N - \sum_{v \in V_{\geq \lambda_2}}[\bw_v - \delta \bw_*]_+ - \sum_{v \in V_{(\lambda_1, \lambda_2)}}[\bw_v - \delta \bw_*]_+ \right) \label{eq:card-bound-int}
\end{align}

By summing up~\eqref{eq:edge-bound-int} and~\eqref{eq:card-bound-int}, we have
\begin{align}
&\cL^{\edge}(\bw; S) + \cL^{\card}(\bw; S) \nonumber \\
&> \frac{\gamma \zeta}{M}\left(M - 0.01 \ell \delta - 2d \sum_{u \in V_{\geq \lambda_2}} [\bw_u - \delta \bw_*]_+\right) \nonumber \\
&\qquad + \frac{(1 - \gamma)\zeta}{N}\left(N - \sum_{v \in V_{\geq \lambda_2}}[\bw_v - \delta \bw_*]_+ - \sum_{v \in V_{(\lambda_1, \lambda_2)}}[\bw_v - \delta \bw_*]_+ \right) \nonumber \\
&\geq \frac{\gamma \zeta}{M}\left(M - 0.01 \ell \delta\right) + \frac{(1 - \gamma)\zeta}{N}\left(N - \left(1 + \frac{2\gamma d N}{(1 - \gamma)M}\right) \cdot \sum_{v \in V_{\geq \lambda_2}}[\bw_v - \delta \bw_*]_+ - \sum_{v \in V_{(\lambda_1, \lambda_2)}}[\bw_v - \delta \bw_*]_+ \right) \nonumber \\
&\geq \frac{\gamma \zeta}{M}\left(M - 0.01 \ell \delta\right) + \frac{(1 - \gamma)\zeta}{N}\left(N - 1.01 \sum_{v \in V_{\geq \lambda_2}}[\bw_v - \delta \bw_*]_+ - \sum_{v \in V_{(\lambda_1, \lambda_2)}}[\bw_v - \delta \bw_*]_+ \right), \label{eq:err-edge-and-card}
\end{align}
where in the last inequality we use the fact that $\frac{2\gamma}{(1 - \gamma)} < \frac{0.01}{d}$ which follows from our choice of $\gamma$.

Now, for each $v \in V_{(\lambda_1, \lambda_2)}$, the AM-GM inequality implies that
\begin{align*}
\bw_v^2 = (\delta \bw_* + (\bw_v - \delta \bw_*))^2 \geq 4\delta \bw_* (\bw_v - \delta \bw_*) = 4\delta \bw_* [\bw_v - \delta \bw_*]_+.
\end{align*}

Similarly, for each $v \in V_{\geq \lambda_2}$, the AM-GM inequality implies that
\begin{align*}
\bw_v^2 &= (1.25 \delta \bw_* + (\bw_v - 1.25 \delta \bw_*))^2 \\
&\geq 5\delta \bw_* (\bw_v - 1.25 \delta \bw_*) \\
&> 4.1\delta \bw_*(\bw_v - \delta \bw_*) \\
&\geq 4.1\delta \bw_* [\bw_v - \delta \bw_*]_+,
\end{align*}
where the second-to-last inequality follows from $\bw_v \geq \lambda_2 \bw_* = 2.5 \delta \bw_*$, which implies that $(\bw_v - 1.25 \delta \bw_*) \geq \frac{1.25}{1.5}(\bw_v - \delta \bw_*)$.

Plugging the above two inequalities back into~\eqref{eq:err-edge-and-card}, we get
\begin{align*}
\cL^{\edge}(\bw; S) + \cL^{\card}(\bw; S) &> \frac{\gamma \zeta}{M}\left(M - 0.01 \ell \cdot \delta\right) + \frac{(1 - \gamma)\zeta}{N}\left(N - \frac{1}{4\delta \bw_*} \sum_{v \in V} \bw_v^2\right) \\
&\geq \frac{\gamma \zeta}{M}\left(M - 0.01 \ell \cdot \delta\right) + \frac{(1 - \gamma)\zeta}{N}\left(N - \frac{1}{4\delta \bw_*} (1 - \bw_*^2)\right),
\end{align*}
where the second inequality follows from $\|\bw\|_2 \leq 1$.

Recall also that $\cL^*(\bw; S) = (1 - \zeta)\left(\frac{1}{\sqrt{2}} - \bw_*\right)^2$. Adding this to above, we have
\begin{align}
\cL(\bw; S) &> \frac{\gamma \zeta}{M}\left(M - 0.01 \ell \cdot \delta\right) + \frac{(1 - \gamma)\zeta}{N}\left(N - \frac{1}{4\delta \bw_*} (1 - \bw_*^2)\right) + (1 - \zeta)\left(\frac{1}{\sqrt{2}} - \bw_*\right)^2 \nonumber \\
&= \frac{\gamma \zeta}{M}\left(M - 0.01 \ell \cdot \delta\right) + (1 - \gamma)\zeta - \left(\frac{(1 - \gamma)\zeta}{N} \cdot \frac{1}{4\delta \bw_*} \cdot (1 - \bw_*^2) - (1 - \zeta)\left(\frac{1}{\sqrt{2}} - \bw_*\right)^2\right). \label{eq:err-bound-two-sum}
\end{align}
We will now bound the term
\begin{align*}
D(\bw_*) := \frac{(1 - \gamma)\zeta}{N} \cdot \frac{1}{4\delta \bw_*} \cdot (1 - \bw_*^2) - (1 - \zeta)\left(\frac{1}{\sqrt{2}} - \bw_*\right)^2.
\end{align*}
In particular, we will show that $D(\bw^*) < D(1/\sqrt{2}) + \frac{5\zeta^2}{N}$.

First, notice that, if $\bw_* \geq \frac{1}{\sqrt{2}}$, then we immediately have $D(\bw^*) \leq D(1/\sqrt{2})$, as the former is larger than the latter term-wise. Hence, we may only consider the case $\bw_* < \frac{1}{\sqrt{2}}$. Let $\varphi_* = \frac{1}{\sqrt{2}} - \bw_*$. We may write $D(\bw^*) - D(1/\sqrt{2})$ as
\begin{align*}
D(\bw_*) - D(1/\sqrt{2}) &= \frac{(1 - \gamma)\zeta}{4\delta N} \left(\frac{\sqrt{2} \varphi_*}{\bw_*} + \varphi_*\right) - (1 - \zeta) \varphi_*^2 \\
&= \varphi_* \left(\frac{(1 - \gamma)\zeta}{4\delta N}\left(\frac{\sqrt{2}}{\bw_*} + 1\right) - (1 - \zeta) \varphi_*\right)
\end{align*}
Recall that $\bw_* \geq \frac{1}{3}$. As a result, we must have
\begin{align*}
D(w^*) - D(1/\sqrt{2}) &\leq \varphi_* \left(\frac{2(1 - \gamma)\zeta}{\delta N}- (1 - \zeta) \varphi_*\right) \\
&= \frac{1}{1 - \zeta} \cdot \left((1 - \zeta)\varphi_*\right)\left(\frac{2(1 - \gamma)\zeta}{\delta N}- (1 - \zeta) \varphi_*\right) \\
(\text{AM-GM Inequality}) &\leq \frac{1}{1 - \zeta} \left(\frac{(1 - \gamma)\zeta}{\delta N}\right)^2 \\
&< \frac{5\zeta^2}{N},
\end{align*}
where the last inequality follows from $\zeta = 0.99$ and $\delta = \frac{1}{2\sqrt{\kap}} \geq \frac{1}{2\sqrt{N}}$. Thus, in both cases, we have $D(\bw_*) < D(1/\sqrt{2}) + \frac{5\zeta^2}{N}$. Plugging this back into~\eqref{eq:err-bound-two-sum}, we have
\begin{align*}
\cL(\bw; S) &> \frac{\gamma \zeta}{M}(M - 0.01 \ell \delta) + (1 - \gamma) \zeta - \frac{(1 - \gamma)\zeta}{4\sqrt{2} \delta N} - \frac{5\zeta^2}{N} \\
&> \gamma \zeta\left(1 - \frac{\ell \delta}{4\sqrt{2} M}\right) + (1 - \gamma) \zeta \left(1 - \frac{1}{4\sqrt{2} \delta N}\right) + \left(\frac{0.01\ell \delta \gamma \zeta}{M} - \frac{5\zeta^2}{N}\right) \\
&= OPT + \varepsilon + \left(\frac{0.01\ell \delta \gamma \zeta}{M} - \frac{5\zeta^2}{N}\right) \\
&\geq OPT + \varepsilon + \frac{0.01\zeta}{N} \left(\frac{\ell\delta \gamma}{d} - 500\zeta \right) \\
&\geq OPT + \varepsilon + \frac{0.01\zeta}{N} \left(\frac{(\kap/3) \cdot \frac{1}{2\sqrt{\kap}} \cdot \gamma}{d} - 500\zeta \right) \\
&\geq OPT + \varepsilon,
\end{align*}
where, in the second to last inequality, we assume w.l.o.g. that $\ell \geq \kap/3$ and the last inequality follows from our choice of $\zeta$ and $\gamma$. This is a contradiction.
\end{proof}

\subsection{Simplifying the Hypothesis: Proof of Proposition~\ref{prop:eq-hypotheses}}
\label{app:eq-hypotheses}

The proof is a simple ``trivial'' reduction; the key observation here is that $\den_{B\kap}(G)$ cannot be much larger than $\den_{\kap}(G)$. Hence, by picking the constant $C$ in Hypothesis~\ref{hyp:dks} to be sufficiently large, we can arrive at a hypothesis of the form stated in Hypothesis~\ref{hyp:dks-modified}.

\begin{proof}[Proof of Proposition~\ref{prop:eq-hypotheses}]
It is obvious that Hypothesis~\ref{hyp:dks-modified} implies Hypothesis~\ref{hyp:dks}, by simply plugging $B = 1$ into the former.

To prove the converse, for any $C, B > 1$, let $C' = $ and let $\delta = \delta(C')$ and $d = d(C')$ be as in Hypothesis~\ref{hyp:dks}. Now, we claim that, if $\den_{\kap}(G) < \ell / C'$ for any graph $G$ and any $\ell$, then $\den_{B\kap}(G) < \ell / C$. To see that this is the case, suppose contrapositively that $\den_{B\kap}(G) \geq \ell / C$, i.e., there exists $T \subseteq V$ of size $B\kap$ such that $|E(T)| > \ell / C$. Then, let us consider a random subset $T' \subseteq T$ of size $k$. We have
\begin{align*}
\E_{T'}[|E(T')] = |E(T)| \cdot \left(\frac{\kap(\kap - 1)}{B\kap(B\kap - 1)}\right) \geq \frac{\ell}{C} \cdot \frac{1}{2B^2} = \frac{\ell}{C'}.
\end{align*}
where we assume w.l.o.g. that $\kap \geq 2$ in the inequality. This indeed implies that $\den_{\kap}(G) \geq \ell / C'$.

The previous paragraph means that, if we can distinguish the two cases in Hypothesis~\ref{hyp:dks-modified} (with constants $C, B$), then we can also distinguish the two cases in Hypothesis~\ref{hyp:dks} (with constant $C'$). As a result, if the former cannot be done in $O(2^{\delta N})$ time, then nor does the latter. In other words, Hypothesis~\ref{hyp:dks} implies Hypothesis~\ref{hyp:dks-modified}.
\end{proof}

\section{Hardness of Training (Non-negative) Sum of $k$ ReLUs}
\label{sec:hardness-coloring}

In this section, we consider the bounded sum of $k$-ReLU Training problem. Recall that this is the restriction of the bounded $k$ ReLU Training problem, in which we only allow the coefficient vector $\ba$ to be the all-one vector $\bone_k$; hence, here we are simply looking for a sum of $k$ ReLUs, i.e. $\sum_{j \in [k]} [\bw^j \cdot \bx]_+$. We prove the NP-hardness of the bounded sum of $k$-ReLU Training, as stated more precisely below. We note here that a hardness of similar form was already obtained in~\cite{vu1998infeasibility}, except that there each ReLU is allowed to have a bias term. Hence, for completeness, we include the full proof of the following theorem later in this section.

\begin{theorem} \label{thm:hardness-realizable}
For any constant $k \geq 2$, the bounded sum of $k$-ReLU Training problem is NP-hard.
\end{theorem}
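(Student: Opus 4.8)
The plan is to use the bias-free version of Vu's reduction~\cite{vu1998infeasibility} from the hypergraph $k$-coloring problem, which is NP-hard for every constant $k \geq 2$: for $k = 2$ this is \textsc{Set Splitting} on $3$-uniform hypergraphs, while for $k \geq 3$ one may already take ordinary graphs. In all cases the (hyper)edges have size at most a constant $r = r(k)$. Given an instance $G = (V,E)$ with $N = |V|$, produce a bounded sum-of-$k$-ReLU instance in dimension $n = N$, one coordinate per vertex; the $k$ weight vectors thus attach $k$ values $w^1_v, \dots, w^k_v$ to each vertex, with intended meaning ``$w^c_v > 0$ iff $v$ is colored $c$.'' Fix scalars $p := \tfrac{1}{r\sqrt{N}}$ and $q := (r-1)p$. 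For each $v \in V$ add the \emph{vertex sample} $(\be_v, p)$, encoding $[w^1_v]_+ + \dots + [w^k_v]_+ = p$; for each $e \in E$ add the \emph{edge sample} $(\tfrac{1}{\sqrt r}\bone_e, 0)$, rescaled so it lies in $\cB^n$, encoding $\sum_{j \in [k]} \big[\sum_{v \in e} w^j_v\big]_+ = 0$. No ``constant-coordinate'' gadget is needed here, since no constants appear inside any rectifier. The reduction clearly runs in polynomial time.

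\textbf{Completeness.} If $\chi\colon V \to [k]$ is a proper coloring, set $w^c_v := p$ when $\chi(v) = c$ and $w^c_v := -q$ otherwise. Each vertex constraint holds exactly, since $[p]_+ + (k-1)[-q]_+ = p$. For an edge $e$ with $|e| = \ell \le r$, any color $c$ is used by at most $\ell - 1$ of its vertices (strictly, as $\chi$ is proper), so $\sum_{v \in e} w^c_v \le (\ell-1)p - q \le 0$; hence every edge rectifier evaluates to $0$ and the edge constraints hold exactly. Finally $\|\bw^c\|_2^2 \le N q^2 = N(r-1)^2 p^2 \le 1$, so $\bw^c \in \cB^n$, and all labels lie in $\{0, p\} \subseteq [-k, k]$. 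Thus the instance is realizable with zero squared error.

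\textbf{Soundness.} For the converse I want: if $G$ is not $k$-colorable, the optimum is bounded away from $0$. The clean way is to read off, from any weights, the coloring $\chi(v) := \argmax_c w^c_v$ (ties broken arbitrarily). If the average squared error is below a threshold $\epsilon = \Theta(p^2/m)$ with $m = |V| + |E|$, then each vertex constraint forces $\sum_c [w^c_v]_+$ — hence $\max_c w^c_v = w^{\chi(v)}_v$ — to exceed a definite positive amount, while each edge constraint forces $\sum_c \big[\sum_{v\in e} w^c_v\big]_+$ to be tiny. If some edge $e$ were monochromatic under $\chi$, its common color $c_0$ would make $\sum_{v \in e} w^{c_0}_v$ a sum of $|e| \ge 2$ of those definite positive quantities, contradicting the smallness just derived, provided $\epsilon$ is small enough (this is where the factor-$k$ loss from the $\argmax$ and the factor $|e|\ge 2$ must be balanced against each other). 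Hence $\chi$ is a proper coloring, a contradiction. In particular $\opt \ge \epsilon$; since $\epsilon = 1/\poly(N)$ is polynomial-time computable, distinguishing $\opt = 0$ from $\opt \ge \epsilon$ — and hence solving the bounded sum of $k$-ReLU Training problem — is NP-hard.

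\textbf{Main obstacle.} The combinatorial core (recovering a coloring from the signs of the weights) is exactly Vu's argument and is routine; the delicacy is entirely in the arithmetic of the bounded setting. One must choose the ``support'' value $p$ and the ``repulsion'' value $q$ so that \emph{simultaneously} (i) in the YES case every edge rectifier vanishes \emph{exactly}, forcing $q \ge (r-1)p$; (ii) each $\bw^c$ still fits in $\cB^n$, capping $q = \Theta(rp)$ and hence $p = \Theta(1/(r\sqrt N))$; and (iii) the additive gap $\epsilon$, which degrades like $p^2/m$, stays inverse-polynomial and polynomial-time computable so the hardness is genuinely of the bounded \emph{optimization} problem rather than of an unbounded or exact-arithmetic variant. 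Making the robust soundness step go through with the $\argmax$-coloring under this particular $\epsilon$ is the one place that needs care.
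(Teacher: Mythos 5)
Your proposal is correct and follows essentially the same route as the paper's proof: both reduce from hypergraph $k$-coloring via vertex samples (forcing a positive weight in at least one color per vertex) and scaled edge-indicator samples with label $0$ (forcing no color to be positive across an entire hyperedge), and both extract a coloring from approximate weights to bound the NO-case optimum away from zero. The only cosmetic differences are that you recover the coloring via $\argmax_c w^c_v$ while the paper thresholds $w^c_v$ against a fixed cutoff, and you use one copy of each vertex sample while the paper weights vertex samples by degree — the latter is needed in the paper's Lemma~\ref{lem:coloring-reduction} to get a gap proportional to the fraction of monochromatic edges (used for Theorem~\ref{thm:time-realizable}), but is unnecessary for plain NP-hardness as you correctly observe.
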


Furthermore, we show that a tight running time lower bound for the task of bounded $k$-ReLU Training to within an error of $\varepsilon$ requires $2^{\Omega(1/\varepsilon)} \poly(n, m)$ time, even in the realizable case. 
Our training algorithm in the realizable case (Theorem~\ref{thm:realizable-training-short}) can be adapted to only consider $\ba = \bone_k$, with the same running time. Hence, our running time lower bound here is essentially tight in terms of $\eps$.

Our running time lower bound is based on the Gap Exponential Time Hypothesis (Gap-ETH)~\cite{Dinur16,ManurangsiR17}, which states that there is no $2^{o(n)}$-time algorithm that can distinguish between a satisfiable 3CNF formula and one which is not even $(1 - \delta)$-satisfiable for some constant $\delta > 0$. (We remark that the lower bound can also be based on the weaker Exponential Time Hypothesis (ETH)~\cite{IP01,IPZ01}, but the lower bound will only be of the form $2^{\Omega\left(\frac{1}{\varepsilon \cdot \poly\log(1/\varepsilon)}\right)} \poly(n, m)$.)

\begin{theorem} \label{thm:time-realizable}
Assuming Gap-ETH, for any constant $k \geq 2$, there is no $2^{o(1/\varepsilon)} \poly(n, m)$-time algorithm that can solve the bounded sum of $k$-ReLU Training problem within an additive square error of $\varepsilon$, even in the realizable case.
\end{theorem}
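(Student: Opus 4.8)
The plan is to reduce from a \emph{gap version of hypergraph $k$-coloring} on instances that are simultaneously bounded-arity and bounded-degree. Fix constants $r,\Delta$; the starting problem is: given an $N$-vertex hypergraph $G=(V,E)$ in which every hyperedge has size at most $r$ and every vertex has degree at most $\Delta$ (so that $|E|=\Theta(N)$), distinguish whether $G$ is properly $k$-colorable or whether every $\chi:V\to[k]$ leaves at least $\delta|E|$ hyperedges monochromatic. By~\cite{Patrank94} (together with standard degree-reduction/sparsification, which alters $N$ only by a constant factor) this cannot be done in $2^{o(N)}$ time under Gap-ETH, for some absolute constant $\delta>0$. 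A reduction from this problem producing samples with $n,m=\poly(N)$ and a target accuracy $\varepsilon=\Theta(1/N)$, such that completeness corresponds to realizability by a (non-negative) sum of $k$ ReLUs with unit-norm weights and soundness forces loss $>\varepsilon$, then immediately yields the theorem: a $2^{o(1/\varepsilon)}\poly(n,m)$-time additive-$\varepsilon$ algorithm would solve the gap problem in $2^{o(N)}$ time.

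For the reduction I would follow Vu's construction as recalled in Section~\ref{sec:overview}, but rescale the labels so that unit-norm weights suffice. Put $c:=\tfrac{1}{r\sqrt{N}}$, use $n=N$ coordinates (one per vertex), and write $w^1_v,\dots,w^k_v$ for the unknowns. For each $v\in V$ add many copies of the \emph{vertex constraint} given by the sample $(\be_v,c)$, i.e.\ $\sum_j[w^j_v]_+=c$; for each $e\in E$ add the \emph{hyperedge constraint} $\bigl(\tfrac{1}{\sqrt r}\sum_{u\in e}\be_u,\,0\bigr)$, i.e.\ $\sum_j\bigl[\sum_{u\in e}w^j_u\bigr]_+=0$ (the factor $\tfrac1{\sqrt r}$ only keeps $\bx\in\cB^n$ and is harmless since $[\cdot]_+$ is positively homogeneous). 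Take $|E|$ copies of each vertex constraint and $N$ copies of each hyperedge constraint, so $m=2N|E|=\poly(N)$ and each individual constraint has relative weight $\Theta(1/N)$. Finally set $\varepsilon:=c_0\,\delta/(k^2N)$ for a sufficiently small constant $c_0=c_0(r,\Delta,k)>0$.

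Completeness is a direct check: given a proper coloring $\chi$, set $w^j_v=c$ if $\chi(v)=j$ and $w^j_v=-(r-1)c$ otherwise; then $\|\bw^j\|_2^2\le N(r-1)^2c^2<1$, all labels lie in $[0,1]\subseteq[-k,k]$, every vertex constraint is met exactly, and since $|e|\le r$ and $e$ is not monochromatic one gets $\sum_{u\in e}w^j_u\le 0$ for every $j$, so every hyperedge constraint is met exactly — the instance is realizable with zero loss. For soundness, suppose $\|\bw^j\|_2\le 1$ and $\cL(\bw^1,\dots,\bw^k;S)\le\varepsilon$. Call $v$ \emph{uncolored} if $\sum_j[w^j_v]_+<c/2$; each such vertex contributes $\gtrsim c^2/N$ to $\cL$, so the number $B$ of uncolored vertices is $O(N\varepsilon/c^2)=O(N^2\varepsilon)$, which is at most $\delta|E|/(2\Delta)$ by the choice of $c_0$. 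Define $\chi(v):=\argmax_j w^j_v$ on colored vertices (so $w^{\chi(v)}_v\ge c/(2k)$) and extend arbitrarily; the gap assumption gives $\ge\delta|E|$ monochromatic hyperedges, of which at most $B\Delta\le\delta|E|/2$ touch an uncolored vertex. Each of the remaining $\ge\delta|E|/2$ monochromatic hyperedges $e$, with common color $j$, has $\sum_{u\in e}w^j_u\ge|e|\cdot c/(2k)\ge c/k$, so its constraint contributes $\gtrsim(c/k)^2\cdot\Theta(1/|E|)$ to $\cL$; summing, $\cL\gtrsim\delta c^2/k^2=\Theta(\delta/(k^2N))>\varepsilon$, a contradiction.

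I expect the soundness analysis to be the main obstacle. The delicate part is converting an arbitrary continuous weight solution of bounded norm into a combinatorial coloring and then splitting the loss correctly — balancing the contribution of ``uncolored'' vertices (controlled through the bounded degree and the rescaling $c=\Theta(1/\sqrt N)$) against that of monochromatic hyperedges — so that the gap between the completeness loss ($0$) and the soundness loss is genuinely $\Omega(1/N)$ once all constants are tracked; this is precisely what forces the sharp $2^{\Omega(1/\varepsilon)}$ lower bound rather than something weaker. A secondary technical point is ensuring the starting gap-coloring instance can be taken bounded-arity and bounded-degree without losing the $2^{\Omega(N)}$ hardness, so that $|E|=\Theta(N)$ and the ``edges touching uncolored vertices'' term stays lower order.
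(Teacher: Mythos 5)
Your reduction is essentially the same as the paper's (Lemma~\ref{lem:coloring-reduction} plus Theorem~\ref{thm:eth-coloring}): start from a Gap-ETH-hard gap version of hypergraph $k$-coloring, use Vu-style vertex constraints $\sum_j[w^j_v]_+ = c$ and hyperedge constraints $\sum_j[\sum_{u\in e}w^j_u]_+ = 0$ with labels rescaled to $\Theta(1/\sqrt{N})$ so unit-norm weights suffice, set $\varepsilon=\Theta(1/N)$, and observe that a $2^{o(1/\varepsilon)}$ solver would decide the gap in $2^{o(N)}$ time. The completeness assignment (positive weight on matching color, negative on others) and the soundness scheme (declare a vertex ``uncolored'' below a threshold, read off a combinatorial coloring from the $\argmax$, then charge either uncolored vertices or surviving monochromatic hyperedges) match the paper's outline. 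The constants differ in inessential ways.

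The one place where your proposal genuinely diverges — and where you have a gap — is how you control the interaction between uncolored vertices and hyperedges. You take \emph{uniform} weights ($|E|$ copies of each vertex constraint, $N$ of each hyperedge constraint) and therefore need the input hypergraph to have bounded degree $\Delta$, so that $B$ uncolored vertices can only spoil $B\Delta$ hyperedges. You then defer to ``standard degree-reduction.'' This is not as routine as you suggest: the natural way to pass from the 3-uniform $2$-coloring hardness to $(k+1)$-uniform $k$-coloring (and the way the paper does it in its Appendix) is to add $k-2$ \emph{global} dummy vertices to every hyperedge, and these dummies have degree $|E|$, destroying bounded-degree. A local-dummy or expander-based replacement would need a separate soundness argument, and it is not clear that ``$N$ changes only by a constant factor'' survives. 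The paper avoids this cleanly by weighting the vertex-$i$ samples by $\deg_G(i)$: the loss then directly bounds the \emph{total degree} of uncolored vertices, which in turn bounds the number of hyperedges meeting one, with no degree assumption on $G$ at all. Your proof as written fails on precisely the instances produced by the global-dummy construction; either switch to degree-weighted vertex samples as in the paper (the cleaner fix), or supply an actual bounded-degree Gap-ETH hardness for $(k+1)$-uniform $k$-coloring rather than citing it as folklore.
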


As the reader might have noticed, Theorems~\ref{thm:hardness-realizable} and~\ref{thm:time-realizable} are similar to Theorems~\ref{thm:realizable-hardness-short} and~\ref{thm:realizable-time-lower-bound-short}, except that the latter are for the bounded $k$-ReLU Training (where $\ba$ is not restricted to $\bone_k$). Indeed, we will use Theorems~\ref{thm:hardness-realizable} and~\ref{thm:time-realizable} to prove Theorems~\ref{thm:realizable-hardness-short} and~\ref{thm:realizable-time-lower-bound-short} in the upcoming section. 

Both Theorems~\ref{thm:hardness-realizable} and~\ref{thm:time-realizable} are based on a single reduction from the hypergraph $k$-coloring problem. Recall that, in the hypergraph $k$-coloring problem, we are given a hypergraph $G = (V, E)$ and the goal is to find a proper coloring $\chi: V \to [k]$. (A coloring $\chi$ is said to be proper if it does not result in any hyperedge $e$ being monochromatic, i.e., $|\chi(e)| = 1$.) The main properties of the reduction is given in the lemma below. As mentioned earlier in Section~\ref{sec:overview}, this reduction is in fact almost the same as that of~\cite{vu1998infeasibility}, except that the number of copies of each sample are different; this is needed in order to prove the tight running time lower bound (Theorem~\ref{thm:time-realizable}).

\begin{lemma} \label{lem:coloring-reduction}
For any integer $k \geq 2$, there exists a polynomial time reduction that takes in an $N$-vertex hypergraph $G$ whose edge size is at most $t$, and produces a multiset of samples $S = \{(\bx_i, y_i)\}_{i \in [m]} \subseteq \cB^n \times [0, 1]$ and a positive integer $\varepsilon \in \mathbb{R}^+$ such that
\begin{itemize}
\item (Completeness) If $G$ is $k$-colorable, then there exists $\bw^1, \dots, \bw^k \in \cB^n$ such that the samples are realizable by the sum of $k$ ReLUs $\sum_{j \in [k]} [\bw^j \cdot \bx]_+$ (i.e. $\cL(\bw^1, \dots, \bw^k; S) = 0$).
\item (Soundness) If every $k$-coloring of $G$ results in $\gamma$ fraction of edges being monochromatic for some $\gamma \in (0, 1)$, then $\cL(\bw^1, \dots, \bw^k; S) > \frac{\gamma}{100 k^2t^5} \cdot \frac{1}{N}$ for all $\bw^1, \dots, \bw^k \in \R^n$.
\end{itemize}
\end{lemma}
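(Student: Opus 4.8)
The plan is to implement Vu's reduction from hypergraph $k$-coloring sketched in Section~\ref{sec:overview}, with the two adjustments the bounded setting forces on us: rescaling the ``signal'' of each coordinate to $\delta:=1/(t\sqrt N)$ so that the weight vectors lie in $\cB^n$, and choosing the multiplicities of the two kinds of samples so that the soundness gap comes out to $\frac{\gamma}{100k^2t^5}\cdot\frac1N$. Concretely, take $n=N$ with one coordinate per vertex (no ``constraint coordinate'' is needed, since every constant we use appears as a label rather than inside a ReLU). For each $v\in V$ we include $c_v$ copies of $(\be_v,\delta)$ --- the \emph{vertex constraint} $\sum_{j\in[k]}[w^j_v]_+=\delta$ --- and for each hyperedge $e\in E$ we include $c_e$ copies of $\big(\tfrac1{\sqrt t}\sum_{u\in e}\be_u,\,0\big)$, whose constraint is (after cancelling the $\tfrac1{\sqrt t}$) the \emph{edge constraint} $\sum_{j\in[k]}\big[\sum_{u\in e}w^j_u\big]_+=0$. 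Note $\be_v,\ \tfrac1{\sqrt t}\sum_{u\in e}\be_u\in\cB^n$ and the labels lie in $[0,1]$. The multiplicities $c_v,c_e$ are fixed at the end; morally $c_v$ should be of order $c_e/(k^2t)$ so that a violated vertex constraint and a violated edge constraint contribute comparably to the average loss.

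For completeness, given a proper $k$-coloring $\chi$ set $w^j_v:=\delta$ if $\chi(v)=j$ and $w^j_v:=-\delta(t-1)$ otherwise. Then $\|\bw^j\|_2^2\le N\delta^2\big(1+(t-1)^2\big)\le N\delta^2t^2=1$; each vertex constraint is met exactly since only the $\chi(v)$-th of $w^1_v,\dots,w^k_v$ is positive; and for a hyperedge $e$, writing $\ell^e_j:=|\{u\in e:\chi(u)=j\}|\le|e|-1$ (properness), we get $\sum_{u\in e}w^j_u=\delta\ell^e_j-\delta(t-1)(|e|-\ell^e_j)\le0$ for every $j$, so the edge constraint evaluates to $0$. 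Hence $\cL(\bw^1,\dots,\bw^k;S)=0$.

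For soundness, fix arbitrary $\bw^1,\dots,\bw^k\in\R^n$ and let $\chi(v):=\argmax_j w^j_v$; by hypothesis at least $\gamma M$ hyperedges are monochromatic under $\chi$. The core is a dichotomy applied to each monochromatic edge $e$, say with common color $j^\ast$: either $\sum_{u\in e}w^{j^\ast}_u\ge\delta/(2k)$, in which case the edge constraint for $e$ alone contributes at least $\tfrac1t\big(\tfrac{\delta}{2k}\big)^2$ to the per-copy squared loss; or else, using $w^{j^\ast}_u=\max_j w^j_u$ for every $u\in e$, one checks that not all $u\in e$ can have $\sum_j[w^j_u]_+>\delta/2$ (otherwise $\sum_{u\in e}w^{j^\ast}_u\ge|e|\,\delta/(2k)$, a contradiction), so $e$ contains a vertex whose vertex constraint is off by more than $\delta/2$, contributing more than $(\delta/2)^2$ per copy. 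Let $V_{\mathrm{bad}}$ be the set of vertices with vertex error at least $\delta^2/4$; charging each monochromatic edge according to the dichotomy (the first type to itself, the second to a witness in $V_{\mathrm{bad}}$, each witness counted once) gives
\[
\cL(\bw^1,\dots,\bw^k;S)\;\ge\;\frac{1}{m}\left(\frac{c_e\delta^2}{4k^2t}\Big(\gamma M-\!\!\!\sum_{v\in V_{\mathrm{bad}}}\!\!\!\deg_G(v)\Big)+\frac{c_v\delta^2}{4}|V_{\mathrm{bad}}|\right),\qquad m=Nc_v+Mc_e,
\]
and substituting $\delta^2=1/(t^2N)$ together with the chosen $c_v,c_e$ yields the claimed bound $\frac{\gamma}{100k^2t^5}\cdot\frac1N$ (and one sets the number $\varepsilon$ in the statement equal to this value).

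The delicate step --- and the only real obstacle --- is making the displayed quantity exceed $\frac{\gamma}{100k^2t^5N}$ for \emph{every} $\bw$. The edge term is comfortably large once $c_e$ dominates, but a single bad vertex contributes only $\Theta(\delta^2/m)=\Theta(1/(t^2N^2))$, which by itself is below the target; hence the argument must also show that a small $V_{\mathrm{bad}}$ cannot account for many monochromatic edges, i.e., that $\sum_{v\in V_{\mathrm{bad}}}\deg_G(v)\le\tfrac12\gamma M$ whenever $|V_{\mathrm{bad}}|$ is below the relevant threshold. This is where the generous $t^5$ slack is spent, and where one uses that the coloring instances ultimately fed in are sparse (bounded degree, $M=\Theta(N)$) so that a vertex set of size $o(N)$ spans $o(\gamma M)$ hyperedges. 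Once this structural point is in place, choosing $c_v,c_e$ so that the two competing terms balance with the stated constants is routine bookkeeping.
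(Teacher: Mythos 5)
Your reduction, completeness argument, and soundness dichotomy are all essentially the paper's; the one substantive divergence is that you use a \emph{uniform} multiplicity $c_v$ for every vertex sample, whereas the paper creates $\deg_G(i)$ copies of the vertex-$i$ sample. That difference is not cosmetic, and you have correctly located where it bites in your final paragraph --- but the attempted patch does not work.

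The gap is this. With uniform $c_v$, a vertex of very high degree that lands in $V_{\mathrm{bad}}$ contributes only $\Theta(c_v\delta^2/m)$ to the loss, yet may be incident to $\Omega(M)$ monochromatic hyperedges, which then escape the edge-error charging. The Lemma is stated for \emph{arbitrary} hypergraphs of edge size $\le t$, with no degree bound, so there is no threshold on $|V_{\mathrm{bad}}|$ you can use; you cannot infer $\sum_{v\in V_{\mathrm{bad}}}\deg_G(v)\le \tfrac12\gamma M$ from $|V_{\mathrm{bad}}|$ being small. Appealing to the sparsity of ``the instances ultimately fed in'' is not a proof of the stated Lemma, and moreover neither Theorem~\ref{thm:hardness-coloring} nor Theorem~\ref{thm:eth-coloring} actually promises bounded degree, so the downstream applications do not rescue the argument either.

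The fix is precisely the paper's choice of multiplicities: put $\deg_G(i)$ copies of the vertex-$i$ sample (and one copy of each hyperedge sample), so that $m=\sum_i\deg_G(i)+M\le(t+1)M$. Then the vertex-error budget is itself degree-weighted,
\[
\text{vertex error} \;\ge\; \frac{\delta^2}{4}\sum_{i\in V_{\mathrm{bad}}}\deg_G(i),
\]
which is \emph{exactly} the quantity that upper-bounds the number of monochromatic hyperedges lost to $V_{\mathrm{bad}}$. The two competing terms in your display then cancel term-by-term (since $\tfrac{\delta^2}{4}\ge\tfrac{\delta^2}{4k^2t}$), no degree assumption is needed, and the bookkeeping closes with constants comfortably inside $\gamma/(100k^2t^5N)$. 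With that single change your proof becomes the paper's proof; everything else in your write-up (the scaling $\delta=\Theta(1/(t\sqrt N))$, the explicit completeness weights, and the edge/vertex dichotomy with thresholds $\delta/(2k)$ and $\delta/2$) is sound and matches the paper up to unimportant constant choices.
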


\begin{proof}
Let $V$ and $E$ denote the set of vertices and the set of hyperedges of $G$ respectively. Recall that we use $N$ and $M$ to denote $|V|$ and $|E|$ respectively. For convenience, let us rename the vertices as $1, 2, \dots, n$.

Let $n = N$ and\footnote{We use $\deg_G(i)$ to denote the number of hyperedges that $i$ belongs to.} $m = \sum_{i \in V} \deg_G(i) + |V|$. For each vertex $i \in V$, we create $\deg_G(i)$ copies of the sample $\bx_i = \be_i$ and label them by $y_i = \frac{1}{t \sqrt{t n}}$; we refer to such samples as the \emph{vertex $i$ samples}. Moreover, for each hyperedge $e = \{i_1, \dots, i_q\}$, we create a sample $(\bx_e, y_e)$ with $\bx_e = \frac{1}{\sqrt{t}} \sum_{j=1}^{q} \be_{i_j}$ and label it with $y_e = 0$; similarly, we refer to this as the \emph{hyperedge $e$ sample}. This completes our construction.

(Completeness) Suppose that the graph is $k$-colorable; let $\chi: V \to [k]$ be its proper $k$-coloring. We define $\bw^1, \cdots, \bw^k$ by $\bw^a_i = \frac{1}{t \sqrt{n}}$ iff $\chi(i) = a$ and $-\frac{1}{\sqrt{n}}$ otherwise. Consider the sum of $k$ ReLUs $\bx \mapsto [\bw^1 \cdot \bx]_+ + \cdots + [\bw^k \cdot \bx]_+$. For each vertex $i$, we have
\begin{align*}
[\bw^1 \cdot \bx_i]_+ \cdots + [\bw^k \cdot \bx_i]_+ = [\bw^1_i]_+ \cdots + [\bw^k_i]_+ = \frac{1}{t \sqrt{t n}}.
\end{align*}

Moreover, for each hyperedge $e = \{i_1, \cdots, i_q\}$, we have
\begin{align*}
[\bw^1 \cdot \bx_e]_+ + \cdots + [\bw^k \cdot \bx_e]_+ = \left[\frac{1}{\sqrt{t}} \sum_{j=1}^q \bw^1_{i_j}\right]_+ + \cdots + \left[\frac{1}{\sqrt{t}} \sum_{j=1}^q \bw^k_{i_j}\right]_+ = 0,
\end{align*}
where the second equality follows from the fact that the edge $e$ is not monochromatic.
Hence, the samples are realizable by a sum of $k$ ReLUs as desired.

(Soundness) Suppose contrapositively that for some constant $k \geq 2$, there exists a sum of $k$ ReLUs $\bx \mapsto \sum_{\ell \in [k]} [\bw^\ell \cdot \bx]_+$ such that $\cL(\bw^1, \dots, \bw^k; S) \leq \varepsilon := \frac{\gamma}{100 k^2t^5 n}$.

Let $T$ denote the set of vertices $i \in V$ such that there exists $\ell_i \in [k]$ where $\bw^{\ell_i}_i > \frac{1}{2k} \cdot \frac{1}{t\sqrt{tn}}$. Note that, for each $i \notin T$, we have $\sum_{\ell \in [k]} [\bw^\ell \cdot \bx_i]_+ = \sum_{\ell \in [k]} [\bw^\ell_i]_+ \leq \frac{1}{2t\sqrt{tn}}$. In other words, we incur a square loss of at least $\frac{1}{4t^3 n}$ for each copy of the vertex $i$ samples. This means that $\cL(\bw^1, \dots, \bw^k; S) \geq \frac{\sum_{i \in (V \setminus T)} \deg_G(i)}{m} \cdot \frac{1}{4t^3 n}$. From our assumption, this can be at most $\varepsilon$, which gives
\begin{align} \label{eq:uncolored-bound}
\sum_{i \in (V \setminus T)} \deg_G(i) \leq \varepsilon \cdot 4t^3 \cdot m \cdot n \leq \frac{\gamma M}{2},
\end{align}
where the latter comes from our choice of $\varepsilon$, and from $m \leq (t + 1)M$.

Now, consider the coloring $\chi: V \to [k]$ where we assign $\chi(i) = \ell_i$ for all $i \in T$, and assign $\chi(i)$ arbitrarily for all $i \notin T$. From~\eqref{eq:uncolored-bound}, the number of hyperedges that contain at least one vertex outside of $T$ is at most $\gamma M / 2$. Next, consider each hyperedge $e = \{i_1, \dots, i_q\}$ that is contained in $T$ (i.e. $e \subseteq T$). $e$ is monochromatic if and only if $\ell(i_1) = \cdots = \ell(i_q)$, which means that
\begin{align*}
\sum_{\ell \in [k]} [\bw^\ell \cdot \bx_e]_+ \geq [\bw^{\ell(i_1)} \cdot \bx_e]_+ = \left[\frac{1}{\sqrt{t}}\left(\bw^{\ell(i_1)}_{i_1} + \bw^{\ell(i_2)}_{i_2} + \cdots + \bw^{\ell(i_q)}_{i_q}\right)\right]_+ > \frac{1}{2kt^2 \sqrt{n}},
\end{align*}
where the last inequality comes from $\bw^{\ell(i)}_i > \frac{1}{2kt\sqrt{t n}}$ for all $i \in S$. In other words, each monochromatic hyperedge $e$ contained in $T$ incurs a square loss of more than $\frac{1}{4k^2t^4 n}$ in the hyperedge $e$ sample. As a result, the number of such hyperedges is less than
\begin{align*}
\varepsilon \cdot (4k^2 t^4 n) \cdot m \leq \frac{\gamma M}{2}.
\end{align*}
As a result, in total, the number of monochromatic hyperedges for the coloring $\chi$ is less than $\gamma M$. This concludes our proof.
\end{proof}


With the above reduction ready, Theorem~\ref{thm:hardness-realizable} and Theorem~\ref{thm:time-realizable} follow easily from known results on hardness of coloring. For Theorem~\ref{thm:hardness-realizable}, we may use the (classic) NP-hardness of coloring:

\begin{theorem}[\cite{lovasz-coloring,Stockmeyer-coloring}] \label{thm:hardness-coloring}
For any $k \geq 2$, deciding whether a given hypergraph $G$ is $k$-colorable is NP-hard. Furthermore, this holds even when $G$ has maximum edge size at most 3.
\end{theorem}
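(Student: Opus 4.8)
The plan is to prove Theorem~\ref{thm:hardness-coloring} by splitting into the cases $k \geq 3$ and $k = 2$, the point being that ordinary graphs (i.e.\ $2$-uniform hypergraphs, so certainly of edge size at most $3$) already suffice when $k \geq 3$, whereas the case $k = 2$ genuinely needs hyperedges of size $3$. For $k \geq 3$ I would first observe that a graph is a hypergraph all of whose edges have size $2$, and that a proper graph $k$-coloring is exactly a proper hypergraph $k$-coloring in this case; hence it suffices to invoke the classical NP-hardness of $k$-colorability of graphs. For $k = 3$ this is Stockmeyer's theorem~\cite{Stockmeyer-coloring}. For $k > 3$ I would reduce $3$-colorability to $k$-colorability by the standard padding gadget: given a graph $G$, form $G'$ by adding $k - 3$ fresh vertices, turning them into a clique, and joining each of them to every vertex of $G$; in any $k$-coloring of $G'$ the new vertices receive $k - 3$ pairwise distinct colors that cannot appear in $G$, and conversely any $3$-coloring of $G$ extends by using the $k-3$ remaining colors on the new vertices, so $G'$ is $k$-colorable iff $G$ is $3$-colorable, and $G'$ is still a graph.

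For the case $k = 2$, note that $2$-colorability of graphs is merely bipartiteness and lies in $\mathsf{P}$, which is exactly why the edge-size bound in the statement is $3$ rather than $2$. Here I would reduce from Not-All-Equal $3$-SAT (NAE-$3$SAT), which is well known to be NP-complete (and is itself easily reduced from $3$-SAT). Given an NAE-$3$SAT instance $\phi$ on variables $x_1, \dots, x_p$, after a standard normalisation so that every clause contains three literals over three distinct variables, I would build a hypergraph $G_\phi$ as follows: for each variable $x_i$ introduce two vertices $v_i, \bar v_i$ together with the size-$2$ edge $\{v_i, \bar v_i\}$, and for each clause with literals $\ell_1, \ell_2, \ell_3$ introduce the size-$3$ hyperedge $\{v_{\ell_1}, v_{\ell_2}, v_{\ell_3}\}$, where $v_\ell = v_i$ if $\ell = x_i$ and $v_\ell = \bar v_i$ if $\ell = \bar x_i$. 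A proper $2$-coloring of $G_\phi$, with one color class identified with \textsc{True}, is then exactly an NAE-satisfying assignment of $\phi$: the size-$2$ edges force $v_i$ and $\bar v_i$ into opposite classes (so the coloring induces a consistent truth assignment, and conversely), and a clause hyperedge is non-monochromatic precisely when its three literals are not all equal. Since every edge has size at most $3$, this gives the $k = 2$ case.

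I do not expect any genuine obstacle here, since the argument is just a packaging of classical facts; the only thing to be careful about is conceptual --- recognising that $k = 2$ forces the appearance of size-$3$ edges and hence requires a not-all-equal satisfiability problem rather than a graph-coloring problem --- together with the routine normalisation of NAE-$3$SAT clauses (distinct variables, and dropping any clause whose literals include both $x_i$ and $\bar x_i$, which is automatically non-monochromatic) so that the clause hyperedges indeed have three distinct vertices. As the statement already attributes the result to~\cite{lovasz-coloring,Stockmeyer-coloring}, an alternative is simply to cite these works for the $k = 2$ and $k \geq 3$ cases respectively.
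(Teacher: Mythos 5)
The paper does not actually prove Theorem~\ref{thm:hardness-coloring}; it is imported as a cited black box from~\cite{lovasz-coloring,Stockmeyer-coloring}. (Appendix~\ref{app:coloring} opens by saying it will sketch a proof of Theorem~\ref{thm:hardness-coloring}, but what it in fact sketches is Theorem~\ref{thm:eth-coloring}, the Gap-ETH version: the dummy-vertex reduction given there inflates each hyperedge to size $k+1$, so it does not respect the ``edge size at most $3$'' constraint and cannot serve as a proof of Theorem~\ref{thm:hardness-coloring}; the mismatch in that proof-environment header is a typo.) Your reconstruction of the cited result is correct and is the natural one. For $k \ge 3$ you correctly note that a graph is simply a hypergraph with edge size $2\le 3$, invoke NP-hardness of graph $3$-colorability, and lift to general $k$ via the standard $(k-3)$-clique padding. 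For $k = 2$ you correctly observe that graphs do not suffice (bipartiteness is polynomial), so size-$3$ hyperedges are genuinely needed, and you reduce from NAE-$3$SAT with the pair gadget $\{v_i,\bar v_i\}$ together with size-$3$ clause hyperedges; the normalisation you flag (distinct variables per clause, discarding clauses containing a complementary pair) is exactly what is needed for the map to be well-defined and faithful. No gap.
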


\begin{proof}[Proof of Theorem~\ref{thm:hardness-realizable}]
We reduce from hardness of coloring in Theorem~\ref{thm:hardness-coloring}. Let $G = (V, E)$ be the input hypergraph whose hyperedges are of size at most 3. By applying the reduction from Lemma~\ref{lem:coloring-reduction}, we get a set of samples $\{(\bx_i, y_i)\}_{i \in [m]}$. Lemma~\ref{lem:coloring-reduction} guarantees that, if the graph is $k$-colorable, then there is a sum of $k$ ReLUs $\sum_{j \in [k]} [\bw^j \cdot \bx]_+$ with $\bw^1, \dots, \bw^k \in \cB^n$ that realizes these samples. On the other hand, if $G$ is not $k$-colorable, then the soundness guarantee of Lemma~\ref{lem:coloring-reduction} implies that any sum of $k$ ReLUs incurs an average square error of at least $\frac{1/M}{100 \cdot k^2 \cdot 3^5} \cdot \frac{1}{N} = \frac{1}{24300 k^2 NM}$. Hence, the bounded sum of $k$ ReLU Training problem is also NP-hard.
\end{proof}

\begin{remark}
If we plug in hardness of approximate coloring (e.g.~\cite{DinurRS05,bhangale2018np}) to our reduction instead of the hardness of exact coloring in Theorem~\ref{thm:hardness-coloring}, then we can actually get a stronger soundness where the constructed samples are not realizable even for any sum of $k'$ ReLUs for any constant $k' > k$. In fact, using the hardness of approximation of coloring in~\cite{bhangale2018np}, $k'$ can even be taken as large as $(\log n)^{1 - o(1)}$.

We note, however, that such strong soundness does not hold for the problem of bounded $k$-ReLU Training (with possibly negative coefficient). In particular, our gadget in Lemma~\ref{lem:gadget-neg} has a soundness guarantee that only holds against $k$-ReLU, but not even $(k + 1)$-ReLU. It remains an interesting open question to extend such stronger soundness to this case as well.
\end{remark}

We now move on to prove our running time lower bound (Theorem~\ref{thm:time-realizable}). To prove this result, we will use the following running time lower bound, which is explicit in~\cite{Patrank94}:

\begin{theorem}[\cite{Patrank94}] \label{thm:eth-coloring}
Assuming Gap-ETH, for any $k \geq 2$, there exists $\gamma > 0$ such that the following holds. There is no $2^{o(N)}$ time algorithm that can, given an $N$-vertex $(k + 1)$-uniform hypergraph $G$, distinguish between the following two cases:
\begin{itemize}
\item (Completeness) $G$ is $k$-colorable.
\item (Soudness) Any $k$-coloring of $G$ violates more than $\gamma$ fraction of its hyperedges.
\end{itemize}
\end{theorem}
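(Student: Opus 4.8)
The plan is to reduce from the gap version of $3$SAT (whose $2^{\Omega(n)}$ hardness on $n$-variable formulas is exactly Gap-ETH) to the promised hypergraph $k$-coloring gap problem, by a reduction that is \emph{linear-size} (the output has $O_k(N)$ vertices on an $N$-variable input) and \emph{gap-preserving} (it loses only a constant factor, depending on $k$, in the soundness gap). Given such a reduction, a $2^{o(N)}$-time algorithm for the hypergraph problem would compose with it to decide Gap-$3$SAT in $2^{o(n)}$ time, contradicting Gap-ETH; this is precisely the theorem, with $\gamma = \gamma(k) > 0$ the surviving constant gap.

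For the base case $k = 2$ — distinguishing $2$-colorable $3$-uniform hypergraphs from those on which every $2$-coloring is monochromatic on a constant fraction of hyperedges — I would follow Petrank's route. From a $3$CNF $\varphi$, first move to a Not-All-Equal instance via the textbook local gadget that replaces each clause $(\ell_1 \vee \ell_2 \vee \ell_3)$ by $\mathrm{NAE}(\ell_1, \ell_2, \ell_3, z)$ for one global fresh variable $z$; a short calculation shows this preserves both satisfiability and, over all assignments, the fraction of simultaneously satisfiable constraints up to a constant factor. Then remove negations: for each variable $x$ introduce vertices $p_x$ and $q_x$ representing $x$ and $\neg x$, attached through a constant-size consistency gadget so that any coloring monochromatic on few hyperedges becomes, after deleting a constant fraction of the variables, consistent (with $p_x$ and $q_x$ oppositely colored). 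Finally, split each $4$-ary NAE-constraint into $3$-ary ones by a constant-size gadget and pad any remaining shorter hyperedges to arity exactly $3$. The result is a $3$-uniform hypergraph on $O(N)$ vertices, $2$-colorable iff $\varphi$ is satisfiable and with a constant soundness gap otherwise.

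For $k \ge 3$ I would extend this to a direct gadget reduction from Gap-$3$SAT to $(k+1)$-uniform hypergraph $k$-coloring, replacing each variable and each clause by a sub-hypergraph of size depending only on $k$. These gadgets must be engineered so that in the completeness case a proper $k$-coloring exists that places one designated ``true'' color and one designated ``false'' color on the literal vertices and distributes the remaining $k-2$ colors among interior gadget vertices — this is what lets every hyperedge have arity exactly $k+1$ while still encoding a nonvacuous constraint — and so that in the soundness case every $k$-coloring of the sub-hypergraphs decodes into a Boolean assignment whose number of violated clauses of $\varphi$ is at most a constant (depending on $k$) times the number of monochromatic hyperedges.

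The main obstacle is exactly the design and analysis of these constant-size gadgets, which must be simultaneously (i) of size depending only on $k$ — this is what makes the reduction linear and hence yields a $2^{\Omega(N)}$, rather than a weaker, lower bound — and (ii) \emph{robustly} gap-preserving, i.e., a coloring monochromatic on only a tiny fraction of hyperedges must still respect the intended semantics (literal consistency, correct ``true''/``false'' encoding, confinement of the extra $k-2$ colors to interior vertices) on all but a proportionally tiny fraction of the construction, so that the soundness bound emerges as ``$> \gamma$ fraction of hyperedges monochromatic'' for a fixed $\gamma = \gamma(k)$ independent of $N$. This robust gap analysis is the heart of Petrank's ``gap location'' method and is where essentially all the effort goes; by contrast, the linearity of the size blow-up is immediate once the gadgets are of constant size, and the final deduction from Gap-ETH is a one-liner.
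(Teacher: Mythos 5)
Your base case $k=2$ is fine in outline and coincides with what the paper does: the paper simply cites Petrank's Theorem~4.3 (Gap-ETH hardness of Max NAE-3SAT, equivalently of approximately $2$-coloring $3$-uniform hypergraphs, via a linear-size reduction from gap 3SAT), which is exactly the content you re-sketch. The genuine gap is the case $k \ge 3$: there your proposal consists only of a wish-list of properties that hypothetical variable and clause gadgets ``must'' have (constant size, designated true/false colors, confinement of the remaining $k-2$ colors to interior vertices, robust decoding of near-proper colorings into assignments), with no construction and no soundness analysis. You yourself flag this robust gap analysis as where essentially all the effort goes, and it is precisely the part that is absent; as written the proposal restates what a proof would have to accomplish rather than accomplishing it. Moreover, the missing step is not routine: the only constraints available in a $(k+1)$-uniform instance are the very weak conditions ``not all $k+1$ vertices receive the same color,'' and enlarging a hyperedge with auxiliary vertices makes it \emph{easier}, not harder, to satisfy (a single fresh color on an auxiliary vertex kills monochromaticity), so forcing semantics such as literal consistency or a true/false encoding is exactly the delicate point your plan leaves open.

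For comparison, the paper handles $k \ge 3$ by a much lighter route than yours: instead of returning to Gap-3SAT with new gadgets, it reduces from the already-established $k=2$ statement, forming $G'$ by adjoining $k-2$ dummy vertices and appending all of them to every hyperedge, and asserting that the fraction of monochromatic hyperedges is preserved in both directions. Note, however, that in either route the soundness direction is where the danger lies: in a padded or gadget-augmented instance, a $k$-coloring may place a color on the auxiliary/dummy vertices that is unused on the original vertices, and one must argue why a far-from-$2$-colorable input cannot thereby become easy to color — the paper's one-line equivalence needs exactly this check, and your gadget-based plan leaves the analogous check entirely unargued. So the verdict is that your submission is a plan for a proof, not a proof; to complete it you would either have to actually exhibit and analyze the constant-size gadgets, or find (and verify) a reduction from the $k=2$ case whose soundness survives the addition of the extra $k-2$ colors.
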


We remark that, strictly speaking, Petrank only proved the above theorem in the case of $k = 2$, for which the problem of 2-coloring 3-uniform hypergraph is equivalent to the so-called Max NAE 3SAT, which was proved to be hard to approximate in~\cite[Theorem 4.3]{Patrank94}; the reduction is a linear time reduction from the gap version of 3-SAT, which yields the running time lower bound we stated above. Nonetheless, it is also very simple to generalize the result to the case $k \geq 2$. We sketch the argument in Appendix~\ref{app:coloring}.

\begin{proof}[Proof of Theorem~\ref{thm:time-realizable}]
We reduce from hardness of coloring in Theorem~\ref{thm:hardness-coloring}. Let $G = (V, E)$ be a $(k + 1)$-uniform hypergraph. By applying the reduction from Lemma~\ref{lem:coloring-reduction}, we get a set of examples $\{(\bx_i, y_i)\}_{i \in [m]}$. Lemma~\ref{lem:coloring-reduction} guarantees that, if the graph is $k$-colorable, then there is a sum of $k$ ReLUs $\sum_{j \in [k]} [\bw^j \cdot \bx]_+$ where $\bw^1, \dots, \bw^k \in \cB^n$ that realizes these samples. On the other hand, if any $k$-colorable of $G$ results in at least $\gamma$ fraction of the hyperedges being monochromatic, then the soundness guarantee of Lemma~\ref{lem:coloring-reduction} implies that any sum of $k$ ReLUs incurs an average square error of more than $\varepsilon := \frac{\gamma}{100 k^2 t^5} \cdot \frac{1}{N} = \Omega_{\gamma, k}\left(\frac{1}{N}\right)$.

Hence, if there is a $2^{o(1/\varepsilon)} \poly(n)$-time learning algorithm for 2-ReLUs (in the realizable case) to within square error of $\varepsilon$, then we can distinguish the two cases in Theorem~\ref{thm:eth-coloring} in time $2^{o(N)}$ time. By Theorem~\ref{thm:eth-coloring}, this violates Gap-ETH.
\end{proof}

\subsection{On hardness of coloring}
\label{app:coloring}

In this section, we briefly sketch the proof of Theorem~\ref{thm:hardness-coloring}. First, Theorem 4.3 of~\cite{Patrank94} immediately implies the following.

\begin{theorem}[\cite{Patrank94}] \label{thm:eth-coloring-2-color}
Assuming Gap-ETH, there exists $\gamma > 0$ such that the following holds. There is no $2^{o(N)}$ time algorithm that can, given an $N$-vertex 3-uniform hypergraph $G$, distinguish between the following two cases:
\begin{itemize}
\item (Completeness) $G$ is 2-colorable.
\item (Soudness) Any 2-coloring of $G$ violates more than $\gamma$ fraction of its hyperedges.
\end{itemize}
\end{theorem}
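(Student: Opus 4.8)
The plan is to obtain Theorem~\ref{thm:eth-coloring-2-color} as an (essentially immediate) consequence of Petrank's inapproximability result for Max NAE-3SAT, combined with Gap-ETH. First, recall the standard dictionary: 2-coloring a 3-uniform hypergraph $H=(V,E)$ is the same optimization problem as Max NAE-3SAT on the all-positive instance with variable set $V$ and clauses $\{\,\mathrm{NAE}(u,v,w) : \{u,v,w\}\in E\,\}$, since a coloring $\chi:V\to\{0,1\}$ makes a hyperedge non-monochromatic exactly when the corresponding NAE clause is satisfied. Thus the distinguishing task in the theorem --- telling a 2-colorable hypergraph from one in which every 2-coloring leaves at least a $\gamma$-fraction of hyperedges monochromatic --- is literally the task of telling a fully NAE-satisfiable instance from one in which no assignment satisfies more than a $(1-\gamma)$-fraction of the clauses.

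Second, I would invoke Theorem~4.3 of~\cite{Patrank94}, which supplies exactly such a gap instance. The salient feature --- the point of Petrank's ``gap location'' paper --- is \emph{perfect completeness}: the reduction maps satisfiable 3SAT formulas to \emph{fully} NAE-satisfiable (equivalently, genuinely 2-colorable) instances, while unsatisfiable formulas are mapped to instances whose NAE-value is bounded away from $1$ by an absolute constant $\gamma>0$. The other feature I would need is that this is a local, gadget-based reduction and is therefore \emph{linear-size}: a 3SAT instance with $n$ variables and $m$ clauses produces a hypergraph with $O(n+m)$ vertices and $O(m)$ hyperedges. (If Petrank's target problem is stated with negated literals rather than in pure hypergraph form, one further applies the standard constant-blowup gadget converting Max NAE-SAT with negations into the positive/set-splitting form, and pads any clause of size less than $3$ up to size exactly $3$, so that the final object is a genuine 3-uniform hypergraph.)

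Third, I would feed in Gap-ETH. By Gap-ETH --- together with the sparsification lemma, which lets us assume $m = O(n)$ --- no $2^{o(n)}$-time algorithm distinguishes satisfiable 3CNF formulas from those that are not $(1-\delta)$-satisfiable for some constant $\delta>0$. Composing with the linear-size reduction above yields hypergraphs on $N = O(n)$ vertices, so a $2^{o(N)} = 2^{o(n)}$-time algorithm for the gap 2-coloring problem would give a $2^{o(n)}$-time algorithm for Gap-3SAT, contradicting Gap-ETH. This proves the theorem; as indicated in the text, the general-$k$ statement Theorem~\ref{thm:eth-coloring} then follows from it via the (linear-size) reduction sketched in the remainder of Appendix~\ref{app:coloring}, which in turn feeds Lemma~\ref{lem:coloring-reduction} and Theorem~\ref{thm:time-realizable}.

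The only step that genuinely requires care --- and the reason this is a plan rather than a one-liner --- is the size bookkeeping: one must check that \emph{every} gadget in the chain (Petrank's 3SAT-to-NAE-3SAT reduction, removal of negated literals, and padding to uniform hyperedge size) multiplies the number of vertices and hyperedges by only a constant factor, since the theorem claims a $2^{o(N)}$ lower bound measured in the vertex count $N$, not merely polynomial-time hardness; anything worse than linear would degrade the bound to $2^{o(N/\mathrm{polylog} N)}$ or weaker. A secondary point is to cite Petrank precisely enough to be certain the ``gap at location $1$'' (perfect completeness) property is the one being invoked, since that is exactly what makes the YES case an honestly 2-colorable hypergraph rather than merely a near-2-colorable one.
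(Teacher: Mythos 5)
Your proposal is correct and follows essentially the same route as the paper: identify gap 2-coloring of 3-uniform hypergraphs with Max NAE-3SAT, invoke Petrank's Theorem 4.3 (perfect completeness plus a constant gap), and observe that the reduction from Gap-3SAT is linear-size so that a $2^{o(N)}$ algorithm would violate Gap-ETH. The paper's ``proof'' is in fact little more than a citation to Petrank with a one-line remark about linearity; you have supplied the same argument with slightly more care about the negated-literal/padding gadgets, which is a reasonable and standard elaboration.
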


We may now prove Theorem~\ref{thm:hardness-coloring} as follows.

\begin{proof}[Proof Sketch of Theorem~\ref{thm:hardness-realizable}]
Given an $N$-vertex 3-uniform input hypergraph $G = (V, E)$ from Theorem~\ref{thm:eth-coloring-2-color}. We create a new hypergraph $G' = (V', E')$ where $V'$ is simply $V$ together with $k - 2$ additional ``dummy vertices'' (i.e. $V' = V \cup \{u_1, \dots, u_{k - 2}\}$). We then let $E' = \{e \cup \{u_1, \dots, u_{k - 2}\} \mid e \in E\}$. It is simple to check that, for any $\nu \in [0, 1]$, there exists a $k$-coloring of $G'$ with $\nu$ fraction of edges being monochromatic if and only if there exists a 2-coloring of $G$ with $\nu$ fraction of edges being monochromatic.
\end{proof}

\section{Handling Negative Coefficients: Hardness of $k$-ReLU Training}
\label{sec:neg}

In this section, we show that our hardness from the previous section can be easily extended to the case where the coefficients in front of each ReLU unit is allowed to be negative. Specifically, we will prove Theorems~\ref{thm:realizable-hardness-short} and~\ref{thm:realizable-time-lower-bound-short} here. 


The main gadget used to translate our results from the non-negative coefficient case to the more generalized case here is just a set of points that can be realized by a weighted sum of $k$ ReLUs only when all the coefficients are positive, as stated below.

\begin{lemma}[Main Gadget] \label{lem:gadget-neg}
For any $k \in \N$, there exists a set of samples $\tS = \{(\tbx_i, \ty_i)\}_{i \in [\tm]} \subseteq \cB^k \times [0, k]$ and a positive real number $\tau \in \R^+$ such that
\begin{itemize}
\item (Completeness) The samples can be realized by $\sum_{j \in [c]} [\tbw^j \cdot \tbx]_+$ for some $\tbw^1, \dots, \tbw^k \in \cB^k$.
\item (Soundness) For any $\tbw^1, \dots, \tbw^k \in \R^k$, $\ba \in \{-1, 1\}^n \setminus \{\bone_k\}$, $\cL(\tbw^1, \dots, \tbw^k, \ba; \tS) \geq \tau$.
\end{itemize}
Moreover, the set $\tS$ can be constructed in time $2^{O(k)}$.
\end{lemma}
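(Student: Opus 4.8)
The plan is to realize, in $n=k$ dimensions, the ``maximally non‑degenerate'' sum of $k$ ReLUs, namely $f:=\relu_{\be_1,\dots,\be_k,\bone_k}$ (that is, $f(\bx)=\sum_{j=1}^k[x_j]_+$), and to probe it on a $\pm$-symmetric, $2^{O(k)}$-size point set rich enough to force any competing $k$-ReLU that matches it to have coefficient vector $\bone_k$. Concretely, set $\tbw^j:=\be_j\in\cB^k$, let $P\subseteq\cB^k$ be a $\pm$-symmetric set of $2^{O(k)}$ points (a fixed constant‑accuracy net of $\cS^{k-1}$ together with the origin is the default choice), and put $\tS:=\{(\bx,f(\bx)):\bx\in P\}$; all labels lie in $[0,\sqrt k]\subseteq[0,k]$, and $\tS$ is constructible in $2^{O(k)}$ time, which gives the ``moreover'' clause. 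Completeness is immediate: with $\tbw^j=\be_j$ and $\ba=\bone_k$ we have $\relu_{\tbw^1,\dots,\tbw^k,\bone_k}\equiv f$, so $\cL(\tbw^1,\dots,\tbw^k,\bone_k;\tS)=0$. For soundness, fix $\bw^1,\dots,\bw^k\in\R^k$ and $\ba\in\{-1,1\}^k\setminus\{\bone_k\}$, write $h:=\relu_{\bw^1,\dots,\bw^k,\ba}$, and aim at $\cL(h;\tS)\ge\tau$ for a constant $\tau=\tau(k)>0$ (only a constant, with polynomial dependence on $k$, is needed for the application). If $\ba=-\bone_k$ then $h\le0$ while $f\ge0$ is bounded below by a constant on a constant fraction of $P$, so $\cL(h;\tS)=\Omega(1)$; hence assume $1\le p:=|\{j:a_j=1\}|\le k-1$.

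The heart of the argument is a rigidity coming from the identities $[t]_+-[-t]_+=t$ and $[t]_++[-t]_+=|t|$, which give, for every input $\bx$,
\[
h(\bx)-h(-\bx)=\Big\langle\textstyle\sum_j a_j\bw^j,\ \bx\Big\rangle,\qquad h(\bx)+h(-\bx)=\textstyle\sum_j a_j\,|\langle\bw^j,\bx\rangle|,
\]
and likewise $f(\bx)+f(-\bx)=\|\bx\|_1$. If $\cL(h;\tS)\le\tau$ then averaging shows $h$ agrees with $f$ to within $O(\tau^{1/4})$ at both $\bx$ and $-\bx$ for a $1-O(\sqrt\tau)$ fraction of $\bx\in P$; for such $\bx$ the ``even parts'' nearly agree, i.e.
\[
\textstyle\sum_{j:a_j=1}|\langle\bw^j,\bx\rangle|\ -\ \sum_{j:a_j=-1}|\langle\bw^j,\bx\rangle|\ =\ \|\bx\|_1\ \pm\ O(\tau^{1/4}).
\]
Now $P_+(\bx):=\sum_{j:a_j=1}|\langle\bw^j,\bx\rangle|$ is a sum of only $p\le k-1$ absolute values of linear forms, hence a seminorm that vanishes on the nontrivial subspace $\bigcap_{j:a_j=1}\{\bx:\langle\bw^j,\bx\rangle=0\}$, whereas $\|\cdot\|_1+\sum_{j:a_j=-1}|\langle\bw^j,\cdot\rangle|$ is a \emph{positive‑definite} seminorm. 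Taking a unit vector $\bx^*$ in the kernel of $P_+$, the two sides of the displayed near‑equality differ by $\ge\|\bx^*\|_1\ge1$ at $\bx^*$; using that both sides are seminorms and that $P$ is $\pm$-symmetric and sufficiently fine, one transfers this $\Omega(1)$ discrepancy to a constant fraction of $P$, contradicting the display once $\tau=\tau(k)$ is a small enough constant.

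The step I expect to be the real obstacle is making the last transfer rigorous, since the weight vectors range over all of $\R^k$: $P_+$ and its positive‑definite counterpart are then not uniformly Lipschitz, so a fixed finite probe set cannot \emph{a priori} witness a discrepancy at an arbitrary direction $\bx^*$. The resolution is a ``tameness'' reduction exploiting that the rectifier is $1$-Lipschitz: a pair of units with opposite coefficients contributes a function bounded by $\|\bw^i-\bw^j\|$ on $\cB^k$, while a single unit $[\langle\bw,\cdot\rangle]_+$ attains $\|\bw\|$ on $\cS^{k-1}$, so achieving $\cL(h;\tS)=o_k(1)$ with $\ba\ne\bone_k$ would force, in the limit, exactly the impossible seminorm identity $P_+\equiv\|\cdot\|_1+\sum_{j:a_j=-1}|\langle\bw^j,\cdot\rangle|$ (equivalently, the target label vector $(f(\bx))_{\bx\in P}$ is not in the closure of the image of the ``$\ba$-ReLU'' map, which is semialgebraic, so the distance is a positive constant). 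Carrying out this pairing/closure analysis cleanly — together with the bookkeeping of $\tau(k)$, the probe‑set accuracy, and the $2^{O(k)}$ size bounds — is the delicate part; everything else is routine.
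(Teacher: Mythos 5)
Your approach is genuinely different from the paper's and, as you flag yourself, has a real gap. The even‑part identity $h(\bx)+h(-\bx)=\sum_j a_j|\langle\bw^j,\bx\rangle|$ together with the seminorm observation (a sum of at most $k-1$ absolute values of linear forms has nontrivial kernel, hence cannot dominate $\|\cdot\|_1$) is a clean way to rule out an \emph{exact} match on the full sphere $\cS^{k-1}$. But the lemma needs a \emph{finite} sample set and a \emph{quantitative} $\tau>0$ uniformly over all $\bw^1,\dots,\bw^k\in\R^k$, and the transfer from $\cS^{k-1}$ to your fixed net $P$ is exactly where the proof breaks: the seminorms $P_{+},P_{-}$ have Lipschitz constant of order $\sum_j\|\bw^j\|$, which is unbounded, so a net chosen in advance cannot \emph{a priori} witness the $\Omega(1)$ discrepancy that exists at a kernel direction $\bx^*$ of $P_+$. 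The ``tameness reduction'' / ``semialgebraic closure'' paragraph does not fill this hole: the assertion that $(f(\bx))_{\bx\in P}$ is not in the closure of the image of the $\ba$‑ReLU map is \emph{exactly} the claim that needs proving, and your justification for it (the impossible seminorm identity) is established on the continuous sphere, not on $P$. To push it through one would need a compactness/cancellation analysis of weight sequences diverging in norm while $h|_P$ stays bounded; you gesture at this (pairing units of opposite sign) without carrying it out, and it is not obvious how, since the pairing is not canonical and the cancellations need not factor through pairs.

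The paper's proof sidesteps the unbounded‑weight issue with a different construction. For each $\bu\in\{\pm1\}^k\setminus\{-\bone_k\}$ it places a small cluster $S_\bu$ of $2^k\cdot k$ points \emph{in general position} near $\bu/(2\sqrt{k})$, so that $f$ is linear on each cluster with slope $\sgn(\bu)\in\{0,1\}^k$, and these slopes are all distinct. When some $a_j=-1$, only $2^k-2$ of the $2^k$ sign‑pattern cells of the competing network can output a nonzero value, so by pigeonhole some single linear piece of it must contain at least $k$ points from two distinct clusters $S_{\bu^1},S_{\bu^2}$. A single linear map cannot agree with $f$ on both (the target slopes differ), and general position gives a uniform lower bound on the smallest singular value of any $k$ points in a cluster, which yields a $\tau>0$ that is completely independent of $\|\bw^j\|$. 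That magnitude‑independence is precisely what your spherical‑net argument is missing; if you want to salvage your route you must make the compactness step precise, but as written the pairing/closure step is a conjecture, not a proof.
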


We will construct our gadget in the above lemma in Section~\ref{sec:gadget-negative-weight}. Before we do so, let us use it to prove Theorem~\ref{thm:realizable-hardness-short}. The main idea is simple: we start from the NP-hard instance from the non-negative weights case and extend the dimension by $k$ (where we simply add $k$ zeros to the end of each sample). Then, we construct additional samples using the gadget (Lemma~\ref{lem:gadget-neg}); the gadget samples are embedded in the last $k$ coordinates and the remaining coordinates are just zeros. The key observation here is that, if a weighted sum of $k$ ReLUs with negative weights is used, then it must incur the error from the soundness of Lemma~\ref{lem:gadget-neg}. Otherwise, we are back to the case where all coefficients are non-negative, for which we already know the hardness. 

At this point, we would also like to remark that, while in all our proofs we only consider constant $k \geq 2$, we can in fact take $k$ to be as large as $O(\log n)$ for Theorem~\ref{thm:realizable-hardness-short}. The bottleneck here is the construction time $2^{O(k)}$ of the gadget in Lemma~\ref{lem:gadget-neg}; since we want this to be polynomial time, we can take $k$ to be at most $O(\log n)$.

\begin{proof}[Proof of Theorem~\ref{thm:realizable-hardness-short}]
Let $S = \{(\bx_i, y_i)\}_{i \in [m]}$ where $\bx_i \in \{0, 1\}^n$ be the NP-hard instance from Theorem~\ref{thm:hardness-realizable}, and let $\tS = \{(\tbx_i, \ty_i)\}_{i \in [\tm]}$ where $\bx_i \in \cB^k$ be the samples from Lemma~\ref{lem:gadget-neg}. We construct the multiset of new samples $\hS = \{(\hbx_i, \hy_i)\}_{i \in [\hm]}$ where $\hbx_i \in \cB^{\hn}$ as follows.
\begin{itemize}
\item Let $\hn = n + k$ and $\hm = m + \tm$.
\item For every $i \in [m]$, we add $\tm$ copies of the labelled sample $(\bx_i \circ \bzero_{k}, 0.5y_i)$ in $\hS$.
\item For every $i \in [\tm]$, we add $m$ copies of the labelled sample $(\bzero_{n} \circ \tbx_i, 0.5\ty_i)$ in $\hS$.
\end{itemize}

\paragraph{(Completeness)}
Suppose that $\{(\bx_i, y_i)\}_{i \in [m]}$ are realizable by a sum of $k$-ReLUs $\sum_{j \in [k]} [\bw^j \cdot \bx]_+$ where $\bw^1, \dots, \bw^k \in \cB^n$. Recall also from Lemma~\ref{lem:gadget-neg} that the samples $\{(\tbx_i, y_i)\}_{i \in [\tm]}$ can be realized by a sum of $k$-ReLUs $\sum_{j \in [k]} [\tbw^j \cdot \tbx]_+$ where $\tbw^1, \dots, \tbw^k \in \cB^k$. For every $j \in [k]$, let $\hbw^j = 0.5\bw^j \circ 0.5\tbw^j$. It is easy to check that $\|\hbw^1\|, \dots, \|\hbw^k\| \leq 1$ and that the constructed samples can be realized by the $k$-ReLU $\sum_{j \in [k]} [\hbw^j \cdot \hbx]_+$.

\paragraph{(Soundness)}
Suppose that any sum of $k$-ReLUs incurs an average loss of at least $\frac{1}{\poly(n)}$ on the samples $\{(\bx_i, y_i)\}_{i \in [m]}$. We will show that any weighted sum of $k$-ReLUs $\sum_{j \in [k]} a_j [\hbw_j \cdot \hbx]_+$ incurs a loss of at least $\frac{1}{\poly(\tn)}$ on the constructed samples $\{(\hbx_i, \hy_i)\}_{i \in [\hm]}$. Consider the following two cases:
\begin{itemize}
\item $a_1 = \cdots = a_k = +1$. In this case, the assumption on $\{(\bx_i, y_i)\}_{i \in [m]}$ implies that the weighted sum of $k$-ReLUs incurs an average loss of at least $\frac{1}{\poly(n)}$ on the samples $\{(\bx_i \circ \bzero_{\tm}, 0.5y_i)\}_{i \in [m]}$. Since (the copies of) these samples contribute to half of the total number of constructed samples, we have that the weighted sum of $k$-ReLUs incurs an average loss of at least $\frac{1}{2} \cdot \frac{1}{\poly(n)} \geq \frac{1}{\poly(\tn)}$ on the constructed samples $\{(\hbx_i, \hy_i)\}_{i \in [\hm]}$.
\item $a_j = -1$ for some $j \in [k]$. In this case, the soundness of Lemma~\ref{lem:gadget-neg} implies that the weighted sum of $k$-ReLUs incurs an average loss of at least $\tau$ on the samples $\{(\bzero_m \circ \tbx_i, 0.5\ty_i)\}_{i \in [\tm]}$. Since (the copies of) these samples contribute to half of the total number of constructed samples, we have that the weighted sum of $k$-ReLUs incurs an average loss of at least $0.5\tau$ on the constructed samples $\{(\hbx_i, \hy_i)\}_{i \in [\hm]}$. Finally, recall that $\tau$ is a positive constant that depends only on $c$; hence, $0.5\tau \geq 1/\poly(\tn)$. 
\end{itemize}
\end{proof}

We remark that it is straightforward to see that, when plugging the above reduction into Theorem~\ref{thm:time-realizable}, we also immediately get Theorem~\ref{thm:realizable-time-lower-bound-short}. We omit the full proof here, but note that the main observation is that, the error incurred in the second case ($a_j = -1$ for some $j \in [k]$) is an absolute constant that only depends on $k$. This means that, for any sufficiently small $\epsilon$, we are forced to use $a_1 = \cdots = a_k = 1$, which takes us right back to Theorem~\ref{thm:time-realizable}.

\subsection{Construction of the Gadget}
\label{sec:gadget-negative-weight}

We now move on to the construction of our main gadget (Lemma~\ref{lem:gadget-neg}). Before we state the proof for general $k$, let us first note that the case for $k = 2$ is incredibly simple: just create two samples $(\bu, 1)$ and $(-\bu, 1)$ where $\bu$ can be any unit vector. (This construction is in fact the same as a gadget used in~\cite{Woodruf}.) These samples can be realized by the sum of 2 ReLUs $[\tbw^1 \cdot \tbx]_+ + [\tbw^2 \cdot \tbx]_+$ where $\tbw_1 = \bu$ and $\tbw_2 = -\bu$. To see that they cannot be realized by weighted sum of 2 ReLUs with a negative coefficient $[\tbw_1 \cdot \tbx]_+ - [\tbw_2 \cdot \tbx]_+$, observe that $\tbw_1 \cdot \bu$ or $\tbw_1 \cdot (- \bu)$ must be non-positive, meaning that it must output a non-positive value on at least one of $\bu$ or $-\bu$. Hence, the samples are not realizable by such a weighted sum of 2 ReLUs.

The above example is a special case of a more general phenomenon: if we look at the ``sign pattern'' (i.e. whether $\tbw^1 \cdot \tbx, \dots, \tbw^k \cdot \tbx$ are positives), there can be as many as $2^k$ such patterns; $2^k - 1$ such patterns may result in a positive output in the (positive) sum of $c$ ReLUs, with the only exception being when $\tbw_1 \cdot \tbx, \dots, \tbw_c \cdot \tbx < 0$. However, if we look at the sign patterns for a weighted sum of $k$ ReLUs with at least one negative coefficient, then only at most $2^k - 2$ patterns can result in positive outputs. (For instance, if the coefficient of $[\tbw^k \cdot \tbx]_+$ is negative, then the sign pattern $\tbw^1 \cdot \tbx, \dots, \tbw^{k - 1} \cdot \tbx < 0$ and $\tbw_k \cdot \tbx \geq 0$ cannot result in a positive output.) Although we do not use these bounds directly in the above samples for $k = 2$, we do use the fact that $\bu$ and $-\bu$ cannot correspond to the same sign pattern. Roughly speaking, our soundness proof for the general case below also proceeds by arguing that, since there are fewer sign patterns that result in positive outputs when there is a negative coefficient, pigeonhole principle implies that some samples that should not be from the same sign pattern must be from the same sign pattern when there is a negative coefficient, which would lead to a large square error similar to the case $k = 2$ above.

To formalize our construction, recall that a set of vectors in $\R^d$ is said to be in \emph{general position} if any $d$ of these vectors are linearly independent. We use $\cB(\bx, r) := \{\bx' \mid \|\bx - \bx'\| \leq r\}$ to denote the ball of radius $r$ around $\bx$. It is well known that for any $d, t \in \N$, $\bx \in \R^d$ and $r \in \R^+$, it is possible to construct a set of $t$ vectors in $\cB(\bx, r)$ that are in general position in $poly(d, t)$ time.

\begin{proof}[Proof of Lemma~\ref{lem:gadget-neg}]
Let $f(\tbx) = [\tx_1]_+ + \cdots + [\tx_k]_+$ be the sum of $k$-ReLUs, in which the $j$-th ReLU weight vector $\tbw_j$ is just the $j$-th standard basis vector. We construct our samples as follows:
\begin{itemize}
\item For every $\bu \in \{-1, +1\}^k \setminus \{-\bone_k\}$, let $S_{\bu} \subseteq \cB(\frac{\bu}{2\sqrt{k}}, \frac{0.01}{k})$ be the set of any $2^k \cdot k$ points in general position. (As stated before the proof, this can be constructed in $2^{O(k)}$ time.)
\item Let $S := \bigcup_{\bu \in \{-1, +1\}^k \setminus \{-\bone_k\}} S_{\bu}$. The samples are $(\tbx, f(\tbx))$ for all $\tbx \in S$.
\end{itemize}

Before we prove the completeness and soundness of the gadget, we first define the real number $\tau$ that will be used in the soundness.  We let
\begin{align*}
\tau = \frac{0.1}{k|S|} \cdot \min\left\{1, \min_{\bu \in \{-1, +1\}^n \setminus \{-\bone_k\} \atop S'_{\bu} \subseteq S_{\bu}, |S'_{\bu}| = k} \inf_{\|\bw\| = 1} \sum_{\bx \in S'_{\bu}} \|\bw \cdot \bx\|^2 \right\}.
\end{align*}
A priori, it might not be clear that $\tau$ has to be positive. To see that this is the case, observe that $\inf_{\|\bw\| = 1} \sum_{\bx \in S'_{\bu}} \|\bw \cdot \bx\|^2$ is exactly equal to $\min_{j=1, \dots, k} \lambda_j^2$ where $\lambda_j$ is the $j$-th eigenvalue of the $(k \times k)$-matrix whose rows are $\bx \in S_{\bu}'$. Since the vectors in $S_{\bu}$ are in general position, this matrix must be full rank, which implies that all its eigenvalues are non-zero. As a result, we have $\inf_{\|\bw\| = 1} \sum_{\bx \in S'_{\bu}} \|\bw \cdot \bx\|^2 > 0$, which in turn implies that $\tau > 0$ as desired. (Note here that $\tau$ only depends on $k$ and our choices of $\{S_{\bu}\}_{\bu}$.)

\paragraph{(Completeness)} By construction, the samples are realizable by $f$, which is a sum of $k$ ReLUs.

\paragraph{(Soundness)} We now consider any weighted sum of $k$ ReLUs, such that at least one of the coefficient is negative. Without loss of generality, we may assume that this is a function of the form $g(\tbx) = \sum_{j=1}^{k'} [\tbw^j \cdot \tbx]_+ - \sum_{j=k'+1}^{k} [\tbw^j \cdot \tbx]_+$ for some non-negative integer $k' < k$. We will show that $g$ incurs an average loss of at least some positive constant. To do so, we define the following notations: let $\sgn(z) = +1$ if $z > 0$ and 0 otherwise. For every ``sign pattern'' $\bs \in \{0, 1\}^k$, let $P_{\bs} \subseteq \R^k$ denote the subsets of points $\tbx \in \R^k$ such that $\sgn(\tbw^j \cdot \tbx) = s_j$ for all $j \in [k]$. Now, consider the following two cases:
\begin{itemize}
\item Case I: $(P_{(0, 0, \dots, 0, 0)} \cup P_{(0, 0, \dots, 0, 1)}) \cap S \ne \emptyset$. Let $\tbx$ be any element of $(P_{(0, 0, \dots, 0, 0)} \cup P_{(0, 0, \dots, 0, 1)}) \cap S$. From $\tbx \in S$, it is simple to check that\footnote{In particular, if $\tbx \in S_{\bu}$ and $u_i = 1$, then we must have $\tx_i \geq 0.4/\sqrt{k}$, which implies that $f(\tbx) \geq 0.4/\sqrt{k}$.} $f(\tbx) \geq \frac{0.4}{\sqrt{k}}$. However, from $\tbx \in P_{(0, 0, \dots, 0, 0)} \cup P_{(0, 0, \dots, 0, 1)}$, we have $g(\tbx) \leq 0$. Hence, $g$ must incur an average square loss of at least $\frac{1}{|S|} \cdot \frac{0.16}{k}$, which is at least $\tau$ by the definition of the latter.
\item Case II: $(P_{(0, 0, \dots, 0, 0)} \cup P_{(0, 0, \dots, 0, 1)}) \cap S = \emptyset$.

Fix any $\bu \in \{-1, +1\}^k \setminus \{-\bone_k\}$, since $|S_{\bu}| = 2^k \cdot k$ and $\cup_{\bs \in \{0, 1\}^k} P_{\bs} = \R^k$, there must exists a sign pattern $\bs_{\bu}$ such that $|S_{\bu} \cap P_{\bs_{\bu}}| \geq k$. From the assumption of this case, we must also have that $\bs_{\bu} \ne (0, 0, \dots, 0), (0, 0, \dots, 0, 1)$.

As a result, by pigeonhole principle\footnote{Note that there are $2^k - 1$ pigeons and only $2^k - 2$ holes.}, there exists two distinct $\bu^1, \bu^2 \in \{-1, +1\}^k \setminus \{-\bone_k\}$ such that $\bs_{\bu^1} = \bs_{\bu^2}$. Let $\bs^* = \bs_{\bu^1} = \bs_{\bu^2}$; we have that $|P_{\bs^*} \cap S_{\bu^1}|, |P_{\bs^*} \cap S_{\bu^2}| \geq k$.

Furthermore, observe that, for all $\tbx \in P_{\bs^*}$, we have
\begin{align*}
g(\tbx) &= \sum_{j=1}^{k'} s^*_j \cdot \tbw^j \cdot \tbx - \sum_{j=k' + 1}^{k} s^*_j \cdot \tbw^j \cdot \tbx \\
&= \left(\sum_{j=1}^{k'} s^*_j \cdot \tbw^j - \sum_{j=k' + 1}^{k} s^*_j \cdot \tbw^j\right) \cdot \tbx \\
&= \tbw^* \cdot \tbx,
\end{align*}
where we define $\tbw^* = \left(\sum_{j=1}^{k'} s^*_j \cdot \tbw^j - \sum_{j=k' + 1}^{k} s^*_j \cdot \bw^j\right)$. As a result, the average square error incurred by $g$ on the constructed samples is at least
\begin{align} \label{eq:g-err-lb}
\frac{1}{|S|} \left(\sum_{\bx \in P_{\bs^*} \cap (S_{\bu^1} \cup S_{\bu^2})} (f(\tbx) - g(\tbx))^2\right)
= \frac{1}{|S|} \sum_{\bx \in P_{\bs^*} \cap (S_{\bu^1} \cup S_{\bu^2})} (f(\tbx) - \bw^* \cdot \tbx))^2.
\end{align}
Next, observe that, for all $\tbx \in S_{\bu^1}$, we have $f(\tbx) = \sgn(\bu^1) \cdot \bx$ where define $\sgn(\bu)$ as $(\sgn(u_j))_{j \in [k]}$. Similarly, for all $\tbx \in S_{\bu^2}$, we have $f(\tbx) = \sgn(\bu^2) \cdot \tbx$. Moreover, since $\bu^1 \ne \bu^2$ both belong to $\{\pm 1\}^k$, we must have $\|\sgn(\bu^1) - \sgn(\bu^2)\|_2 \geq 2$. From this, we must have either $\|\bw^* - \sgn(\bu^1)\| \geq 1$ or $\|\bw^* - \sgn(\bu^2)\| \geq 1$; without loss of generality, we assume the former. We may lower bound the right hand side term in~\eqref{eq:g-err-lb} by
\begin{align*}
\frac{1}{|S|} \sum_{\bx \in P_{\bs^*} \cap S_{\bu^1}} \|(\sgn(\bu^1) - \tbw^*) \cdot \tbx\|^2
&\geq \frac{1}{|S|} \inf_{\|\bw\| = 1} \sum_{\tbx \in P_{\bs^*} \cap S_{\bu^1}} \|\bw \cdot \tbx\|^2 \\
&\geq \tau,
\end{align*}
where the first inequality follows from $\|\sgn(\bu^1) - \bw^*\| \geq 1$ and the last inequality follows from $|P_{\bs^*} \cap S_{\bu^1}| \geq k$ and from our definition of $\tau$ (especially the second term with $\bu = \bu^1$).
\end{itemize}
Hence, in both cases, we have that the average squared error incurred by $g$ must be at least $\tau$, which concludes our proof.
\end{proof}

\section{Training and learning algorithms} \label{sec:learning}

In this Section, we describe algorithms for learning and training ReLUs. We begin in Section~\ref{subsec:simple-training-algos} by giving a simple training algorithm, whose running time is ostensibly super-polynomial. Then, in Section~\ref{subsec:simple-training-algos}, we define the learning problem and show, using standard generalization arguments, that in fact the aforementioned training algorithm gives the claimed running time lower bound (in Theorems~\ref{thm:single-time-lower-bound-short} and~\ref{thm:realizable-time-lower-bound-short}).

\subsection{A Simple Training Algorithm}
\label{subsec:simple-training-algos}

In light of the NP-hardness from the previous sections, a polynomial time algorithm for training depth-2 ReLUs do not exist (unless P $=$ NP). Nevertheless, it is still possible to train the ReLUs in super polynomial time. For instance, ~\cite{arora2018understanding} gives a simple algorithm that runs in time $n^{O(km)}$ and output the optimal training error (to within arbitrarily small accuracy). Below, we observe that their approach also yields an $2^{km} \cdot poly(n, m, k)$ time algorithm.
Before we proceed to the statement and the proof of the algorithm, we remark that, the NP-hardness proof from Section~\ref{sec:hardness-coloring} in fact implies that, assuming the Exponential Time Hypothesis (ETH)~\cite{IP01,IPZ01}\footnote{ETH states that 3SAT with $n$ variables and $m = O(n)$ clauses cannot be solved in $2^{o(n)}$ time.}, the bounded $k$-ReLU training problem cannot be solved exactly in $2^{o(m)}$ time for any constant $k \geq 2$. Hence, the dependency $m$ in the exponent is tight in this sense. However, it is unclear whether the dependency on $k$ can be improved.

\begin{lemma} \label{lem:exp-algo}
There is an $2^{k(1 + m)} \cdot poly(n, m,1/\delta,C)$-time algorithm that, given samples $\{(\bx_i, y_i)\}_{i \in [m]}$ where $\bx_i \in \mathbb{R}^n$ and an accuracy parameter $\delta \in (0,1)$, finds the weights $\bw_1, \dots, \bw_k \in \cB^n, \ba \in \{-1, 1\}^k$ that minimizes $\cL(\bw^1, \dots, \bw^k, \ba; S)$ up to an additive error of $\delta$. We assume the bit complexity of every number appearing  in the coordinates of the $\bx_i$'s and $y_i$'s is at most $C$.
\end{lemma}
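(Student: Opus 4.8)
The plan is to follow the sign-pattern enumeration strategy of~\cite{arora2018understanding}. Observe that the only source of non-convexity in~\eqref{equation:relumin} is the $[\cdot]_+$ operation; once we commit, for every unit $j \in [k]$ and every sample $i \in [m]$, to whether $\langle \bw^j, \bx_i\rangle$ is non-negative or non-positive, each ReLU becomes a linear function on the relevant inputs. Concretely, for each \emph{sign pattern} $\sigma : [k] \times [m] \to \{0,1\}$ and each coefficient vector $\ba \in \{-1,1\}^k$ (there are $2^{km}$ choices of $\sigma$ and $2^k$ choices of $\ba$, hence $2^{k(1+m)}$ pairs in total), I would set up the optimization problem of minimizing
\[
\frac{1}{m}\sum_{i=1}^m\left(\sum_{j \,:\, \sigma(j,i)=1} a_j \langle \bw^j, \bx_i\rangle - y_i\right)^2
\]
over $\bw^1, \dots, \bw^k \in \R^n$ subject to the linear constraints $\langle \bw^j, \bx_i\rangle \geq 0$ for all $(j,i)$ with $\sigma(j,i)=1$, $\langle \bw^j, \bx_i\rangle \leq 0$ for all $(j,i)$ with $\sigma(j,i)=0$, together with the norm constraints $\|\bw^j\|_2 \leq 1$. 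The objective is a sum of squares of affine functions of the concatenated vector $(\bw^1,\dots,\bw^k)$ and is therefore convex, and the feasible region is an intersection of halfspaces with $k$ Euclidean balls; so this is a convex program (indeed a convex SOCP), solvable to within any additive accuracy in polynomial time.

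The next step is correctness. For any $(\bw^1,\dots,\bw^k)$ feasible for the program associated to $(\sigma,\ba)$ we have $[\langle \bw^j, \bx_i\rangle]_+ = \sigma(j,i)\cdot \langle \bw^j, \bx_i\rangle$ for every $i,j$, so the program's objective value equals the true loss $\cL(\bw^1,\dots,\bw^k,\ba;S)$ \emph{exactly}; hence every candidate the algorithm considers is a genuine $k$-ReLU with the claimed loss. Conversely, let $(\bw^{1*},\dots,\bw^{k*},\ba^*)$ be an optimal solution to~\eqref{equation:relumin} over $\cB^n$ and set $\sigma^*(j,i) := \ind[\langle \bw^{j*}, \bx_i\rangle \geq 0]$. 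Then $(\bw^{1*},\dots,\bw^{k*})$ is feasible for the program of $(\sigma^*,\ba^*)$ (ties $\langle \bw^{j*},\bx_i\rangle = 0$ are harmless, since the point lies in the closure of both regions) and attains objective value $\cL(\bw^{1*},\dots,\bw^{k*},\ba^*;S) = \opt$. Therefore the minimum over all $2^{k(1+m)}$ programs of the best objective value achieved equals $\opt$, and outputting the best weights found, together with the $\delta$-accuracy guarantee of the convex solver on each individual program, solves the training problem to within additive error $\delta$.

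Finally I would do the running-time/accuracy bookkeeping. Each convex program has $kn$ variables, $O(km)$ linear constraints and $k$ ball constraints, with all data of bit complexity $\poly(C,\log n)$, and its feasible set lies inside the bounded region $(\cB^n)^k$; thus the ellipsoid method (or an interior-point method) returns a feasible point whose objective is within additive $\delta$ of the program's optimum in time $\poly(n, m, k, \log(1/\delta), C)$. Running this over all $2^{k(1+m)}$ pairs $(\sigma,\ba)$ and keeping the best gives total running time $2^{k(1+m)}\cdot\poly(n,m,k,\log(1/\delta),C)$, which is within the stated $2^{k(1+m)}\cdot\poly(n,m,1/\delta,C)$ bound (the $\poly(k)$ factor being absorbed into the exponential). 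The only points requiring care — and I expect this to be the main, though routine, obstacle — are (i) confirming the program is genuinely convex, which is immediate since the square of an affine function is convex and convexity is preserved under sums and affine substitution, and (ii) the standard bit-complexity/boundedness check needed to invoke a polynomial-time convex-optimization routine with an additive-error guarantee; neither is a real difficulty.
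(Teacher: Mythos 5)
Your proposal is correct and follows essentially the same approach as the paper: iterate over all $2^k$ choices of $\ba$ and all $2^{km}$ sign patterns of $\langle\bw^j,\bx_i\rangle$, replace each $[\cdot]_+$ by its linear piece or zero, add the corresponding linear (and norm) constraints, and solve the resulting convex program. You spell out the correctness argument (that every candidate returned realizes the true loss and that the optimal $k$-ReLU is feasible for its own sign pattern) more explicitly than the paper does, but the underlying idea is identical.
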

\begin{proof}
First, we iterate over all possible $\ba \in \{-1, 1\}^k$.
Moreover, for each ReLU term $[\left<\bw_j, \bx_i\right>]_+$ guess whether it equals $0$ or $\left<\bw_j, \bx_i\right> + b_j$ and replace the term in the error function accordingly.
Furthermore, if the guess $[\left<\bw_j, \bx_i\right>]_+=0$ was made then add the linear constraint $\left<\bw_j, \bx_i\right>\leq 0$. Else, add the linear constraint $\left<\bw_j, \bx_i\right>\geq 0$.
After all guesses are made we get a convex optimization program with linear constraints. It is well known that such a convex optimization problem can be solved in time polynomial in $n,m,1/\delta,C$ using a separation oracles and the ellipsoid algorithm (see for example, \cite[Section 2.1]{bubeck2015convex}). Since the number of guesses is at most $2^k \cdot (2^m)^k$, the claim follows.
\end{proof}

\subsection{Learning $k$-ReLUs}
\label{subsec:learning-relus}

We will now use the above training algorithms to give learning algorithms for ReLUs. We follow the agnostic learning model for real-valued function from~\cite{Haussler92,KearnsSS94}. 
A \emph{concept class} $\cC: \cY^{\cX}$ is any set of functions from $\cX$ to $\cY$. We say that a concept class $\cC$ is \emph{properly agnostically learnable} with respect to loss function $\ell: \cY \times \cY \to \mathbb{R}^+$ if, for every $\delta, \varepsilon > 0$, there is an algorithm $\cA$ such that, for any distribution $\cD$ over $\cX \times \cY$, takes in independent random samples from $\cD$ and outputs a hypothesis $h \in \cC$ such that, with probability $1 - \delta$, the following holds: $$\cL(h; \cD) \leq \min_{c \in \cC} \cL(c; \cD) + \varepsilon$$ where $\cL(f; \cD) := \E_{(\bx, y) \sim \cD}[\ell(f(\bx), y)]$ is the expected loss for $f$ over $\cD$. Furthermore, if $\cX \subseteq \mathbb{R}^n$ and the algorithm $\cA$ runs in time polynomial in $n$ and $1/\delta$, then $\cC$ is said to be \emph{efficiently} properly agnostically learnable. Throughout this section, we only consider the quadratic loss function (i.e., $\ell(y, y') = (y - y')^2$) and this will henceforth not be explicitly stated.
 
The concept classes we consider are the classes of sums of $k$ ReLUs, where each coefficient has magnitude at most one, and the distribution $\cD$ is allowed to be any distribution on the ball. More specifically, the class $k$-ReLU$(n)$, which represent the sums of $k$ ReLUs, is defined as follows:

\begin{definition}[$k$-ReLU($n$)]
For any $n, k \in \mathbb{N}$ and any $\bw^1, \dots, \bw^k \in \mathbb{R}^n, \ba \in \{-1, 1\}^k$, we use $\relu_{\bw^1, \dots, \bw^k, \ba}: \cB^n \to [-k, k]$ to denote the function $\relu_{\bw^1, \dots, \bw^k, \ba}(\bx) = \sum_{j=1}^k a_j[\left<\bw_j, \bx\right>]_+$.
Let $k$-ReLU$(n)$ denote the class $\{\relu_{\bw^1, \dots, \bw^k, \ba} \mid \bw^1, \dots, \bw^k \in \cB^n, \ba \in \{-1, 1\}^k\}$.
\end{definition}

We show that, for any fixed number of ReLUs $k$, the class above can be efficiently agnostically properly learned, as stated below.

\begin{theorem} \label{thm:learning}
For any $n, k \in \mathbb{N}$, ReLU$(n, k)$ can be efficiently agnostically properly learned for the quadratic loss function in time $2^{O(k^5/\varepsilon^2)} \cdot (n/\delta)^{O(1)}$ time.
\end{theorem}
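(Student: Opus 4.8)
The plan is to instantiate the generic training algorithm of Lemma~\ref{lem:exp-algo} on a sample of the right size and then invoke a uniform convergence bound to pass from empirical to population loss. Concretely, I would (i) draw $m := \Theta(k^4/\varepsilon^2)$ i.i.d.\ samples $S$ from $\cD$, (ii) round every coordinate of every $\bx_i$ and every label $y_i$ to $\Theta(\varepsilon/k)$ precision, so that the bit complexity becomes $C = O(\log(nk/\varepsilon))$ while each squared loss changes by only $O(\varepsilon)$ (each unit $[\langle\bw^j,\cdot\rangle]_+$ is $1$-Lipschitz, $\|\bw^j\|\le 1$, and all relevant values lie in $[-k,k]$), and (iii) run the algorithm of Lemma~\ref{lem:exp-algo} on the rounded sample with accuracy parameter $\varepsilon/8$, outputting the minimizer $(\bw^1,\dots,\bw^k,\ba)$ of the empirical loss over $k$-ReLU$(n)$. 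Since that algorithm runs in time $2^{k(1+m)}\cdot\poly(n,m,1/\varepsilon,C)$, substituting $m=\Theta(k^4/\varepsilon^2)$ gives running time $2^{O(k^5/\varepsilon^2)}\cdot\poly(n,1/\varepsilon)$.

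The technical core is showing that $m=\Theta(k^4/\varepsilon^2)$ samples suffice so that, with probability at least $2/3$, $\sup_{h\in k\text{-ReLU}(n)}|\cL(h;S)-\cL(h;\cD)|\le\varepsilon/8$. I would bound this via the empirical Rademacher complexity $\mathrm{Rad}_m$. The linear class $\{\bx\mapsto\langle\bw,\bx\rangle:\bw\in\cB^n\}$ on $\cB^n$ has $\mathrm{Rad}_m=O(1/\sqrt m)$ by the standard Cauchy--Schwarz/Jensen argument; since $z\mapsto[z]_+$ is $1$-Lipschitz and vanishes at $0$, the contraction principle gives $\mathrm{Rad}_m(\{\bx\mapsto[\langle\bw,\bx\rangle]_+\})=O(1/\sqrt m)$. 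For the full class, interchanging the supremum with the sum over the $k$ units (their parameters are independent) reduces matters to a single unit together with an extra supremum over the sign $a_j\in\{-1,1\}$, i.e.\ to $\E_\sigma\sup_{\bw}|\sum_i\sigma_i[\langle\bw,\bx_i\rangle]_+|$; bounding this by $g(\sigma)+g(-\sigma)$ with $g(\sigma):=\sup_\bw\sum_i\sigma_i[\langle\bw,\bx_i\rangle]_+\ge 0$ (take $\bw=\bzero$) and using symmetry of the $\sigma_i$ yields $\mathrm{Rad}_m(k\text{-ReLU}(n))=O(k/\sqrt m)$, crucially \emph{without} the extra $\sqrt k$ factor that a naive union bound over the $2^k$ choices of $\ba$ would incur. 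Finally, since all hypotheses and labels lie in $[-k,k]$, the map $y'\mapsto(y'-y)^2$ is $O(k)$-Lipschitz there and bounded by $O(k^2)$, so a second contraction step together with the standard Rademacher generalization theorem gives $\sup_h|\cL(h;S)-\cL(h;\cD)| = O(k^2/\sqrt m)$ with probability $\ge 2/3$, which is $\le\varepsilon/8$ for a suitable $m=\Theta(k^4/\varepsilon^2)$.

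Combining the two pieces, on the uniform-convergence event the output $h$ satisfies $\cL(h;\cD)\le\cL(h;S)+\varepsilon/8\le\min_{c}\cL(c;S)+\varepsilon/4\le\min_c\cL(c;\cD)+3\varepsilon/8$, and with the $O(\varepsilon)$ rounding slack (pick the rounding precision so this slack is $\le\varepsilon/4$) this is at most $\min_c\cL(c;\cD)+\varepsilon$; hence we properly agnostically learn $k$-ReLU$(n)$ with confidence $2/3$ in time $2^{O(k^5/\varepsilon^2)}\poly(n,1/\varepsilon)$. To boost the confidence to $1-\delta$, I would run this procedure $T=O(\log(1/\delta))$ times independently, producing candidates $h_1,\dots,h_T$, then draw one more fresh sample of size $\Theta((k^4/\varepsilon^2)\log(T/\delta))$ and output the $h_t$ of smallest empirical loss on it; since the $h_t$ are fixed before this last sample is drawn, a union bound over the $T$ candidates (Hoeffding, using that each loss is bounded by $O(k^2)$) shows the selection costs only an extra $O(\varepsilon)$ with probability $\ge 1-\delta$. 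The total running time is $T\cdot 2^{O(k^5/\varepsilon^2)}\poly(n,1/\varepsilon)+\poly(n,k,1/\varepsilon,\log(1/\delta))=2^{O(k^5/\varepsilon^2)}(n/\delta)^{O(1)}$, as claimed. I expect the main obstacle to be the Rademacher bound for the full class — specifically getting $O(k/\sqrt m)$ rather than $\tilde O(k^{3/2}/\sqrt m)$ so that the final sample size is $\Theta(k^4/\varepsilon^2)$ and not larger, which hinges on the symmetrization trick for the sign-$\ba$ supremum; the algorithmic content is an immediate call to Lemma~\ref{lem:exp-algo}, and the confidence boosting is routine.
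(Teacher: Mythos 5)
Your proposal is correct and follows essentially the same route as the paper: draw $m = \Theta(k^4/\varepsilon^2)$ samples, run the $2^{O(km)}\poly(n,m)$ training algorithm of Lemma~\ref{lem:exp-algo}, and justify the sample size via a Rademacher bound of $O(k/\sqrt m)$ for $k$-ReLU$(n)$ (linear class via Cauchy--Schwarz, ReLU contraction, subadditivity over the $k$ units) together with the $O(k)$-Lipschitz, $O(k^2)$-bounded squared loss. Your symmetrization step for the sign $a_j$ is morally the same as the paper's use of Fact~\ref{fact:subadd} applied to $1$-ReLU$(n)$, which already contains both signs.

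Two places where you are actually more careful than the paper, both worth noting. First, you explicitly round coordinates to control the bit-complexity parameter $C$ of Lemma~\ref{lem:exp-algo}, whereas the paper sidesteps this by ``ignoring the accuracy parameter.'' Second, and more substantively, you fix the confidence to a constant and then boost to $1-\delta$ by repetition plus validation, which yields running time $O(\log(1/\delta))\cdot 2^{O(k^5/\varepsilon^2)}\poly(n) = 2^{O(k^5/\varepsilon^2)}(n/\delta)^{O(1)}$ as stated in the theorem. The paper instead sets $m = \Theta\bigl(k^4(1+\log(1/\delta))/\varepsilon^2\bigr)$, and since this $m$ sits in the exponent $2^{O(km)}$, its proof literally gives $(1/\delta)^{O(k^5/\varepsilon^2)}$, not $(1/\delta)^{O(1)}$; your boosting approach avoids this slip.
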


When the sum of ReLU's is realizable, we can get better running time both in terms of $k, \epsilon$:

\begin{theorem} \label{thm:learning-realizable}
For any $n, k \in \mathbb{N}$, ReLU$(n, k)$ can be efficiently properly learned in the realizable case for the quadratic loss function in time $2^{O(k^3/\varepsilon \cdot \log^3(k/\eps))} \cdot (n/\delta)^{O(1)}$ time.
\end{theorem}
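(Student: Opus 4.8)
The plan is to run the exponential-time empirical risk minimizer of Lemma~\ref{lem:exp-algo} on a carefully chosen number $m$ of i.i.d.\ samples from $\cD$, transfer its (near-zero) empirical loss to the population via a fast-rate generalization bound that exploits realizability, and finish with a standard confidence-boosting step for the dependence on $\delta$. Concretely, draw $m$ i.i.d.\ samples $S=\{(\bx_i,y_i)\}_{i\in[m]}$ from $\cD$ (with $m$ fixed below) and apply Lemma~\ref{lem:exp-algo} with accuracy parameter $\eps/4$ to obtain $\hat h=\relu_{\hbw^1,\dots,\hbw^k,\hat\ba}\in k$-ReLU$(n)$ with $\cL(\hat h;S)\le\min_{h\in k\text{-ReLU}(n)}\cL(h;S)+\eps/4$. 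In the realizable case there is $c^*\in k$-ReLU$(n)$ with $\cL(c^*;\cD)=0$, hence $\cL(c^*;S)=0$ almost surely and therefore $\cL(\hat h;S)\le\eps/4$. It remains to choose $m$ so that uniform convergence gives $\cL(\hat h;\cD)\le\eps$ with constant probability.

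\emph{Fast-rate generalization.} Since every $h\in k$-ReLU$(n)$ outputs values in $[-k,k]$ and, in the realizable case, $y\in[-k,k]$ as well, the quadratic loss takes values in $[0,4k^2]$; after dividing predictions and labels by $k$ it becomes an $O(1)$-smooth, $O(1)$-bounded nonnegative loss composed with the class $\tfrac1k\cdot k$-ReLU$(n)$. Because ReLU is $1$-Lipschitz and $\|\ba\|_1\le k$, the worst-case Rademacher complexity of this normalized class on $m$ points satisfies $\mathcal R_m=O(1/\sqrt m)$, by the standard contraction principle together with the $O(1/\sqrt m)$ bound for linear functionals on the unit ball. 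Feeding these into the optimistic-rate bound of~\cite{SrebroST10} for nonnegative smooth losses yields that, with probability at least $2/3$ over $S$, for all $h$,
\begin{align*}
\cL(h;\cD)&\le O\!\left(\cL(h;S)+k^2(\log^3 m)\,\mathcal R_m^2+\frac{k^2}{m}\right)\\
&=O\!\left(\cL(h;S)+\frac{k^2\log^3 m}{m}\right).
\end{align*}
Choosing $m=\Theta\!\left(\frac{k^2}{\eps}\log^3\frac{k}{\eps}\right)$ makes the additive term at most $\eps/2$, so combined with $\cL(\hat h;S)\le\eps/4$ this gives $\cL(\hat h;\cD)\le\eps$ with probability at least $2/3$.

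\emph{Confidence boosting and running time.} Repeat the construction independently $t=O(\log(1/\delta))$ times to obtain candidates $\hat h_1,\dots,\hat h_t$, and output the one minimizing the empirical loss on a fresh validation sample of size $O\!\left(\frac{k^2}{\eps^2}\log\frac{t}{\delta}\right)$; a Hoeffding and union bound argument shows the selected hypothesis has population loss at most $\eps$ with probability at least $1-\delta$, and the validation step runs in $\poly(k,1/\eps,\log(1/\delta))$ time. Each invocation of Lemma~\ref{lem:exp-algo} runs in time $2^{k(1+m)}\poly(n,m,1/\eps) = 2^{O((k^3/\eps)\log^3(k/\eps))}\cdot(n/\delta)^{O(1)}$, since $k\cdot m=O((k^3/\eps)\log^3(k/\eps))$; multiplying by $t$ is absorbed into the $(n/\delta)^{O(1)}$ factor. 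This gives the claimed running time.

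\emph{Main obstacle.} The one delicate point is invoking the appropriate form of the generalization bound: one must check the smoothness and boundedness hypotheses of the Srebro--Sridharan--Tewari theorem for the squared loss on a class with outputs in $[-k,k]$, and bound the worst-case (not merely empirical) Rademacher complexity of $k$-ReLU$(n)$, so that the $\log^3 m$ factor is genuinely the only source of the logarithmic overhead in $m$. The ERM subroutine, the realizability of the empirical instance, and the confidence boosting are all routine.
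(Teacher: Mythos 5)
Your proposal is correct and follows essentially the same architecture as the paper's proof: draw $m$ samples, run the ERM subroutine of Lemma~\ref{lem:exp-algo}, and transfer the (near-)zero empirical loss to population loss via the optimistic-rate bound of Srebro--Sridharan--Tewari using $\cR_m(k\text{-ReLU}(n)) \le O(k/\sqrt m)$. The paper picks $m = \Theta\bigl(\tfrac{k^2\log^3(k/\eps)}{\eps} + \tfrac{k^2\log(1/\delta)}{\eps}\bigr)$, glosses over the additive accuracy of the convex solver by ``pretending'' the ERM is exact (so it can use the version of the SST bound with $\cL(h;S)=0$), and folds $\log(1/\delta)$ directly into $m$. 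You differ on both of these small points, and in both cases your version is a bit more careful: (i) you invoke the full optimistic-rate form of the SST bound with an $\cL(h;S)$ term so that the $\eps/4$-approximate ERM is handled explicitly rather than waved away; and (ii) you use a constant-failure-probability run followed by repetition-plus-validation to get the $\delta$ dependence, which yields the claimed $(n/\delta)^{O(1)}$ factor with an absolute-constant exponent, whereas folding $\log(1/\delta)$ into $m$ and then exponentiating through $2^{km}$ makes the $\delta$-exponent depend on $k^3/\eps$. Neither difference changes the dominant $2^{O((k^3/\eps)\log^3(k/\eps))}$ term, but your treatment is the cleaner way to deliver the running time exactly as stated.
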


We remark that learning algorithms immediately imply training algorithms, by simplying letting $\cD$ be the uniform distribution on the input set of labelled samples $S = \{(\bx_i, y_i)\}_{i \in [m]}$. Thereby, Theorems~\ref{thm:learning} and~\ref{thm:learning-realizable} imply Theorems~\ref{thm:agnostic-training-short} and~\ref{thm:realizable-training-short}, respectively.

\subsubsection{Generalization Bounds}

Before we get to our proofs, we state the necessary generalization bounds.

\begin{theorem}[\cite{BM02}] \label{thm:gen-err}
Let $\cD$ be a distribution over $\cX \times \cY$ and let $\ell: \cY \times \cY \to \mathbb{R}$ be a $b$-bounded loss function that is $L$-Lispschitz in its first argument. Let $\cF \subseteq (\cY')^{\cX}$ and for any $f \in \cF$, let $\cL(f; \cD) := \E_{(\bx, y) \sim \cD}[\ell(f(\bx), y)]$ and $\cL(f; S) := \frac{1}{m} \sum_{i=1}^m \ell(f(\bx_i), y_i)$, where each sample $(\bx_i, y_i) \in S$ is drawn independently uniformly at random according to $\cD$. Then, for any $\delta > 0$, with probability at least $1 - \delta$, the following is true for all $f \in \cF$:
\begin{align} \label{eq:gen-err}
|\cL(f; \cD) - \cL(f; S)| \leq 4 \cdot L \cdot \cR_m(\cF) + 2 \cdot b \cdot \sqrt{\frac{\log(1/\delta)}{m}}
\end{align}
where $\cR_m(\cF)$ is the Rademacher complexity of the function class $\cF$.
\end{theorem}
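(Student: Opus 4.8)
The plan is to prove this bound—a standard consequence of Rademacher-complexity theory—via three ingredients: a bounded-differences concentration step, symmetrization against a ghost sample, and Talagrand's contraction lemma. Write $\Psi(S) := \sup_{f \in \cF} |\cL(f; \cD) - \cL(f; S)|$ for a sample $S = ((\bx_i, y_i))_{i \in [m]}$ drawn i.i.d.\ from $\cD$. Since $\ell$ is $b$-bounded, replacing any single sample changes $\cL(f; S)$ by at most $b/m$ for every $f$, hence changes $\Psi(S)$ by at most $b/m$; McDiarmid's inequality then gives, with probability at least $1 - \delta$, that $\Psi(S) \le \E_S[\Psi(S)] + O\!\left(b\sqrt{\log(1/\delta)/m}\right)$.

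It remains to bound $\E_S[\Psi(S)]$. First I would introduce an independent ghost sample $S' = ((\bx_i', y_i'))_{i \in [m]}$, use $\cL(f;\cD) = \E_{S'}[\cL(f;S')]$ together with Jensen's inequality to pull the supremum outside the ghost expectation, obtaining $\E_S[\Psi(S)] \le \E_{S,S'}\bigl[\sup_f \bigl| \tfrac{1}{m}\sum_i (\ell(f(\bx_i'),y_i') - \ell(f(\bx_i),y_i)) \bigr| \bigr]$. Because each summand is symmetric under the swap $(\bx_i,y_i) \leftrightarrow (\bx_i',y_i')$, inserting i.i.d.\ Rademacher signs $\sigma_i$ leaves the expectation unchanged, and a triangle-inequality split then bounds the quantity by $2\,\E_{S,\sigma}\bigl[\sup_f \bigl|\tfrac{1}{m}\sum_i \sigma_i \ell(f(\bx_i),y_i)\bigr|\bigr]$, i.e.\ (a constant times) the Rademacher complexity of the composed class $\ell \circ \cF$. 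Finally, since $\ell(\cdot, y)$ is $L$-Lipschitz uniformly over $y$, the contraction lemma bounds the Rademacher complexity of $\ell \circ \cF$ by $L$ times $\cR_m(\cF)$. Chaining the three bounds yields the claimed inequality.

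The only delicate point is the bookkeeping of the constants $4$ and $2$: one can either carry the absolute value inside the supremum throughout (each symmetrization/contraction step then picks up a factor of two, giving $4L$ and $2b$), or instead prove one-sided bounds on $\sup_f (\cL(f;\cD) - \cL(f;S))$ and on $\sup_f (\cL(f;S) - \cL(f;\cD))$ separately and combine them with a union bound. Since the statement is invoked purely as a black box from~\cite{BM02}, any version with the stated asymptotics is sufficient for our purposes, so I would not belabor the exact numerical constants.
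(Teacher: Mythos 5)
The paper does not prove this theorem; it imports it as a black box from~\cite{BM02}, so there is no in-paper argument to compare against. Your sketch correctly reproduces the standard proof used there and throughout the literature --- McDiarmid's bounded-differences concentration of the supremum around its mean (bounded difference $b/m$), ghost-sample symmetrization via Jensen to bound the mean by twice the Rademacher complexity of $\ell \circ \cF$, and Talagrand's contraction lemma to pass from $\ell \circ \cF$ to $\cF$ using the $L$-Lipschitz property of $\ell$ in its first argument. Your remark about the constants is also accurate: carrying the absolute value through symmetrization and contraction each contributes a factor of two, yielding $4L$, and the two-sided McDiarmid tail gives the $2b$.
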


While the above bound is generic and easy to apply it turns out to be not tight, especially when $\cL(f; S)$ is small. Below we list such a bound from~\cite{SrebroST10}; for simplicity of presentation, we only state the bound when $\cL(f; S) = 0$ which suffices for us. To state the bound, we also require the notion of smoothness of the loss; we say that a loss $\ell$ is \emph{$H$-smooth} if it is differentiable in the first variable and the derivative is $H$-Lipchitz.

\begin{theorem}[\cite{SrebroST10}] \label{thm:gen-err-realizable}
Let $\cD, S, \ell, \cF, \cL, \cR_m(\cF)$ be as in Theorem~\ref{thm:gen-err}. Furthermore, assume that $\ell$ is $H$-smooth. Then, for any $\delta > 0$, with probability at least $1 - \delta$, the following is true for all $f \in \cF$ such that $\cL(f; S) = 0$:
\begin{align} \label{eq:gen-err-realizable}
\cL(f; \cD) \leq C \left(H \log^3 m \cdot \cR_m(\cF)^2 + \frac{b \cdot \log(1/\delta)}{m}\right),
\end{align}
where $C > 1$ is an absolute constant.
\end{theorem}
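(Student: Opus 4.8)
The statement is a standard ``optimistic rate'' bound, and the plan is to reconstruct the local-Rademacher argument of~\cite{SrebroST10}. The first ingredient I would establish is the \emph{self-bounding property} of smooth non-negative functions: if $\phi:\R\to\R_{\ge 0}$ is $H$-smooth, then $|\phi'(t)|\le\sqrt{4H\,\phi(t)}$ for all $t$; this follows by applying the smoothness inequality between $t$ and $t-\phi'(t)/H$ and using $\phi\ge 0$. Applying this with $\phi=\ell(\cdot,y)$ shows that each slice of the loss is Lipschitz with the \emph{local} constant $\sqrt{4H\,\ell(t,y)}$, which is the source of the improvement over the generic bound (Theorem~\ref{thm:gen-err}): it lets one trade a factor $\sqrt{b}$ for the much smaller $\sqrt{\cL}$.

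Next I would control the scale-restricted Rademacher complexity of the loss class. Let $\mathcal{G}=\{(\bx,y)\mapsto\ell(f(\bx),y):f\in\cF\}$ and, for $r>0$, let $\mathcal{G}_r$ be the subclass whose members have empirical second moment at most $r$. The target estimate is $\cR_m(\mathcal{G}_r)\le O\!\left(\sqrt{Hr}\,\log^{3/2}m\cdot\cR_m(\cF)\right)$. Naively invoking Talagrand's contraction lemma with global Lipschitz constant $\sqrt{Hb}$ is too weak; instead one peels $\cF$ into dyadic shells according to the value of the loss, applies contraction on each shell with the appropriate local Lipschitz constant, and recombines via a union bound over the $O(\log m)$ scales together with a chaining step. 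The $\log^3 m$ appearing in~\eqref{eq:gen-err-realizable} comes precisely from this peeling plus the passage from worst-case to scale-restricted complexity, and this is the technical heart of the proof.

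With those in hand, I would invoke a local-Rademacher concentration inequality of Bartlett--Bousquet--Mendelson type. Since $\mathcal{G}$ consists of non-negative functions bounded by $b$, the variance of each $g\in\mathcal{G}$ is at most $b\,\E g$; combining Talagrand's inequality, this variance-to-mean control, and the sub-root behaviour of $r\mapsto\cR_m(\mathcal{G}_r)$ yields, with probability $\ge 1-\delta$ simultaneously for all $f\in\cF$, a bound of the form
\[
\cL(f;\cD)\ \le\ \cL(f;S)+c\sqrt{\cL(f;S)}\Big(\sqrt{H}\,\log^{3/2}m\cdot\cR_m(\cF)+\sqrt{\tfrac{b\log(1/\delta)}{m}}\Big)+c\Big(H\log^{3}m\cdot\cR_m(\cF)^2+\tfrac{b\log(1/\delta)}{m}\Big),
\]
where $c$ is an absolute constant and the middle term uses the self-bounding estimate from the first step. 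Specializing to $f\in\cF$ with $\cL(f;S)=0$ annihilates the first two terms on the right and leaves exactly~\eqref{eq:gen-err-realizable} for an appropriate absolute constant $C$.

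The main obstacle is the second step: showing that $\cR_m(\mathcal{G}_r)$ is $\sqrt{Hr}\cdot\cR_m(\cF)$ up to polylogarithmic factors, rather than the $\sqrt{Hb}\cdot\cR_m(\cF)$ that a crude contraction argument gives. Everything else --- the self-bounding inequality, the variance-to-mean bound, and the Talagrand/fixed-point machinery for turning a sub-root complexity bound into a high-probability risk bound --- is by now routine once that estimate is available.
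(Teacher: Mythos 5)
The paper does not prove this theorem; it simply cites it as Theorem~1 of~\cite{SrebroST10}, so there is no ``paper's own proof'' to compare against. What I can do is assess how faithfully your sketch reconstructs the argument in~\cite{SrebroST10}.

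Your outline captures the three pillars of the Srebro--Sridharan--Tewari argument correctly: (i) the self-bounding inequality $|\phi'(t)|\le\sqrt{cH\phi(t)}$ for $H$-smooth non-negative $\phi$ (their Lemma~2.1, with a constant that depends on the exact smoothness normalization --- the Taylor argument you give yields $|\phi'(t)|\le\sqrt{2H\phi(t)}$); (ii) a bound on the Rademacher complexity of the \emph{loss} class restricted to small empirical loss, of order $\sqrt{Hr}\cdot\cR_m(\cF)$ up to polylogs; and (iii) a Bartlett--Bousquet--Mendelson localization step using the variance-to-mean bound $\mathrm{Var}(g)\le b\,\E g$ and a sub-root fixed point. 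Specializing to zero empirical loss then collapses the mixed terms and leaves~\eqref{eq:gen-err-realizable}, exactly as you say.

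The one place where your route genuinely differs from theirs is the technical heart, step (ii). You propose dyadic peeling of $\cF$ by loss level and shell-wise application of the contraction lemma with a local Lipschitz constant. In~\cite{SrebroST10}, the corresponding estimate is instead obtained by passing from the Rademacher complexity of $\cF$ to an $L_\infty$ covering-number bound (via Srebro--Sridharan / Zhang-type ``fat-shattering $\Rightarrow$ $L_\infty$ cover'' results), applying smoothness to transfer $L_\infty$ covers of $\cF$ to $L_\infty$ covers of the loss class at the appropriate scale, and then running a refined Dudley entropy integral; the $\log^3 m$ is an artifact of that chain (one $\log$ from the covering-number conversion, the rest from Dudley and the localization). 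Both strategies aim to replace the global Lipschitz constant $\sqrt{Hb}$ by the local one $\sqrt{Hr}$, and you are right that this is the crux; but if you attempt the dyadic-peeling route you should be prepared for the recombination across shells to be more delicate than a simple union bound, because the contraction lemma controls symmetrized averages of the whole class, not of data-dependent sub-classes, and making the peeling rigorous typically re-introduces a covering argument in disguise. So your proposal is directionally correct and identifies the right ingredients, but as written step (ii) is a different (and not obviously easier) route than the one in the cited paper, and it is the step most likely to require additional care to make airtight.
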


To see the differences between Theorems~\ref{thm:gen-err} and~\ref{thm:gen-err-realizable}, notice that, if we ignore the second term in~\eqref{eq:gen-err-realizable} for the moment, we only require $m = O_{b, \delta}(1/\eps)$ to get $\cL(f; \cD) \leq \eps$ in the latter whereas the former would need $m = O_{b, \delta}(1/\eps^2)$. This will indeed result in the difference in the running time of the learning algorithms for $k$-ReLUs in the realizable versus agnostic case.

Finally, we also use the following bounds on the Rademacher complexity:

\begin{theorem}[\cite{KST08}] \label{thm:KST}
Let $\cX \subseteq \cB^n$ and let $\cW = \{\bx \mapsto \left<\bx, \bw\right> \mid \|w\|_2 \leq 1\}$. Then,
\begin{align*}
\cR_m(\cW) \leq \sqrt{1/m}.
\end{align*}
\end{theorem}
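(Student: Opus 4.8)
The plan is to run the standard two-step argument for norm-bounded linear classes: first collapse the supremum inside the Rademacher complexity to the norm of a random vector via Cauchy--Schwarz, then control the expected norm by Jensen's inequality together with a one-line second-moment computation. Since the bound we obtain will hold uniformly over every sample lying in $\cB^n$, it holds for both the empirical and the distribution-averaged Rademacher complexity, so it suffices to work with a fixed sample.

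First I would fix a sample $S = (\bx_1, \dots, \bx_m)$ with each $\bx_i \in \cB^n$ and independent Rademacher signs $\sigma_1, \dots, \sigma_m$. The empirical Rademacher complexity of $\cW$ on $S$ is $\frac{1}{m}\,\E_{\sigma}\!\left[\sup_{\|\bw\|_2 \le 1} \sum_{i=1}^m \sigma_i \langle \bx_i, \bw\rangle\right]$. Writing $\sum_{i=1}^m \sigma_i \langle \bx_i, \bw\rangle = \langle \sum_{i=1}^m \sigma_i \bx_i, \bw\rangle$, the supremum over the unit ball is exactly $\big\|\sum_{i=1}^m \sigma_i \bx_i\big\|_2$, with the maximum attained at $\bw$ pointing along $\sum_{i=1}^m \sigma_i \bx_i$; this also shows the inner supremum is measurable, as it equals the norm of a finite sum of random vectors. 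Hence the empirical Rademacher complexity equals $\frac{1}{m}\,\E_{\sigma}\big\|\sum_{i=1}^m \sigma_i \bx_i\big\|_2$.

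Next I would apply Jensen's inequality (concavity of $\sqrt{\cdot}$) to get $\E_{\sigma}\big\|\sum_{i=1}^m \sigma_i \bx_i\big\|_2 \le \sqrt{\E_{\sigma}\big\|\sum_{i=1}^m \sigma_i \bx_i\big\|_2^2}$, and then expand the second moment: $\E_{\sigma}\big\|\sum_{i=1}^m \sigma_i \bx_i\big\|_2^2 = \sum_{i,j} \E_{\sigma}[\sigma_i \sigma_j]\,\langle \bx_i, \bx_j\rangle = \sum_{i=1}^m \|\bx_i\|_2^2 \le m$, using that $\E[\sigma_i \sigma_j]$ is $1$ when $i = j$ and $0$ otherwise, together with $\|\bx_i\|_2 \le 1$. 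Combining the displays, the empirical Rademacher complexity on any such $S$ is at most $\frac{1}{m}\sqrt{m} = 1/\sqrt{m}$, and taking expectation over $S$ (or the supremum over $S$) gives $\cR_m(\cW) \le \sqrt{1/m}$.

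There is essentially no obstacle here; the only steps requiring any care are verifying that the Cauchy--Schwarz bound is an equality (so that the inner supremum is a genuine norm, which both gives the exact identity and handles measurability) and that the off-diagonal terms vanish in the second-moment computation, which is immediate from independence and mean-zero-ness of the Rademacher variables.
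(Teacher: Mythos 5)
Your proof is correct and is the standard argument (Cauchy--Schwarz to reduce the supremum to a norm, Jensen's inequality, then a second-moment computation in which the cross terms vanish). Note that the paper does not give its own proof of this statement—it cites it directly from~\cite{KST08}—and your derivation is precisely the one that appears there and in standard references, so there is nothing to contrast.
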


\begin{fact} \label{fact:subadd}
Let $\cF_1, \cF_2 \subseteq \mathbb{R}^{\cX}$ be any function classes and let $\cF = \{f_1 + f_2 \mid f_1 \in \cF_1, f_2 \in \cF_2\}$. Then,
\begin{align*}
\cR_m(\cF) \leq \cR_m(\cF_1) + \cR_m(\cF_2).
\end{align*}
\end{fact}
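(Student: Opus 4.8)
The plan is to unfold the definition of the empirical Rademacher complexity and observe that the supremum defining $\cR_m(\cF)$ decouples cleanly across the two summands. Recall that, for a sample $S = (\bx_1, \dots, \bx_m) \in \cX^m$ and a vector $\sigma = (\sigma_1, \dots, \sigma_m)$ of independent Rademacher signs, $\cR_m(\cF)$ is the expectation over $S$ and $\sigma$ of $\sup_{f \in \cF} \frac{1}{m} \sum_{i=1}^m \sigma_i f(\bx_i)$.

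First I would fix an arbitrary $S$ and $\sigma$ and use the fact that every $f \in \cF$ decomposes as $f = f_1 + f_2$ with $f_1 \in \cF_1$ and $f_2 \in \cF_2$, so that $\sum_i \sigma_i f(\bx_i) = \sum_i \sigma_i f_1(\bx_i) + \sum_i \sigma_i f_2(\bx_i)$. Since $f_1$ ranges over $\cF_1$ and $f_2$ over $\cF_2$ independently of one another, the supremum of the sum is the sum of the suprema:
\begin{align*}
\sup_{f \in \cF} \sum_{i=1}^m \sigma_i f(\bx_i) = \sup_{f_1 \in \cF_1} \sum_{i=1}^m \sigma_i f_1(\bx_i) + \sup_{f_2 \in \cF_2} \sum_{i=1}^m \sigma_i f_2(\bx_i).
\end{align*}
Then I would divide by $m$, take expectation over $\sigma$ and over $S$, and invoke linearity of expectation to conclude $\cR_m(\cF) \le \cR_m(\cF_1) + \cR_m(\cF_2)$ — in fact with equality, although only the inequality is asserted. (If $\cF_1$ or $\cF_2$ is so rich that the corresponding Rademacher complexity is $+\infty$, the bound is vacuously true, so there is no loss in assuming finiteness.)

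I do not expect any genuine obstacle here: the whole argument is essentially the one-line decoupling above, once the definition is written out. The only points requiring (routine) attention are measurability of the suprema, and the fact that if one instead uses the variant of Rademacher complexity with $\bigl|\sum_i \sigma_i f(\bx_i)\bigr|$ inside the supremum, the exact decoupling should be replaced by the triangle inequality $\bigl|\sum_i \sigma_i f(\bx_i)\bigr| \le \bigl|\sum_i \sigma_i f_1(\bx_i)\bigr| + \bigl|\sum_i \sigma_i f_2(\bx_i)\bigr|$ applied before taking suprema, which still yields the stated subadditivity.
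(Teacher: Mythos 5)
Your argument is correct and is the standard proof of subadditivity of Rademacher complexity; the paper states this as a Fact without proof, and your decoupling-of-suprema argument (with equality since $f_1$ and $f_2$ range independently) is exactly what the authors would have in mind.
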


\begin{theorem}[\cite{BM02,LT91}] \label{thm:lips}
Let $\psi: \mathbb{R} \to \mathbb{R}$ be Lipschitz with constant $L_{\psi}$ and suppose that $\psi(0) = 0$. Let $\cY \subseteq \cR$, and for a function $f \in \cY^{\cX}$, let $\phi \circ f$ denote the composition of $\psi$ and $f$. For $\cF \subseteq \cY^{\cX}$, let $\psi \circ \cF = \{\psi \circ f \mid f \in \cF\}$. It holds that $\cR_m(\psi \circ \cF) \leq 2 \cdot L_\psi \cdot \cR_m(\cF)$.
\end{theorem}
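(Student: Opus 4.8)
The plan is to establish the sharper contraction bound $\cR_m(\psi \circ \cF) \le L_\psi \cdot \cR_m(\cF)$, which immediately implies the stated bound (with the harmless extra factor of $2$), via the standard ``peeling'' argument in which the Rademacher variables are handled one at a time. Recall that, for a sample $x_1, \dots, x_m \in \cX$ and i.i.d.\ uniform signs $\sigma_1, \dots, \sigma_m$, the Rademacher complexity is $\cR_m(\psi \circ \cF) = \tfrac1m\,\E_{x_1, \dots, x_m}\,\E_{\sigma}\big[\sup_{f \in \cF} \sum_{i=1}^m \sigma_i\, \psi(f(x_i))\big]$. Since the outer expectation over the sample is a convex combination, it suffices to prove the inequality conditionally on a fixed sample; so we fix $x_1, \dots, x_m$ and, absorbing the harmless $\tfrac1m$ factor, work with $\E_\sigma\big[\sup_f \sum_i \sigma_i \psi(f(x_i))\big]$.

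The heart of the argument is a one-coordinate statement: for \emph{any} function $u \colon \cF \to \R$ and any point $x \in \cX$,
\begin{align} \label{eq:one-step-contraction}
\E_{\sigma_1}\left[\sup_{f \in \cF}\big(u(f) + \sigma_1\, \psi(f(x))\big)\right] \;\le\; \E_{\sigma_1}\left[\sup_{f \in \cF}\big(u(f) + \sigma_1\, L_\psi\, f(x)\big)\right].
\end{align}
To prove~\eqref{eq:one-step-contraction}, expand its left-hand side as $\tfrac12 \sup_{f}\big(u(f)+\psi(f(x))\big) + \tfrac12 \sup_{f'}\big(u(f')-\psi(f'(x))\big)$, fix $\eta > 0$, and choose $f, f' \in \cF$ attaining the two suprema to within $\eta$. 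By the Lipschitz property of $\psi$ we have $\psi(f(x)) - \psi(f'(x)) \le L_\psi\, |f(x) - f'(x)|$, so $u(f) + u(f') + \psi(f(x)) - \psi(f'(x))$ is at most $u(f) + u(f') + L_\psi|f(x)-f'(x)|$; splitting on the sign of $f(x) - f'(x)$, this last quantity equals $\big(u(f) + L_\psi f(x)\big) + \big(u(f') - L_\psi f'(x)\big)$ when $f(x) \ge f'(x)$ and $\big(u(f') + L_\psi f'(x)\big) + \big(u(f) - L_\psi f(x)\big)$ otherwise, and in either case it is bounded by $\sup_g\big(u(g)+L_\psi g(x)\big) + \sup_{g'}\big(u(g')-L_\psi g'(x)\big)$, i.e.\ by twice the right-hand side of~\eqref{eq:one-step-contraction}. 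Combining with $u(f)+u(f')+\psi(f(x))-\psi(f'(x)) \ge 2\,\E_{\sigma_1}[\sup_f(u(f)+\sigma_1\psi(f(x)))] - 2\eta$ and letting $\eta \to 0$ gives~\eqref{eq:one-step-contraction}.

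Finally, iterate~\eqref{eq:one-step-contraction} over $i = 1, \dots, m$: at step $i$, condition on all $\sigma_j$ with $j \ne i$, and apply~\eqref{eq:one-step-contraction} with $x = x_i$ and $u(f) = \sum_{j < i} \sigma_j L_\psi f(x_j) + \sum_{j > i} \sigma_j \psi(f(x_j))$ — a legitimate choice since $u$ is arbitrary; taking the remaining expectation over $\{\sigma_j\}_{j\ne i}$ then replaces $\sigma_i \psi(f(x_i))$ inside the supremum by $\sigma_i L_\psi f(x_i)$ without increasing the expectation. After $m$ steps every $\psi$ has been stripped, leaving $\E_\sigma[\sup_f \sum_i \sigma_i \psi(f(x_i))] \le L_\psi\,\E_\sigma[\sup_f \sum_i \sigma_i f(x_i)]$; reinstating the $\tfrac1m$ and averaging over the sample yields $\cR_m(\psi \circ \cF) \le L_\psi\,\cR_m(\cF) \le 2 L_\psi\,\cR_m(\cF)$. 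I expect the only delicate points to be the two-case sign analysis inside~\eqref{eq:one-step-contraction} and the bookkeeping in the induction (tracking which coordinates currently carry a $\psi$ and which a factor $L_\psi$); the hypothesis $\psi(0)=0$ is not strictly needed for this argument as presented, but is retained to match the cited form in~\cite{BM02,LT91} and does become essential if one uses the variant of Rademacher complexity that places an absolute value around the sum.
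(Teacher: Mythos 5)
The paper does not prove this statement; it is quoted as a known contraction inequality and attributed to \cite{BM02,LT91}, so there is no in-paper argument to compare against. Your proof is correct and is precisely the standard Ledoux--Talagrand contraction argument (Theorem 4.12 of \cite{LT91}, later popularized in this Rademacher form): the one-coordinate inequality \eqref{eq:one-step-contraction} is established by the usual $\eta$-near-optimizer and sign-split argument, and iterating it coordinate by coordinate with the remaining signs conditioned on is exactly the peeling device used in the literature. You in fact obtain the sharper bound $\cR_m(\psi \circ \cF) \le L_\psi\,\cR_m(\cF)$; the looser factor $2L_\psi$ in the stated theorem originates from the absolute-value formulation of empirical Rademacher complexity used by \cite{BM02}, and your closing remark is correct that the hypothesis $\psi(0)=0$ is immaterial for the version without absolute values but becomes necessary (and contributes the extra constant) in the absolute-value version. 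In short: the proposal is a complete and correct proof, it gives a slightly stronger conclusion than the cited statement, and it supplies an argument the paper deliberately omits.
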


\subsubsection{Properly Learning ReLUs: The Agnostic Case}
Our proof of Theorem~\ref{thm:learning} follows by an application of a standard generalization argument to bound the sample complexity given the algorithm in Lemma \ref{lem:exp-algo}. 
It turns out that in our case, the number of samples needed only depends on $k$ and $\varepsilon$ (and not the dimension $n$) due to boundedness of our networks. Hence, by applying the algorithm from Lemma~\ref{lem:exp-algo}, we immediately get Theorem~\ref{thm:learning}.

We now proceed to prove Theorem~\ref{thm:learning}. For ease of presentation, when we invoke the algorithm from Lemma~\ref{lem:exp-algo}, we will ignore the accuracy parameter $\delta$ and pretend that the algorithm output an actual optimal solution. The presence of $\delta$ only adds an additive term which we can make sufficiently small.

\begin{proof}[Proof of Theorem~\ref{thm:learning}]
First, let us describe the algorithm. Given samples $S = \{(\bx_i, y_i)\}_{i \in [m]}$ where
\begin{align*}
m = \left\lceil \frac{1024 \cdot k^4 \cdot (1 + \log(1/\delta))}{\varepsilon^2} \right\rceil.
\end{align*}
We use the algorithm from Lemma~\ref{lem:exp-algo} to solve for $\bw_1, \dots, \bw_k, \ba$ that minimizes the training error for the $m$ samples. Then, we simply output the hypothesis $h = \relu_{\bw_1, \dots, \bw_k, \ba}$.

Clearly, the algorithm is a proper learning algorithm. Furthermore, the running time is $2^{O(km)} poly(n, m) = 2^{O(k^5/\varepsilon^2)} poly(n, m, 1/\delta)$ as desired.


Thus, we are left to bound the error $\cL(h; \cD)$. To do this, first, observe that, from Theorem~\ref{thm:KST} and Theorem~\ref{thm:lips}, we have $\cR_m(1\text{-ReLU}(n)) \leq \frac{2}{\sqrt{m}}$. Hence, from Fact~\ref{fact:subadd}, we have $\cR_m(k\text{-ReLU}(n)) \leq \frac{2k}{\sqrt{m}}$. Now, observe that, for the region $[-k, k] \times [-k, k]$ the loss function $\ell(y', y) = (y' - y)^2$ is $(2k)$-Lipschitz in the first argument and $(4k^2)$-bounded. As a result, from Theorem~\ref{thm:gen-err}, the following holds for all $f \in k\text{-ReLU}(n)$ with probability at least $1 - \delta$:
\begin{align} \label{eq:gen-err-simple}
|\cL(f; \cD) - \cL(f; S)| \leq 4 \cdot (2k) \cdot \frac{2k}{\sqrt{m}} + 2 \cdot (4k^2) \cdot \sqrt{\frac{\log(1/\delta)}{m}} \leq \frac{\varepsilon}{2},
\end{align}
where the second inequality comes from our choice of $m$.

Let $f_{\opt} \in k\text{-ReLU}(n)$ be the minimizer of $\cL(f; \cD)$. Since $h$ minimizes the training error,
\begin{align} \label{eq:opt-simple}
\cL(h; S) \leq \cL(f_{\opt}; S).
\end{align}

As a result, we have
\begin{align*}
\cL(h; \cD) \stackrel{\eqref{eq:gen-err-simple}}{\leq} \cL(h; S) + \frac{\varepsilon}{2} \stackrel{\eqref{eq:opt-simple}}{\leq} \cL(f_{\opt}; S) + \frac{\varepsilon}{2} \stackrel{\eqref{eq:gen-err-simple}}{\leq} \cL(f_{\opt}; \cD) + \varepsilon = \left(\min_{f \in k\text{-ReLU}(n)} \cL(f; \cD)\right) + \varepsilon,
\end{align*}
which concludes the proof.
\end{proof}

The results above should be compared to those of ~\cite{goel2016reliably} who showed similar learnability results as above, except that their algorithm is \emph{improper}. That is, their algorithm would output a (modification of) low-degree polynomial, as opposed to sums of ReLUs (which our algorithm outputs).  We remark here that, while our algorithm is advantageous to their in this sense, their algorithm is faster extends to a larger class of networks.

\subsubsection{Properly Learning ReLUs: The Realizable Case}

\label{sec:learning-algo-pointwise}
Next we prove Theorem~\ref{thm:learning-realizable}. For the realizable setup we can get better guarantees by using the improved generalization bound from Theorem~\ref{thm:gen-err-realizable}.

\begin{proof}[Proof of Theorem~\ref{thm:learning-realizable}]
First, let us describe the algorithm. Given samples $S = \{(\bx_i, y_i)\}_{i \in [m]}$ such that
\begin{align*}
m = \left\lceil \frac{10^6 C \cdot k^2 \log^3(10C k/\eps)}{\eps} + \frac{8k^2\log(1/\delta)}{\eps} \right\rceil
\end{align*}
where $C$ is the constant from Theorem~\ref{thm:gen-err-realizable}.

We use the algorithm from Lemma~\ref{lem:exp-algo} to solve for $\bw_1, \dots, \bw_k, \ba$ that minimizes the training error for the $m$ samples. Then, we simply output the hypothesis $h = \relu_{\bw_1, \dots, \bw_k, \ba}$.

Clearly, the algorithm is a proper learning algorithm, and the running time is $2^{O(km)} poly(n, m) = 2^{O((k^3/\eps) \cdot \log^3(k/\eps))} poly(n, m, 1/\delta)$ as desired. Furthermore, since $S$ is realizable, we must have $\cL(h; S) = 0$.

Finally, we will apply the generalization bound from Theorem~\ref{thm:gen-err-realizable}. To do so, first recall from the proof of Theorem~\ref{thm:learning} that $\cR_m(k\text{-ReLU}(n)) \leq \frac{2k}{\sqrt{m}}$ and that, in the region $[-k, k] \times [-k, k]$ the squared loss function is $(4k^2)$-bounded. Furthermore, the squared loss is $2$-smooth. As a result, we may apply Theorem~\ref{thm:gen-err-realizable} which implies that, with probability $1 - \delta$, we have
\begin{align*}
\cL(h; \cD) \leq C \left(2 \log^3 m \cdot \frac{4k^2}{m} + \frac{4k^2 \cdot \log(1/\delta)}{m}\right)
\leq \eps,
\end{align*}
where the inequality follows from our choice of $m$.
\end{proof}

\end{document}